\documentclass[letter]{article}

\usepackage[nonatbib,final]{neurips_2020}

\usepackage[utf8]{inputenc} 
\usepackage[T1]{fontenc}    
\usepackage{hyperref}       
\usepackage{url}            
\usepackage{booktabs}       
\usepackage{nicefrac}       
\usepackage{microtype}      
\usepackage[fleqn]{amsmath}
\usepackage{amsfonts,amssymb,amsthm}
\usepackage[noend]{algpseudocode}
\usepackage{algorithm}
\usepackage{bm}
\usepackage{enumitem}
\usepackage{wrapfig}
\usepackage{caption}
\usepackage{subcaption}

\usepackage{tikz}
\usetikzlibrary{positioning,shapes,calc,intersections,through}
\usetikzlibrary{quotes,angles}
\tikzstyle{dot}=[draw,fill,shape=circle,inner sep=0pt,minimum size=3pt]
\tikzstyle{ps}=[circle,draw, fill=black, minimum size=4,inner sep=0pt, outer sep=0pt]
\tikzstyle{ns}=[circle,draw, fill=white, minimum size=4,inner sep=0pt, outer sep=0pt]
\tikzstyle{mve}=[circle,draw,densely dotted,thick]
\tikzstyle{ell}=[circle,draw,thick]
\usepackage{tkz-euclide}

\newcommand{\bx}{\boldsymbol{x}}

\newcommand{\lambdastar}{\lambda^{\star}}

\newcommand{\scO}{\mathcal{O}}

\DeclareMathOperator*{\trace}{tr}
\DeclareMathOperator*{\diag}{diag}

\newcommand{\field}[1]{\mathbb{#1}}
\newcommand{\R}{\field{R}}

\newcommand{\E}{\field{E}}
\renewcommand{\Pr}{\field{P}}

\newcommand{\theset}[2]{ \left\{ {#1} \,:\, {#2} \right\} }
\newcommand{\Ind}[1]{ \field{I}\left\{{#1}\right\} }
\newcommand{\norm}[1]{ \left\|{#1}\right\| }

\newcommand{\ve}{\varepsilon}
\newcommand{\C}{\mathcal{C}}

\newcommand{\Hyp}{\mathcal{H}}

\renewcommand{\hat}{\widehat}
\renewcommand{\bar}{\overline}
\renewcommand{\epsilon}{\ve}

\newtheorem{lemma}{Lemma}
\newtheorem{theorem}{Theorem}

\newtheorem{definition}{Definition}

\newcommand{\ai}{\boldsymbol{a}_i}
\newcommand{\bi}{\boldsymbol{b}_i}
\newcommand{\ci}{\boldsymbol{c}_i}

\newcommand{\tp}[1]{{#1}^{\intercal}}
\newcommand{\bp}{\boldsymbol{p}}
\newcommand{\bq}{\boldsymbol{q}}
\newcommand{\bu}{\boldsymbol{u}}
\newcommand{\bv}{\boldsymbol{v}}
\newcommand{\by}{\boldsymbol{y}}
\newcommand{\bw}{\boldsymbol{w}}
\newcommand{\bz}{\boldsymbol{z}}
\newcommand{\bh}{\boldsymbol{h}}
\newcommand{\bOne}{\boldsymbol{1}}

\newcommand{\sign}{\operatorname{sgn}}
\DeclareMathOperator{\poly}{poly}
\newcommand{\Hs}{\mathcal{H}}
\newcommand{\scq}{\textsc{scq}}
\newcommand{\cmp}{\textsc{cmp}}
\newcommand{\EL}{E}
\newcommand{\MVE}{\EL_{\mathrm{J}}}
\newcommand{\ELOUT}{\EL}
\newcommand{\ELIN}{\EL_{\mathrm{in}}}

\newcommand{\AlgoClean}{{\normalfont TessellationLearn}}
\newcommand{\AlgoFULL}{{\normalfont \textsc{recur}}}
\newcommand{\bmu}{\boldsymbol{\mu}}

\newcommand{\bc}{\boldsymbol{c}}

\newcommand{\dotp}[1]{\left\langle{#1}\right\rangle}

\newcommand{\rank}{\operatorname{rank}} 
\newcommand{\conv}{\operatorname{conv}} 

\newcommand{\ellstar}{\ell^{\star}}

\newcommand{\bmustar}{\bmu^{\star}}

\newcommand{\Mstar}{M^{\star}}
\renewcommand{\norm}[2]{\|#2\|_{#1}}
\newcommand{\orig}{\boldsymbol{0}}
\newcommand{\Bo}{\mathcal{B}_0}
\newcommand{\strtch}{{\Phi}}
\newcommand{\SCbar}{S_{\bar{C}}}

\newcommand{\ErrClust}{\triangle}
\newcommand{\SymDif}{\triangle}

\title{Exact Recovery of Mangled Clusters \\ with Same-Cluster Queries}

\author{Marco Bressan\thanks{Most of this work was done while the author was at the Sapienza University of Rome.}
\\
Dept.\ of CS, Univ.\ of Milan, Italy
\\
marco.bressan@unimi.it
\And
Nicolò Cesa-Bianchi
\\
DSRC \& Dept.\ of CS, Univ.\ of Milan, Italy
\\
nicolo.cesa-bianchi@unimi.it
\AND
Silvio Lattanzi
\\ 
Google
\\
silviol@google.com
\And
Andrea Paudice
\\
Dept.\ of CS, Univ.\ of Milan, Italy \& \\
Istituto Italiano di Tecnologia, Italy
\\
andrea.paudice@unimi.it
}

\begin{document}

\maketitle

\begin{abstract}
We study the cluster recovery problem in the semi-supervised active clustering framework. Given a finite set of input points, and an oracle revealing whether any two points lie in the same cluster, our goal is to recover all clusters exactly using as few queries as possible. To this end, we relax the spherical $k$-means cluster assumption of Ashtiani et al.\ to allow for arbitrary ellipsoidal clusters with margin. This removes the assumption that the clustering is center-based (i.e., defined through an optimization problem), and includes all those cases where spherical clusters are individually transformed by any combination of rotations, axis scalings, and point deletions. We show that, even in this much more general setting, it is still possible to recover the latent clustering exactly using a number of queries that scales only logarithmically with the number of input points. More precisely, we design an algorithm that, given $n$ points to be partitioned into $k$ clusters, uses $\scO(k^3 \ln k \ln n)$ oracle queries and $\widetilde{\scO}(kn + k^3)$ time to recover the clustering with zero misclassification error. The $\scO(\cdot)$ notation hides an exponential dependence on the dimensionality of the clusters, which we show to be necessary thus characterizing the query complexity of the problem. Our algorithm is simple, easy to implement, and can also learn the clusters using low-stretch separators, a class of ellipsoids with additional theoretical guarantees. Experiments on large synthetic datasets confirm that we can reconstruct clusterings exactly and efficiently.
\end{abstract}

\section{Introduction}
Clustering is a central problem of unsupervised learning with a wide range of applications in machine learning and data science.
The goal of clustering is to partition a set of points in different groups, so that similar points are assigned to the same group and dissimilar points are assigned to different groups.
A basic formulation is the $k$-clustering problem, in which the input points must be partitioned into $k$ disjoint subsets.
A typical example is center-based $k$-clustering, where the points lie in a metric space and one is interested in recovering $k$ clusters that minimize the distance between the points and the cluster centers.
Different variants of this problem, captured by the classic $k$-center, $k$-median, and $k$-means problems, have been extensively studied for several decades~\cite{ahmadian2019better, gonzalez1985clustering, li2016approximating}.

In this work we investigate the problem of recovering a latent clustering in the popular semi-supervised active clustering model of Ashtiani et al.~\cite{ashtiani2016clustering}.
In this model, we are given a set $X$ of $n$ input points in $\R^d$ and access to an oracle.
The oracle answers same-cluster queries (SCQs) with respect to a fixed but unknown $k$-clustering and tells whether any two given points in $X$ belong to the same cluster or not.
The goal is to design efficient algorithms that recover the latent clustering while asking as few oracle queries as possible.
Because SCQ queries are natural in crowd-sourcing systems, this model has been extensively studied both in theory~\cite{ailon2018approximate, ailon2017approximate, gamlath2018semi, huleihel2019same, mazumdar2017semisupervised, mazumdar2017clustering, NIPS2017_7054, saha2019correlation, vitale2019flattening} and in practice~\cite{firmani2018robust, gruenheid2015fault, verroios2015entity, verroios2017waldo} --- see also \cite{emamjomeh2018adaptive} for other types of queries.
In their work~\cite{ashtiani2016clustering}, Ashtiani et al.\ showed that by using $\scO(\ln n)$ same-cluster queries one can recover the optimal $k$-means clustering of $X$ in polynomial time, whereas doing so without the queries would be computationally hard.
Unfortunately,~\cite{ashtiani2016clustering} relies crucially on a strong separation assumption, called $\gamma$-margin condition: for every cluster $C$ there must exist a sphere $S_C$, centered in the centroid $\mu_C$ of $C$, such that $C$ lies entirely inside $S_C$ and every point not in $C$ is at distance $(1+\gamma)r_C$ from $\mu_C$, where $r_C$ is the radius of $S_C$.
Thus, although~\cite{ashtiani2016clustering} achieves cluster recovery with $\scO(\ln n)$ queries, it does so only for a very narrow class of clusterings.

\begin{wrapfigure}{r}{.45\linewidth}
\includegraphics[width=.45\textwidth]{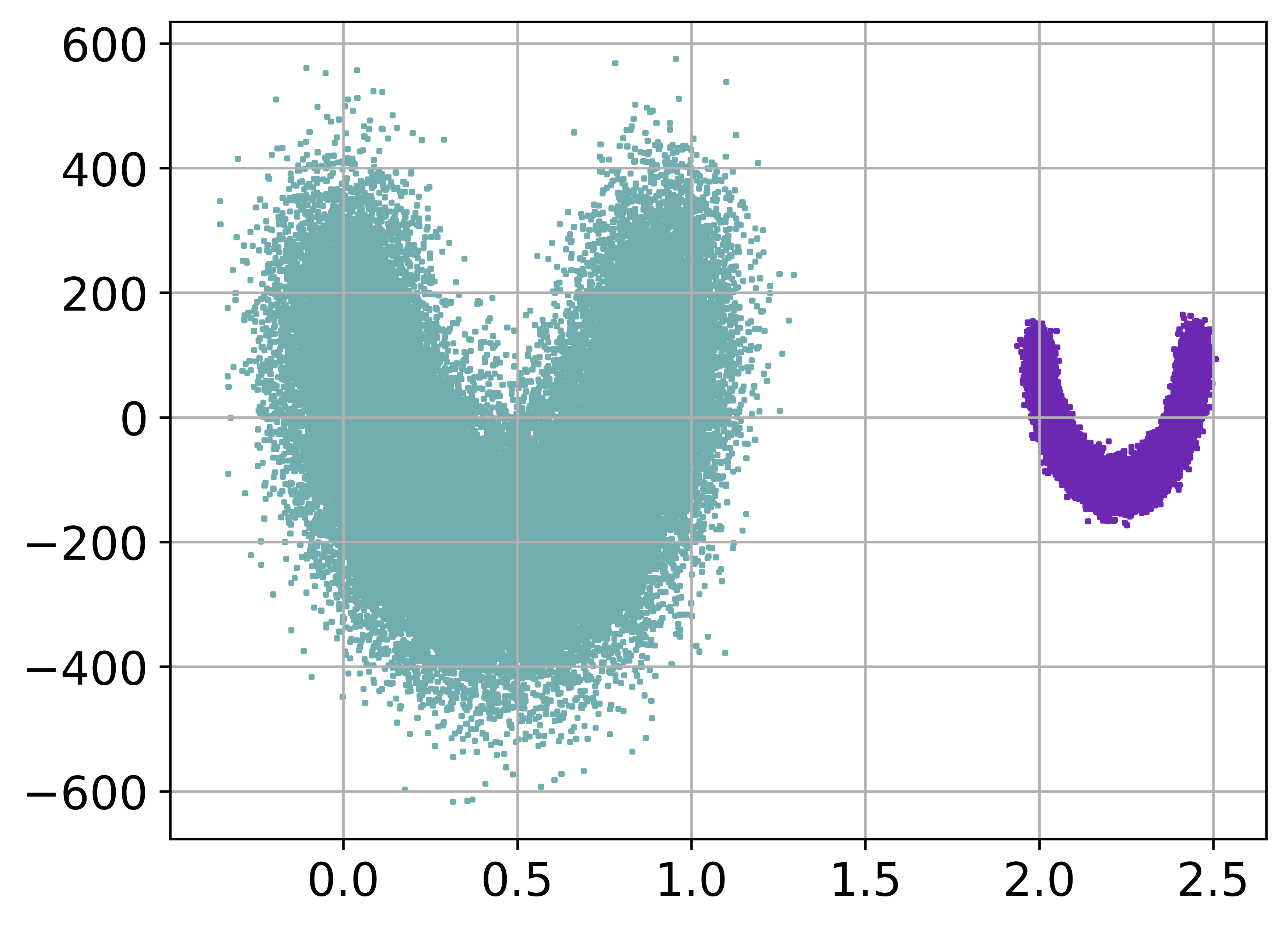}
\\
\includegraphics[width=.45\textwidth]{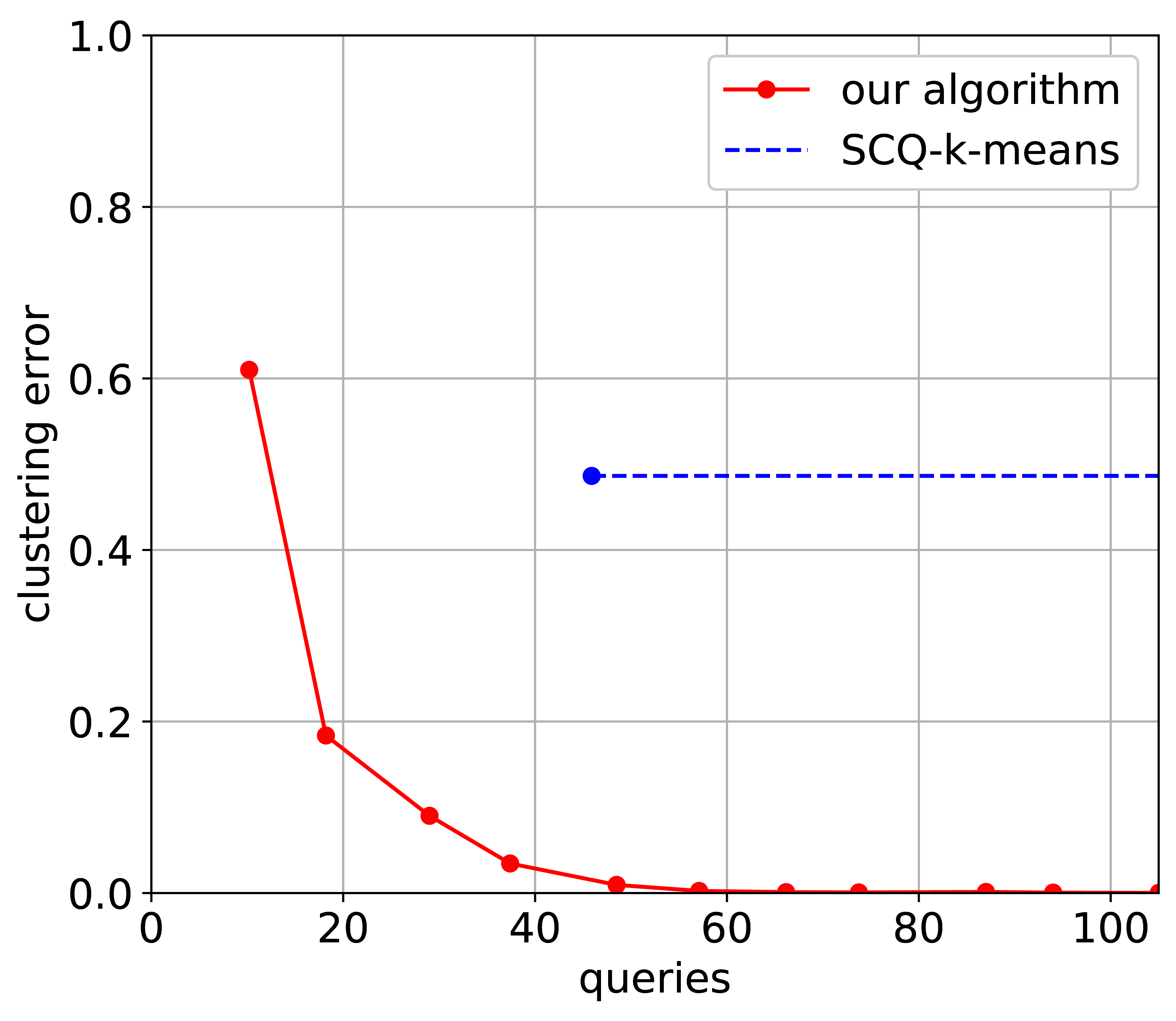}
\caption{A toy instance on $10^5$ points that we solve exactly with 105 queries, while the \scq-$k$-means algorithm of~\cite{ashtiani2016clustering} is no better than random labeling.}
\label{fig:3clust}
\end{wrapfigure}

In this work we significantly enlarge the class of clusterings that can be efficiently recovered.
We do so by relaxing the $\gamma$-margin condition of~\cite{ashtiani2016clustering} in two ways (see Section~\ref{sec:prel} for a formal definition).
First, we assume that every cluster $C$ has $\gamma$-margin in some latent space, obtained by linearly transforming all points according to some unknown positive semi-definite matrix $W_C$.
This is equivalent to assume that $C$ is bounded by an ellipsoid (possibly degenerate) rather than by a sphere (which corresponds to $W_C=I$). 
This is useful because in many real-world applications the features are on different scales, and so each cluster tends to be distorted along specific directions causing ellipsoids to fit the data better than spheres ~\cite{dzogang2012ellipsoidal, kameyama1999semiconductor, marica2014hyper, moshtaghi2011clustering, shuhua2013ellipsoids}.
Second, we allow the center of the ellipsoid to lie anywhere in space --- in the centroid of $C$ or anywhere else, even outside the convex hull of $C$.
This includes as special cases clusterings in the latent space which are solutions to $k$-medians, $k$-centers, or one of their variants.
It is not hard to see that this setting captures much more general and challenging scenarios.
For example, the latent clustering can be an optimal solution of $k$-centers where some points have been adversarially deleted and the features adversarially rescaled before the input points are handed to us.
In fact, the latent clustering need not be the solution to an optimization problem, and in particular need not be center-based: it can be literally \emph{any} clustering, as long as it respects the margin condition just described.

Our main result is that, even in this significantly more general setting, it is still possible to recover the latent clustering \emph{exactly}, in polynomial time, and using only $\scO(\ln n)$ same-cluster queries.
The price to pay for this generality is an exponential dependence of the number of queries on the dimension $d$ of the input space; this dependence is however unavoidable, as we show via rigorous lower bounds.
Our algorithm is radically different from the one in~\cite{ashtiani2016clustering}, which we call \scq-$k$-means here.
The reason is that \scq-$k$-means uses same-cluster queries to estimate the clusters' centroids and find their spherical boundaries via binary search.
Under our more general setting, however, the clusters are not separated by spheres centered in their centroids, and thus \scq-$k$-means fails, as shown in Figure~\ref{fig:3clust} (see Section~\ref{sec:exp} for more experiments).
Instead of binary search, we develop a geometric technique, based on careful tessellations of minimum-volume enclosing ellipsoids (MVEEs).
The key idea  is that MVEEs combine a low VC-dimension, which makes learning easy, with a small volume, which can be decomposed in easily classifiable elements.
While MVEEs are not guaranteed to be consistent with the cluster samples, our results can be also proven using consistent ellipsoids that are close to the convex hull of the samples. This notion of low-stretch consistent ellipsoid is new, and may be interesting in its own right.

\section{Preliminaries and definitions}
\label{sec:prel}
All missing statements and proofs can be found in the supplementary material.
The input to our problem is a triple $(X,k,\gamma)$ where $X \subset \R^d$ is a set of $n$ arbitrary points, $k \ge 2$ is an integer, and $\gamma \in \R_{>0}$ is the margin (see below).
We assume there exists a latent clustering $\C = \{C_1,\ldots,C_k\}$ over the input set $X$, which we do not know and want to compute.
To this end, we are given access to an oracle answering \textsl{same-cluster queries}: a query $\scq(\bx,\bx')$ is answered by $+1$ if $\bx,\bx'$ are in the same cluster of $\C$, and by $-1$ otherwise.
Our goal is to recover $\C$ while using as few queries as possible.
Note that, given any subset $S \subseteq X$, with at most $k |S|$ queries one can always learn the label (cluster) of each $\bx \in S$ up to a relabeling of $\C$, see~\cite{ashtiani2016clustering}.

It is immediate to see that if $\C$ is completely arbitrary, then no algorithm can reconstruct $\C$ with less than $n$ queries.
Here, we assume some structure by requiring each cluster to satisfy a certain \emph{margin} condition, as follows.
Let $W \in \R^{d \times d}$ be some positive semidefinite matrix (possibly different for each cluster).
Then $W$ induces the seminorm
$
    \norm{W}{\bx} = \sqrt{\bx^{\top}W\bx}
$, which in turn induces the pseudo-metric
$
    d_W(\bx,\by) = \norm{W}{\bx-\by}
$.
The same notation applies to any other PSD matrix, and when the matrix is clear from the context, we drop the subscript and write simply $d(\cdot,\cdot)$.
The margin condition that we assume is the following:
\begin{definition}[Clustering margin]
\label{def:smp}
A cluster $C$ has margin $\gamma > 0$ if there exist a PSD matrix $W=W(C)$ and a point $\bc \in \R^d$ such that for all $\by \notin C$ and all $\bx \in C$ we have $d_W(\by,\bc) > \sqrt{1+\gamma} \, d_W(\bx,\bc)$.
If this holds for all clusters, then we say that the clustering $\C$ has margin $\gamma$.
\end{definition}
This is our only assumption.
In particular, we do not assume the cluster sizes are balanced, or that $\C$ is the solution to an optimization problem, or that points in a cluster $C$ are closer to the center of $C$ than to the centers of other clusters.
Note that the matrices $W$ and the points $\bc$ are unknown to us.
The spherical $k$-means setting of~\cite{ashtiani2016clustering} corresponds to the special case where for every $C$ we have $W= r I$ for  some $r=r(C) > 0$ and $\bc = \bmu(C) = \frac{1}{|C|}\sum_{\bx \in C} \bx$.

We denote a clustering returned by our algorithm by $\hat{\C}=\{\hat{C}_1,\ldots,\hat{C}_k\}$.
The quality of $\hat{\C}$ is measured by the disagreement with $\C$ under the best possible relabeling of the clusters, that is, $\ErrClust(\hat{\C},\C)=\min_{\sigma \in S_k} \frac{1}{2n}\sum_{i=1}^k |C_1 \SymDif \hat{C}_{\sigma(i)}|$, where $S_k$ is the set of all permutations of $[k]$.
Our goal is to minimize $\ErrClust(\hat{\C},\C)$ using as few queries as possible. In particular, we characterize the query complexity of exact reconstruction, corresponding to $\ErrClust(\hat{\C},\C)=0$.
The \emph{rank} of a cluster $C$, denoted by $\rank(C)$, is the rank of the subspace spanned by its points.

\section{Our contribution}
Our main contribution is an efficient active clustering algorithm, named \AlgoFULL, to recover the latent clustering exactly.
We show the following.
\begin{theorem}
\label{th:main}
Consider any instance $(X,k,\gamma)$ whose latent clustering $\C$ has margin $\gamma$.
Let $n = |X|$, let $r \le d$ be the maximum rank of a cluster in $\C$, and let
$f(r,\gamma) = \max\big\{2^r,\scO\big(\frac{r}{\gamma}\ln\!\frac{r}{\gamma}\big)^r\big\}$.
Given $(X,k,\gamma)$, \AlgoFULL\ with probability $1$ outputs $\C$ (up to a relabeling), and with high probability runs in time $\scO((k \ln n)(n+k^2 \ln k))$ using $\scO\big((k \ln n) \, (k^2 d^2 \ln k + f(r,\gamma))\big)$ same-cluster queries.
\end{theorem}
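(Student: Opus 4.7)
The plan is to view \AlgoFULL\ as an iterative peel-off procedure: in each outer round it samples and labels a small batch, builds a candidate cluster via an ellipsoidal separator, verifies it with a few extra SCQs, and removes it from the input. Correctness (with probability $1$) comes from the verification step, while the efficiency bounds come from a Chernoff argument on the sample size. Concretely, let $X'$ be the set of surviving points at the start of a round. Draw a uniform sample $S \subseteq X'$ and query it pairwise so that, with $\scO(k|S|)$ SCQs, we learn the label of each point in $S$ (as noted in Section~\ref{sec:prel}). Taking $|S| = \Theta(k \ln n)$ ensures via Chernoff that the largest surviving cluster $C^{\star}$ is sampled in proportion to its size, with probability $1 - n^{-\Omega(1)}$.

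Next I would construct a separator for $C^{\star}$ from the labeled sample $S^{\star} = S \cap C^{\star}$. Compute the MVEE $\EL^{\star}$ of $S^{\star}$ in its affine hull (of dimension $r^{\star} \le r$). By John's theorem $\EL^{\star}$ is an $r^{\star}$-rounding of $\conv(S^{\star})$, and combined with the $\gamma$-margin of $C^{\star}$, a carefully chosen inflation $\widetilde{\EL}^{\star}$ of $\EL^{\star}$ contains $C^{\star}$ while remaining disjoint from all other clusters, so that $C^{\star} = \widetilde{\EL}^{\star} \cap X'$. The algorithm then tessellates $\widetilde{\EL}^{\star}$ into at most $f(r,\gamma)$ cells, each with diameter below the margin-induced gap, and issues a single SCQ per cell (against a fixed representative of $C^{\star}$) to decide whether the cell lies inside $C^{\star}$. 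The union of positive cells yields a candidate $\hat{C}$. A subsequent $\scO(k^2 d^2 \ln k)$-query verification step, exploiting the $\scO(d^2)$ VC dimension of ellipsoidal separators together with a uniform-convergence bound, certifies $\hat{C} = C^{\star}$; if verification fails, the round is restarted. After success, the algorithm sets $X' \leftarrow X' \setminus \hat{C}$ and recurses.

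The recursion terminates after at most $k$ successful rounds since each peels one true cluster. A slack factor of $\ln n$ on the number of rounds absorbs the $n^{-\Omega(1)}$ per-round failure probability via a union bound, giving the claimed $\scO(k \ln n)$ outer-loop complexity and, after multiplying per-round query and time costs, the stated bounds. The main obstacle will be the tessellation analysis: simultaneously proving that the inflation $\widetilde{\EL}^{\star}$ contains $C^{\star}$ while avoiding every other cluster, and that it admits a covering by $f(r,\gamma)$ margin-small cells. The $2^r$ term arises from a coordinate-based subdivision of the John ellipsoid along its $r$ principal axes; the $(r/\gamma \ln(r/\gamma))^r$ term arises from an $\epsilon$-net argument in the cluster's own metric $d_W$ with accuracy proportional to $\gamma$. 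Balancing the inflation factor against these two estimates, while preserving the margin-separation from the other clusters, is the technical heart of the argument.
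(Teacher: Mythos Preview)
Your proposal hinges on a claim that the paper does \emph{not} make and that is, in general, false: that some inflation $\widetilde{E}^{\star}$ of the MVEE of a sample $S^{\star}\subseteq C^{\star}$ simultaneously \emph{contains all of} $C^{\star}$ and \emph{avoids every other cluster}. The margin is with respect to an unknown pair $(W,\bc)$, not with respect to any ellipsoid built from a sample; a small sample can yield an MVEE whose natural inflation crosses into $X\setminus C^{\star}$ long before it covers $C^{\star}$. The paper never attempts to peel a whole cluster. What it actually shows is: (i) with $|S_C|=\Theta(d^2\ln k)$ samples from $C$, the (uninflated) $E=\MVE(S_C)$ satisfies $|C\cap E|\ge |X|/(4k)$ by the PAC bound for ellipsoids (this is where the $d^2$ in the query count comes from); and (ii) because $E$ is a $d$-rounding of $\conv(S_C)$, one can tessellate $E$ into $f(r,\gamma)$ hyperrectangles that are \emph{deterministically} monochromatic, so one query per cell recovers $C\cap E$ exactly. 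Thus each round removes only a $\Theta(1/k)$ fraction of the remaining points, and the $\scO(k\ln n)$ rounds arise from this geometric decrease (Lemma~\ref{lem:hp_rounds}), not from ``$k$ successful peels plus $\ln n$ retries.''

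A second gap is your verification step. You propose to certify $\hat{C}=C^{\star}$ with $\scO(k^2 d^2\ln k)$ queries via VC/uniform convergence; but a VC argument can only certify this with high probability, not with probability $1$, which contradicts your claimed deterministic correctness. In the paper, correctness with probability $1$ is a consequence of the monochromatic tessellation (Theorem~\ref{thm:R}): every labeled cell is provably pure, so no verification is needed and the only randomness is in how many rounds are required. Finally, a minor point: the $2^r$ in $f(r,\gamma)$ is simply the number of orthants over which the per-orthant tessellation is replicated, and the $\big(\tfrac{r}{\gamma}\ln\tfrac{r}{\gamma}\big)^r$ term comes from the geometric grid in the MVEE's own basis (using $d_M$, which dominates $d_W$), not from an $\epsilon$-net directly in $d_W$.
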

More in general, \AlgoFULL\ clusters correctly $(1-\epsilon)n$ points using $\scO\big((k \ln \nicefrac{1}{\epsilon}) \, (k^2 d^2 \ln k + f(r,\gamma))\big)$ queries in expectation.
Note that the query complexity depends on $r$ rather than on $d$, which is desirable as real-world data often exhibits a low rank (i.e., every point can be expressed as a linear combination of at most $r$ other points in the same cluster, for some $r\!\ll\!d$).
In addition, unlike the algorithm of~\cite{ashtiani2016clustering}, which is Monte Carlo and thus can fail, \AlgoFULL\ is Las Vegas: it returns the correct clustering with probability $1$, and the randomness is only over the number of queries and the running time.
Moreover, \AlgoFULL\ is simple to understand and easy to implement.
It works by recovering a constant fraction of some cluster at each round, as follows (see Section~\ref{sec:single} and Section~\ref{sec:all}):
\begin{enumerate}[leftmargin=*,topsep=0pt,parsep=0pt,itemsep=2pt]
\item \textbf{Sampling.}
We draw points uniformly at random from $X$ until, for some cluster $C$, we obtain a sample $S_C$ of size $\simeq d^2$.
We can show that with good probability $|C| \simeq \frac{1}{k}\,|X|$, and that, by standard PAC bounds, any ellipsoid $E$ containing $S_C$ contains at least half of $C$.
\item \textbf{Computing the MVEE.}
We compute the MVEE (minimum-volume enclosing ellipsoid) $\EL=\MVE(S_C)$ of $S_C$.
As said, by PAC bounds, $\EL$ contains at least half of $C$.
If we were lucky, $\EL$ would not contain any point from other clusters, and $\EL \cap X$ would be our large subset of $C$.
Unfortunately, $\EL$ can contain an arbitrarily large number of points from $X \setminus C$.
Our goal is to find them and recover $C \cap \EL$.
\item \textbf{Tessellating the MVEE.}
To recover $C \cap E$, we partition $E$ into roughly $(\nicefrac{d}{\gamma})^d$ hyperrectangles, each one with the property of being monochromatic: its points are either all in $C$ or all in $X \setminus C$.
Thanks to this special tessellation, with roughly $(\nicefrac{d}{\gamma})^d$ queries we can find all hyperrectangles containing only points of $C$, and thus compute $C \cap \EL$.
\end{enumerate}

Our second contribution is to show a family of instances where every algorithm needs roughly $(\nicefrac{1}{\gamma})^r$ same-cluster queries to return the correct clustering.
This holds even if the algorithm is allowed to fail with constant probability.
Together with Theorem~\ref{th:main}, this gives an approximate characterization of the query complexity of the problem as a function of $\gamma$ and $r$.
That is, for ellipsoidal clusters, a margin of $\gamma$ is necessary and sufficient to achieve a query complexity that grows roughly as $(\nicefrac{1}{\gamma})^r$.
This lower bound also implies that our algorithm is nearly optimal, even compared to algorithms that can fail.
The result is given formally in Section~\ref{sec:lb}.

Our final contribution is a set of experiments on large synthetic datasets. They show that our algorithm \AlgoFULL\ achieves exact cluster reconstruction efficiently, see Section~\ref{sec:exp}.

\section{Related work.}
The semi-supervised active clustering (SSAC) framework was introduced in~\cite{ashtiani2016clustering}, together with the \scq-$k$-means algorithm that recovers $\C$ using $O(k^2 \ln k + k \ln n)$ same-cluster queries.
This is achieved via binary search under assumptions much stronger than ours (see above).
In our setting, \scq-$k$-means works only when every point $\bc$ is close to the cluster centroid and the condition number of $W$ is small (see the supplementary material); indeed, our experiments show that \scq-$k$-means fails even when $W \simeq I$.
Interestingly, even if binary search and its generalizations are at the core of many active learning techniques~\cite{nowak2011geometry}, here they do not seem to help. 
We remark that we depend on $\gamma$ in the same way as~\cite{ashtiani2016clustering}: if $\gamma$ is a lower bound on the actual margin of $\C$, then the correctness is guaranteed, otherwise we may return any clustering.
Clustering with same-cluster queries is also studied in~\cite{mazumdar2017semisupervised}, but they assume stochastic similarities between points that do not necessarily define a metric. Same-cluster queries for center-based clustering in metric spaces were also considered by~\cite{sanyal2019semi}, under $\alpha$-center proximity \cite{awasthi2012center} instead of $\gamma$-margin (see \cite[Appendix B]{ashtiani2016clustering} for a comparison between the two notions).
Finally,~\cite{ailon2017approximate} used same-cluster queries to obtain a PTAS for $k$-means.
Unfortunately, this gives no guarantee on the clustering error: a good $k$-means value can be achieved by a clustering very different from the optimal one, and vice versa.
From a more theoretical viewpoint, the problem has been extensively studied for clusters generated by a latent mixture of Gaussians~\cite{dasgupta1999learning, kalai2010efficiently, hardt2015tight}.

As same-cluster queries can be used to label the points, one can also learn the clusters using standard pool-based active learning tools. For example, using quadratic feature expansion, our ellipsoidal clusters can be learned as hyperplanes. Unfortunately, the worst-case label complexity of actively learning hyperplanes with margin $\gamma < \nicefrac{1}{2}$ is still $\Omega\big((R/\gamma)^d\big)$, where $R$ is the radius of the smallest ball enclosing the points~\cite{Gonen&2013}.
Some approaches that bypass this lower bound have been proposed. In \cite{Gonen&2013} they prove an approximation result, showing that $\mathrm{OPT}\times\scO\big(d\ln\frac{R}{\gamma}\big)$ queries are sufficient to learn any hyperplane with margin $\gamma$, where $\mathrm{OPT}$ is the number of queries made by the optimal active learning algorithm. Moreover, under distributional assumptions, linear separators can be learned efficiently with roughly $\scO(d \ln n)$ label queries \cite{balcan2007margin,balcan2013active,dasgupta2005analysis}. In a different line of work, \cite{Kane19} show that $\scO\big((d\ln n)\ln\frac{R}{\gamma}\big)$ queries suffice for linear separators with margin $\gamma$ when the algorithm can also make comparison queries: for any two pairs of points $(\bx,\bx')$ and $(\by,\by')$ from $X$, a comparison query returns $1$ iff $d_W(\bx,\bx') \le d_W(\by,\by')$.
As we show, comparison queries do not help learning the latent metric $d_W$ using metric learning techniques \cite{kulis2013metric} (see the supplementary material).
In general, the query complexity of pool-based active learning is characterized by the star dimension of the family of sets \cite{hanneke2015minimax}.
This implies that, if we allow for a non-zero probability of failure, then $\scO(\mathfrak{s}\ln n)$ queries are sufficient for reconstructing a single cluster, where $\mathfrak{s}$ is the star dimension of ellipsoids with margin $\gamma$.
To the best of our knowledge, this quantity is not known for ellipsoids with margin (not even for halfspaces with margin), and our results seem to suggest a value of order $(\nicefrac{d}{\gamma})^d$.
If true, this would imply then the general algorithms of \cite{hanneke2015minimax} could be used to solve our problem with a number of queries comparable to ours. 
However, note that our reconstructions are exact with probability one, and are achieved by simple algorithms that work well in practice.

\section{Recovery of a single cluster with one-sided error}
\label{sec:meat}
\label{sec:single}
This section describes the core of our cluster recovery algorithm.
The main idea is to show that, given any subset $S_C \subseteq C$ of some cluster $C$, if we compute a small ellipsoid $\EL$ containing $S_C$, then we can compute $C \cap \EL$ deterministically with a small number of queries.

Consider a subset $S_C \subseteq C$, and let $\conv(S_C)$ be its convex hull.
The \emph{minimum-volume enclosing ellipsoid} (MVEE) of $S_C$, also known as L\"owner-John ellipsoid and denoted by $\MVE(S_C)$, is the volume-minimizing ellipsoid $\EL$ such that $S_C \subset \EL$ (see, e.g., \cite{todd2016minimum}).
The main result of this section is that $C \cap \MVE(S_C)$ is easy to learn. Formally, we prove:
\begin{theorem}
\label{thm:single}
Suppose we are given a subset $S_C \subseteq C$, where $C$ is any unknown cluster.
Then we can learn $C \cap \MVE(S_C)$ using  $\max\!\big\{2^r,\scO\big(\frac{r}{\gamma}\ln\!\frac{r}{\gamma}\big)^{r}\big\}$ same-cluster queries, where $r=\rank(C)$ and $\MVE(S_C)$ is the minimum-volume enclosing ellipsoid of $S_C$.
\end{theorem}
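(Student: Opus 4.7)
The plan is to apply an affine map $T$ sending $\MVE(S_C)$ to the unit Euclidean ball $B_r$, tessellate $B_r$ with a fine axis-aligned grid, argue that every grid cell is monochromatic with respect to $C$ versus $X\setminus C$, and classify each non-empty cell with one same-cluster query against a fixed reference point of $S_C$. Since $\rank(C)=r$, $\MVE(S_C)$ is full-dimensional in the $r$-dimensional affine hull of $S_C$, so I restrict the whole construction to this subspace and treat $r$ as the effective ambient dimension.

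Two facts get transported through $T$. First, John's theorem applied to $\conv(S_C)$ and its minimum-volume enclosing ellipsoid gives $B_r/r \subseteq \conv(T(S_C)) \subseteq B_r$. Second, the margin condition for $C$ pushes forward to a PSD matrix $W'\in\R^{r\times r}$, a point $\bc'\in\R^r$, and a radius $\rho=\sup_{\bx\in C}\norm{W'}{T(\bx)-\bc'}$ such that $T(C)\subseteq \scE'_{\mathrm{in}} = \{\bu:\norm{W'}{\bu-\bc'}\le\rho\}$ while $T(X\setminus C)\cap\scE'_{\mathrm{out}}=\emptyset$, where $\scE'_{\mathrm{out}}$ is $\scE'_{\mathrm{in}}$ rescaled by $\sqrt{1+\gamma}$ about $\bc'$. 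Because $\scE'_{\mathrm{in}}$ is convex and contains $T(S_C)$, it contains $\conv(T(S_C)) \supseteq B_r/r$; combined with a short convexity calculation (an ellipsoid $\{\bu:(\bu-\bp)^\top M(\bu-\bp)\le 1\}$ containing a Euclidean ball of radius $\eta$ must satisfy $\lambda_{\max}(M)\le 1/\eta^2$), this yields $\lambda_{\max}(W')\le r^2\rho^2$.

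This eigenvalue bound controls the Euclidean thickness of the shell $\scE'_{\mathrm{out}}\setminus\scE'_{\mathrm{in}}$: along any unit direction $\bu$ the width equals $(\sqrt{1+\gamma}-1)\rho/\sqrt{\bu^\top W'\bu} \ge (\sqrt{1+\gamma}-1)/r=\Omega(\gamma/r)$. I then tile $B_r$ by axis-aligned cubes of side $\delta=\Theta(\gamma/r^{3/2})$, so each cube has Euclidean diameter strictly less than this thickness and therefore cannot simultaneously meet the interior of $\scE'_{\mathrm{in}}$ and the exterior of $\scE'_{\mathrm{out}}$. Every non-empty cube is then monochromatic with respect to $C$, and a single query $\scq(\bx,\bx_0)$ with $\bx_0\in S_C$ fixed and $\bx$ any input point of the cube labels the whole cube. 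The number of cubes meeting $B_r$ is $\scO(1/\delta)^r=\scO(r^{3/2}/\gamma)^r$, which qualitatively matches the claimed $\scO(r/\gamma\,\ln(r/\gamma))^r$; the exact constants in the base should follow by replacing cubes with hyperrectangles adapted to the axes of $\scE'_{\mathrm{in}}$ and sized in the $W'$ metric, as hinted in the introduction. The $2^r$ floor covers the regime in which $\gamma$ is so large that the grid degenerates but $B_r$ still contains $2^r$ orthants.

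The main obstacle is the shell-thickness step. Without John's theorem, $W'$ could have arbitrarily large top eigenvalue and $\scE'_{\mathrm{in}}$ could be made arbitrarily elongated after $T$, so the shell could be vanishingly thin along some direction and no axis-aligned tessellation would guarantee monochromaticity. John's theorem supplies exactly the required rigidity, forcing $\MVE(S_C)$ to fit snugly around $\conv(S_C)$ in every direction at once and thereby tying the Euclidean geometry of the normalized picture to the ellipsoidal margin.
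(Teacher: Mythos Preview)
Your approach is sound and proves a bound of $O(r^{3/2}/\gamma)^r$, but this is not the bound in the statement, and the two are incomparable: yours is tighter for very small $\gamma$ but loses a factor of roughly $(\sqrt{r}/\ln r)^r$ when $r$ is large and $\gamma$ is moderate. The hand-wave that adapting the cubes to the axes of $\scE'_{\mathrm{in}}$ recovers the logarithm is not right---those axes are unknown (they depend on the latent $W$), and in any case aligning to them does not by itself reduce the cell count per axis below $\Theta(r^{3/2}/\gamma)$.

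What the paper does differently is use a \emph{multiplicative} grid in the MVEE's own coordinate system: along the $i$-th semiaxis the breakpoints are $0,\beta_i,\beta_i(1+\alpha),\beta_i(1+\alpha)^2,\ldots$ with $\alpha=\Theta(\gamma/r)$, giving only $O\bigl(\tfrac{r}{\gamma}\ln\tfrac{r}{\gamma}\bigr)$ intervals per axis (and $2^r$ orthants, whence the floor). Monochromaticity is argued not via a Euclidean shell-thickness but via the metric $d_M$ of the inner ellipsoid $\tfrac{1}{r}E$: since the $W$-unit ball contains $\conv(S_C)\supseteq\tfrac{1}{r}E$ one has $d_W\le d_M$, and the multiplicative spacing gives $d_M(\bx,\by)^2\le\alpha^2\, d_M(\bx,\bmu)^2+O(\gamma^2)$ for $\bx,\by$ in the same cell. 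Cells far from the MVEE center are large in absolute Euclidean size but small \emph{relative} to their distance from $\bmu$, and $d_M$ sees only that relative size---this is exactly what a uniform Euclidean grid cannot exploit. So your argument buys simplicity (direct shell-thickness from the $\lambda_{\max}$ bound) at the cost of the sharper base; to match the stated theorem you need the geometric tessellation and the $d_W\le d_M$ comparison, which is the missing idea.

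One smaller technical point: when the latent center $\bc$ lies outside the affine hull of $S_C$, restricting $d_W(\cdot,\bc)^2$ to that hull introduces an additive constant, so $\scE'_{\mathrm{in}}$ and $\scE'_{\mathrm{out}}$ are concentric but not a pure $\sqrt{1+\gamma}$-rescaling of one another. Your eigenvalue bound $\lambda_{\max}(W')\le r^2\rho^2$ and the thickness conclusion survive this (the effective margin only improves), but the derivation should acknowledge it. The paper sidesteps the issue by never pushing $\bc$ forward: it uses only the consequence $d_W(\bx,\by)>\sqrt{1+\gamma}-1$ for $\bx\in C$, $\by\notin C$, which requires no center in the subspace.
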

In the rest of the section we show how to learn $C \cap \MVE(S_C)$ and sketch the proof of the theorem.

\paragraph{The MVEE.}
The first idea is to compute an ellipsoid $\EL$ that is ``close'' to $\conv(S_C)$.
A $d$-rounding of $S_C$ is any ellipsoid $\EL$ satisfying the following (we assume the center of $E$ is the origin):
\begin{align}
\frac{1}{d}\EL \subseteq \conv(S_C) \subseteq \EL
\label{eq:LJ}
\end{align}
In particular, by a classical theorem by John~\cite{KhachiyanMVE}, the MVEE $\MVE(S_C)$ is a $d$-rounding of $S_C$.
We therefore let $\EL=\MVE(S_C)$.
Note however that \emph{any} $d$-rounding ellipsoid $\EL$ can be chosen instead, as the only property we exploit in our proofs is~\eqref{eq:LJ}.
%

It should be noted that the ambient space dimensionality $d$ can be replaced by $r=\rank(S_C)$.
To this end, before computing $\EL=\MVE(S_C)$, we compute the span $V$ of $S_C$ and a canonical basis for it using a standard algorithm (e.g., Gram-Schmidt).
We then use $V$ as new ambient space, and search for $\MVE(S_C)$ in $V$.
This works since  $\MVE(S_C) \subset V$, and lowers the dimensionality from $d$ to $r \le d$.
From this point onward we still use $d$ in our notation, but all our constructions and claims hold unchanged if instead one uses $r$, coherently with the bounds of Theorem~\ref{thm:single}.

\paragraph{The monochromatic tessellation.}
We now show that, by exploiting the $\gamma$-margin condition, we can learn $C \cap \MVE(S_C)$ with a small number of queries.
We do so by discretizing $\MVE(S_C)$ into hyperrectangles so that, for each hyperrectangle, we need only one query to decide if it lies in $C$ or not.
The crux is to show that there exists such a discretization, which we call \emph{monochromatic tessellation}, consisting of relatively few hyperrectangles, roughly $(\frac{d}{\gamma}\ln\!\frac{d }{\gamma})^{d}$.

\setlength{\intextsep}{5pt}
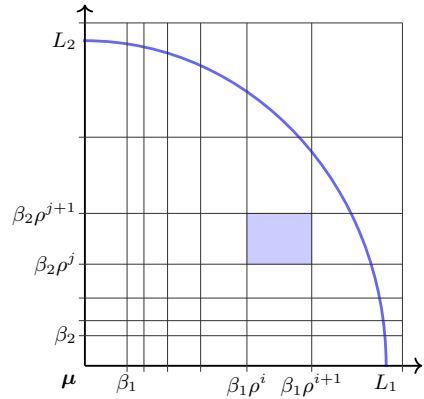
\begin{wrapfigure}{r}{.39\linewidth}
    \centering
    \scalebox{.89}{\begin{tikzpicture}[scale=1.8]
\tikzstyle{grid}=[black!80!white,thin];
\pgfmathsetmacro\betax{0.35}; 
\pgfmathsetmacro\betay{0.25};
\pgfmathsetmacro\rx{1.4}; 
\pgfmathsetmacro\ry{1.5};
\pgfmathsetmacro\b{6};
\pgfmathsetmacro\xlim{\betax*(\rx)^\b};
\pgfmathsetmacro\ylim{\betay*(\ry)^\b};
\pgfmathsetmacro\Lx{2.5};
\pgfmathsetmacro\Ly{2.7};
\pgfmathsetmacro\xi{\betax*(\rx)^4};
\pgfmathsetmacro\xii{\betax*(\rx)^5};
\pgfmathsetmacro\yi{\betay*(\ry)^3};
\pgfmathsetmacro\yii{\betay*(\ry)^4};

\clip(-.6,-.27) rectangle (2.81,3.1);

\draw[black,thick,->] (0,0) -- (2.8,0);
\draw[black,thick,->] (0,0) -- (0,3);

\begin{scope}
\clip(-.008,-.008) rectangle (5.6,2.9);
\draw[very thick,blue,opacity=.6] (0,0) circle [x radius=\Lx, y radius=\Ly]; 
\end{scope}

\foreach \i in {0,...,\b}{
    \pgfmathsetmacro\x{\betax*(\rx)^\i};
    \draw[grid] (\x,-0.05) -- (\x,\ylim);
    \pgfmathsetmacro\y{\betay*(\ry)^\i};
    \draw[grid] (-0.05,\y) -- (\xlim,\y);
}
\node[below left] (orig) at (0,0) {\small $\bmu$};
\node[below] (beta1) at (\betax,0) {\small $\beta_1$};
\node[left] (beta2) at (0,\betay) {\small $\beta_2$};


\node[below] (beta1i) at (\xi,0) {\small $\beta_1 \rho^i$};
\node[below] (beta1i1) at (\xii,0) {\small $\beta_1 \rho^{i+1}$};
\node[left] (beta2i) at (0,\yi) {\small $\beta_2 \rho^j$};
\node[left] (beta2i1) at (0,\yii) {\small $\beta_2 \rho^{j+1}$};
\node[below] (Lx) at (\Lx,0) {\small $L_1$};
\node[left] (Ly) at (0,\Ly) {\small $L_2$};

\draw[fill=blue,opacity=.2] (\xi,\yi) rectangle (\xii,\yii);
\end{tikzpicture}}
    \caption{\small The tessellation $\mathcal{R}$ of $\ELOUT \cap \R^d_+$.
    Every hyperrectangle $R$ (shaded) is such that $R \cap \ELOUT$ is monochromatic, i.e.\ contains only points of $C$ or of $X \setminus C$.}
    \label{fig:tess}
\end{wrapfigure}
Let $\ELOUT = \MVE(S_C)$. To describe the monochromatic tessellation, we first define the notion of monochromatic subset:
\begin{definition}
A set $B \subset \R^d$ is \emph{monochromatic} with respect to a cluster $C$ if it does not contain two points $\bx,\by$ with $\bx \in C$ and $\by \notin C$.
\end{definition}
Fix a hyperrectangle $R \subset \R^d$. The above definition implies that, if $B=R\cap \ELOUT$ is monochromatic, then
we learn the label of all points in $B$ with a single query.
Indeed, if we take any $\by \in B$ and any $\bx \in S_C$, the query $\scq(\by,\bx)$ tells us whether $\by \in C$ or $\by \notin C$ simultaneously for all $\by \in B$.
Therefore, if we can cover $\ELOUT$ with $m$ monochromatic hyperrectangles, then we can learn $C \cap \ELOUT$ with $m$ queries.
Our goal is to show that we can do so with $m \simeq (\frac{d}{\gamma}\ln\!\frac{d}{\gamma})^{d}$.

We now describe the construction in more detail; see also Figure~\ref{fig:tess}.
The first observation is that, if any two points $\bx,\by \in X$ are such that $\bx \in C$ and $\by \notin C$, then $|x_i-y_i| \gtrsim \nicefrac{\gamma}{d}$ for some $i$.
Indeed, if this was not the case then $\bx,\by$ would be too close and would violate the $\gamma$-margin condition.
This implies that, for $\rho \simeq 1+\nicefrac{\gamma}{d}$, any hyperrectangle whose sides have the form $[\beta_i, \beta_i \rho\,]$ is monochromatic.
We can exploit this observation to construct the tessellation.
Let the semiaxes of $\ELOUT$ be the canonical basis for $\R^d$ and its center $\bmu$ be the origin.
For simplicity, we only consider the positive orthant, the argument being identical for every other orthant.
Let $L_i$ be the length of the $i$-th semiaxis of $\ELOUT$.
The goal is to cover the interval $[0,L_i]$ along the $i$-th semiaxis of $\ELOUT$ with roughly $\log_{\rho}(L_i/\beta_i)$ intervals of length increasing geometrically with $\rho$.
More precisely, we let
$
T_i = \big\{
\big[0, \beta_i \big],
\big(\beta_i, \beta_i \rho\big],
\ldots,
\big( \beta_i \rho^{b-1}, \beta_i \rho^{b} \big]
\big\}
$,
where $\beta_i>0$, $\rho>1$, and $b \ge 0$ are functions of $\gamma$ and $d$.
Then our tessellation is the cartesian product of all the $T_i$:
\begin{definition}
\label{def:R}
Let $\R^d_+$ be the positive orthant of $\R^d$.
The tessellation $\mathcal{R}$ of $\ELOUT \cap \R^d_+$ is the set of $(b+1)^d$ hyperrectangles expressed in the canonical basis $\{\bu_1,\ldots,\bu_d\}$ of $\ELOUT$:
$
\mathcal{R} = T_1  \times \ldots \times T_{d}
$.
\end{definition}
We now come to the central fact.
Loosely speaking, if $\beta_i \simeq \frac{\gamma}{d}L_i$ then the point $(\beta_1,\ldots,\beta_d)$ lies ``well inside'' $\conv(S_C)$, because~\eqref{eq:LJ} tells us $\EL$ itself is close to $\conv(S_C)$.
By setting $\rho,b$ adequately, then, we can guarantee the intervals of $T_i$ of the form $(\beta_i \rho^{j-1}, \beta_i \rho^{j}]$ cover all the space between $\conv(S_C)$ and $\EL$.
More formally we show that, for a suitable choice of $\beta_i,\rho,b$, the tessellation $\mathcal{R}$ satisfies the following properties (see the supplementary material): 
\begin{enumerate}[label={(\arabic*)},topsep=0pt,parsep=0pt,itemsep=0pt]
\item $|\mathcal{R}| \le \max\big\{1,\scO\big(\frac{d}{\gamma}\ln\!\frac{d}{\gamma}\big)^d\big\}$
\item ${\displaystyle E \cap \R_+^d \subseteq \bigcup_{R \in \mathcal{R}}R}$
\item For every $R \in \mathcal{R}$, the set $R \cap \ELOUT$ is monochromatic w.r.t.\ $C$
\end{enumerate}
Once the three properties are established, Theorem~\ref{thm:single} immediately derives from the discussion above.
\paragraph{Pseudocode.}
We list below our algorithm that learns $C \cap \ELOUT$ subject to the bounds of Theorem~\ref{thm:single}.
We start by computing $\EL=\MVE(S_C)$ and selecting the subset $\EL_X = X \cap \EL$.
We then proceed with the tessellation, but without constructing $\mathcal{R}$ explicitly.
Note indeed that, for every $\by \in \EL_X$, the hyperrectangle $R(\by)$ containing $\by$ is determined uniquely by $|y_i|/\beta_i$ for all $i \in [d]$.
In fact, we can manage all orthants at once by simply looking at $y_i/\beta_i$.
After grouping all points $\by$ by their $R(\by)$, we repeatedly take a yet-unlabeled $R$ and label it as $C$ or not $C$.
Finally, we return all points in the hyperrectangles labeled as $C$.
\begin{algorithm}[h!]
\caption{\AlgoClean($X,S_C,\gamma$)}
\begin{algorithmic}[1]
\State compute $\ELOUT \leftarrow \MVE(S_C)$ or any other $r$-rounding of $S_C$ \label{line:A1mve}
\State compute $\ELOUT_X \leftarrow X \cap \ELOUT$
\State compute $\beta_i,\rho,b$ as a function of $r,\gamma$ \Comment{see Figure~\ref{fig:tess}}
\For{every $\by \in \ELOUT_X$} \label{line:A1index}
\State map $\by$ to $R(\by)$
\EndFor
\State $\bx_C \leftarrow$ any point in $S_C$
\While{there is some unlabeled $R$} \label{line:A1loop} 
\State label$(R)\leftarrow \scq(\bx_C,\by)$, where $\by$ is any point s.t.\ $R(\by)=R$ \label{line:A1loop2}
\EndWhile
\State \Return all $\by$ mapped to $R$ such that label$(R)=+1$
\end{algorithmic}
\end{algorithm}
\paragraph{Low-stretch separators.}
We conclude this section with a technical note.
Although MVEEs enable exact cluster reconstruction, they do not give PAC guarantees since they do not ensure consistency.
Indeed, if we draw a sample $S$ from $X$ and let $S_C=S \cap C$, there is no guarantee that $\EL=\MVE(S_C)$ separates $S_C$ from $S \setminus S_C$.
On the other hand, any ellipsoid $\EL$ separating $S_C$ from $S\setminus S_C$ is a good classifier in the PAC sense, but there is no guarantee it will be close to $\conv(S_C)$, thus breaking down our algorithm.
Interestingly, in the supplementary material we show that it is possible to compute an ellipsoid that is simultaneously a good PAC classifier \emph{and} close to $\conv(S_C)$, yielding essentially the same bounds as Theorem~\ref{thm:single}.
Formally, we have:
\begin{definition}
\label{def:lss}
Given any finite set $X$ in $\R^d$ and a subset $S \subset X$, a $\strtch$-stretch separator for $S$ is any ellipsoid $\ELOUT$ separating $S$ from $X \setminus S$ and such that $\ELOUT \subseteq \strtch\MVE(S)$.
\end{definition}
\begin{theorem}
\label{thm:stretch}
Suppose $C$ has margin $\gamma > 0$ w.r.t.\ to some $\bz \in \R^d$ and fix any subset $S_C \subseteq C$.
There exists a $\strtch$-stretch separator for $S_C$ with $\strtch=64\sqrt{2}d^2\max\big\{125,\nicefrac{1}{\gamma^3}\big\}$.
\end{theorem}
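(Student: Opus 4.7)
The plan is to construct a quadratic sublevel set that contains $S_C$, excludes $X\setminus C$, and is inscribed in $\strtch\,\MVE(S_C)$. The key idea is to blend the quadratic form defining $\MVE(S_C)$ with the form derived from the margin matrix $W$, calibrating the blend so that the margin gap forces exclusion while the MVEE piece keeps the shape close to $\MVE(S_C)$.

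The first step is a normalisation. By restricting to the affine span of $S_C\cup\{\bz\}$ one may assume $W$ is positive definite on the working subspace (the rank-$r$ reduction noted immediately after Theorem~\ref{thm:single} handles the general case). A change of variables by $W^{1/2}$ then turns the margin ellipsoid into a Euclidean ball, so that $C\subseteq B(\bz,R)$ with $R=\max_{\bx\in C}\|\bx-\bz\|$ and every $\by\in X\setminus C$ lies outside $B(\bz,\sqrt{1+\gamma}\,R)$. Since $\MVE$ is equivariant under invertible linear maps, the inclusion $\ELOUT\subseteq\strtch\,\MVE(S_C)$ is preserved, so it suffices to build the separator in these coordinates.

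Write $\MVE(S_C)=\{\bx:(\bx-\bmu)^\top Q(\bx-\bmu)\le 1\}$ and let $R_S=\max_{\bx\in S_C}\|\bx-\bz\|\le R$. My candidate is
\[
\ELOUT = \bigl\{\bx:(\bx-\bmu)^\top Q(\bx-\bmu)+\lambda\|\bx-\bz\|^2 \le 1+\lambda R_S^2\bigr\},
\]
which is an ellipsoid because $Q+\lambda I$ is positive definite for any $\lambda>0$. Containment $S_C\subseteq\ELOUT$ is immediate from the two defining bounds. For $\by\in X\setminus C$ the margin yields $\|\by-\bz\|^2>(1+\gamma)R_S^2$, so the $\lambda$-term alone exceeds $\lambda R_S^2+\lambda\gamma R_S^2$; choosing $\lambda>1/(\gamma R_S^2)$ then forces $\by\notin\ELOUT$. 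Finally, any $\bx\in\ELOUT$ obeys $(\bx-\bmu)^\top Q(\bx-\bmu)\le 1+\lambda R_S^2$, so $\ELOUT\subseteq\sqrt{1+\lambda R_S^2}\cdot\MVE(S_C)$.

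The delicate step -- and where the $d^2$ and $1/\gamma^3$ factors in the stated bound enter -- is that the clean argument above implicitly assumes $Q$ and $W$ have comparable orientations; in general the eigenvectors of $Q$ are misaligned with those of $W$, and a carelessly tuned $\lambda$ can inflate the principal semiaxes of $\ELOUT$ in directions where $Q$ is very small. To keep control, I would invoke the John rounding $\frac{1}{d}\MVE(S_C)\subseteq\conv(S_C)\subseteq\MVE(S_C)$, which ties the extent of $\MVE(S_C)$ in each direction to the extent of $\conv(S_C)$, and hence of $C$ through the margin ball. Used twice -- once to exclude $X\setminus C$ and once to bound every principal semiaxis of $\ELOUT$ by an $O(d^2/\gamma^3)$ multiple of the corresponding semiaxis of $\MVE(S_C)$ -- this yields the claimed stretch. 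The main obstacle is carrying out a simultaneous diagonalisation of $Q$ and $W$ rigorously, and I expect the constant $125$ in the theorem to be exactly the threshold beyond which the margin-driven term dominates the rounding term.
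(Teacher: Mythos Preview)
Your blended-quadratic construction in the third and fourth paragraphs is correct and in fact already complete: taking $\lambda$ just above $1/(\gamma R_S^2)$, say $\lambda=2/(\gamma R_S^2)$, your three verifications give a separator with stretch $\sqrt{1+2/\gamma}=O(\gamma^{-1/2})$, which is strictly stronger than the stated bound (no dependence on $d$, and $\gamma^{-1/2}$ in place of $\gamma^{-3}$). Your final paragraph is a false alarm. The inclusion $E\subseteq\sqrt{1+\lambda R_S^2}\,\MVE(S_C)$ follows simply by dropping the nonnegative term $\lambda\norm{}{\bx-\bz}^2$, and this step makes no use of how the eigenspaces of $Q$ and $W$ sit relative to one another: if $Q$ is small in some direction then $\MVE(S_C)$ is already large there, so the \emph{ratio} is controlled regardless. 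There is no hidden orientation assumption, and John's theorem is not needed anywhere.

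The paper takes a very different and considerably more laborious route. After the same $W^{1/2}$ normalisation, it splits the semiaxes of $\MVE(S_C)$ into ``large'' and ``small'' at a threshold $\epsilon\asymp\gamma^3/d^2$, slices the unit margin ball $\Bo$ by the affine subspace through $\bmustar$ spanned by the large axes to obtain a lower-dimensional ball $B$ of radius $\ell$, and then builds $E$ axis by axis: along large axes it uses radius $\ell/\sqrt{1-\sqrt{5\gamma/4}}$, along small axes it rescales the MVEE semiaxis by $1/\sqrt\epsilon$. The factor $d^2/\gamma^3$ in the statement is exactly $1/\epsilon$, and three separate geometric arguments are then needed for $S_C\subset E$, for $E\subset\sqrt{1+\gamma}\,\Bo$, and for the stretch bound. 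Your one-line sum of two quadratics sidesteps all of this and delivers a tighter stretch. The only place your write-up is genuinely incomplete is the reduction to positive-definite $W$: if $W$ has nontrivial kernel on the affine span of $S_C$ then $R_S$ can vanish and your choice of $\lambda$ degenerates, so that case needs a separate (easy) treatment rather than the hand-wave you give it.
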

%

\section{Exact recovery of all clusters}
\label{sec:all}
In this section we conclude the construction of our algorithm \AlgoFULL\ (listed below), and we bound its query complexity and running time.
\AlgoFULL\ proceeds in rounds.
At each round, it draws samples uniformly at random from $X$ until, for some sufficiently large $b>0$, it obtains a sample $S_C$ of size $b\,\! d^2 \ln k$ from some cluster $C$.
At this point, by concentration and PAC bounds, we know that any ellipsoid $\EL$ containing $S_C$ satisfies $|C \cap \EL| \ge \frac{1}{4k} |X|$ with probability at least $\nicefrac{1}{2}$.
\AlgoFULL\ uses the routine \AlgoClean$()$ from Section~\ref{sec:single} to compute such a subset $C \cap \EL$ efficiently (see Theorem~\ref{thm:single}).
\AlgoFULL\ then deletes $C \cap \EL$ from $X$ and repeats the process on the remaining points.
This continues until a fraction $(1-\epsilon)$ of points have been clustered.
In particular, when $\epsilon < \nicefrac{1}{n}$, \AlgoFULL\ clusters all the points of $X$.

\begin{algorithm}[h!]
\caption{
\label{alg:algofull}
\AlgoFULL($X,k,\gamma,\epsilon$)}
\begin{algorithmic}[1]
\State $\hat{C}_1,\ldots,\hat{C}_k \leftarrow \emptyset$
\While{$|X| > \epsilon n$}
\State draw samples with replacement from $X$ until $|S_C| \ge b d^2 \!\ln k$ for some $C$ \label{line:A2sample}
\State $C_E \leftarrow$ \AlgoClean$(X,S_C,\gamma)$ \label{line:A2clean}
\State add $C_E$ to the corresponding $\hat{C}_i$
\State $X \leftarrow X \setminus C_E$
\EndWhile
\State \Return $\hat{\C}=\{\hat{C}_1,\ldots,\hat{C}_k \}$
\end{algorithmic}
\end{algorithm}
Regarding the correctness of \AlgoFULL, we have:
\begin{lemma}
\label{lem:correctness}
The clustering $\hat{\C}$ returned by \AlgoFULL$(X,k,\gamma,\epsilon)$ deterministically satisfies $\ErrClust(\hat{\C},\C) \le \epsilon$.
In particular, for $\epsilon < \nicefrac{1}{n}$ we have $\ErrClust(\hat{\C},\C) = 0$.
\end{lemma}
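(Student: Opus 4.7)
The plan is to establish correctness of \AlgoFULL{} via a deterministic invariant: every point ever placed in some $\hat{C}_i$ belongs to the same latent cluster. Two facts drive this. First, the sampling step on line~\ref{line:A2sample} implicitly relies on partitioning the currently drawn samples by cluster, which is achievable using same-cluster queries (as recalled in Section~\ref{sec:prel}); thus whenever the threshold $|S_C|\ge bd^2\ln k$ is triggered and \AlgoClean{} is invoked, $S_C$ is guaranteed to be a subset of a single latent cluster $C\in\C$. Second, Theorem~\ref{thm:single} asserts that on such an input \AlgoClean$(X,S_C,\gamma)$ returns \emph{exactly} $C\cap\MVE(S_C)$, which is a subset of $C$. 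Therefore each batch $C_E$ removed from $X$ in a given iteration is contained in one true cluster, and the invariant is preserved across iterations.

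Given the invariant, the error bound follows by counting unclassified points. The while-loop terminates only when the residual working set has size at most $\epsilon n$, so at most $\epsilon n$ input points are ever left without a label. Fix the relabeling $\sigma$ that matches each $\hat{C}_i$ to the latent cluster whose points it collected. By the invariant, $\hat{C}_{\sigma(i)}\subseteq C_i$ for every $i$, so
\[
C_i \SymDif \hat{C}_{\sigma(i)} \;=\; C_i \setminus \hat{C}_{\sigma(i)},
\]
which consists only of points from $C_i$ that remained in the working set at termination. Summing over $i$ gives $\sum_{i=1}^k |C_i \SymDif \hat{C}_{\sigma(i)}| \le \epsilon n$, and therefore $\ErrClust(\hat{\C},\C) \le \tfrac{\epsilon n}{2n} = \epsilon/2 \le \epsilon$.

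For the second part, when $\epsilon < 1/n$ we have $\epsilon n < 1$, so the loop condition $|X|>\epsilon n$ remains satisfied as long as any input point remains in the working set. Hence the loop can only terminate with $X=\emptyset$, every point is placed in some $\hat{C}_i$, and by the invariant this placement agrees with $\C$ up to relabeling, giving $\ErrClust(\hat{\C},\C)=0$.

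The main (and essentially only) obstacle is being careful that the "for some $C$" test hidden in line~\ref{line:A2sample} is genuinely implementable, i.e.\ that the drawn samples can be correctly partitioned by cluster using SCQs; once this is spelled out, the lemma is a clean deterministic consequence of Theorem~\ref{thm:single}, with the random sampling affecting only the query complexity and runtime rather than correctness.
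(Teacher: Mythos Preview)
Your proof is correct and follows the same approach as the paper, which dispatches the lemma in a single sentence: the error is bounded by the fraction of points remaining in $X$ at termination, which is at most $\epsilon$ by construction. You simply spell out the invariant (each $C_E$ returned by \AlgoClean{} is a subset of a single latent cluster, by Theorem~\ref{thm:single}) and the relabeling more explicitly than the paper does.
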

This holds because $\ErrClust(\hat{\C},\C)$ is bounded by the fraction of points that are still in $X$ when \AlgoFULL\ returns; and this fraction is at most $\epsilon$ by construction.
Regarding the cost of \AlgoFULL, we have:
\begin{lemma}
\label{lem:cost}
\AlgoFULL$(X,k,\gamma,\epsilon)$ makes $\scO(k^3 \ln k \ln(\nicefrac{1}{\epsilon}))$ same-cluster queries in expectation, and for all fixed $a \ge 1$, \AlgoFULL$(X,k,\gamma,0)$ with probability at least $1-n^{-a}$ makes $\scO(k^3 \ln k \ln n)$ same-cluster queries and runs in time $\scO((k \ln n)(n + k^2 \ln k )) = \widetilde{\scO}(kn + k^3)$.
\end{lemma}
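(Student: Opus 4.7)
\emph{Proof plan.} The plan is to decompose the total cost as (queries per round) $\times$ (number of rounds), bound each factor, and analyse both in expectation and with high probability.

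First I would show that each round uses $\scO(k^2 d^2 \ln k + f(r,\gamma))$ queries. The sampling loop on line~\ref{line:A2sample} can identify the cluster of each newly drawn point with at most $k$ queries, by comparing it against one representative of each cluster seen so far. Since the largest current cluster has mass at least $|X|/k$, a Chernoff/pigeonhole argument shows that $\scO(kd^2 \ln k)$ draws suffice before some cluster accumulates the required $bd^2 \ln k$ samples. The remainder of the round invokes \AlgoClean\ (line~\ref{line:A2clean}), which by Theorem~\ref{thm:single} costs $f(r,\gamma)$ queries. Summing gives the per-round bound.

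Next I would call a round \emph{successful} if $|C_E| \ge |X|/(4k)$ and reuse the argument summarised just before Algorithm~\ref{alg:algofull} to show that every round is successful with conditional probability at least $1/2$: a cluster of size $\Omega(|X|/k)$ is targeted with constant probability, and on top of this any ellipsoid through the $\Theta(d^2 \ln k)$-sized sample captures at least half of that cluster by the $\scO(d^2)$ VC-dimension of ellipsoids. For the expectation claim I would apply Wald's identity: since $\scO(k\ln(1/\epsilon))$ successes suffice to drive $|X|$ below $\epsilon n$ via the multiplicative decay factor $(1-1/(4k))$ per success, the expected number of rounds is $\scO(k\ln(1/\epsilon))$, and multiplying by the per-round query bound yields $\scO(k^3\ln k\ln(1/\epsilon))$ after folding $d$ and $f(r,\gamma)$ into the $\scO$.

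For the $\epsilon<1/n$ high-probability claim I would take $T=Ck\ln n$ rounds and apply a Chernoff bound to the conditionally Bernoulli$(1/2)$ success indicators (coupled below by an i.i.d.\ Bern$(1/2)$ sequence) to guarantee at least $T/4$ successes with probability $\ge 1-n^{-a}$. While $|X|\ge 4k$ each success contracts $|X|$ by a factor $1-1/(4k)$, and once $|X|<4k$ each success still removes $\lceil |X|/(4k)\rceil\ge 1$ point, so the same $\scO(k\ln n)$-budget of successes drives $|X|$ to $0$; multiplying by the per-round query bound gives the claimed $\scO(k^3\ln k\ln n)$ queries. For the running time I would observe that the MVEE on an $\scO(d^2\ln k)$-sized sample is polynomial in the sample size, indexing the $n$ points into hyperrectangles costs $\scO(nd)$, and each same-cluster query is $\scO(1)$, so the dominant per-round costs are $\scO(n)$ for the scan and $\scO(k^2\ln k)$ for the sampling queries, giving the claimed $\scO((k\ln n)(n+k^2\ln k))$ total.

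The main obstacle is the cross-round dependence: the success probability in round $t$ is a function of which points have been deleted so far and of the conditional cluster sizes in the remaining $X$, so the success indicators are not i.i.d. The resolution is that the $1/2$ lower bound holds \emph{uniformly} in the history — the sampling and PAC steps only need that the largest remaining cluster has size $\ge |X|/k$ and that the VC bound applies on an arbitrary finite ground set, both of which are preserved under the deletions performed by the algorithm — which is enough to couple from below by i.i.d.\ Bern$(1/2)$ and to apply the standard Chernoff/Wald tools invoked above.
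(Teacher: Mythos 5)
Your proof is correct and follows essentially the same route as the paper's: bound the per-round cost (sampling plus \AlgoClean), show each round contracts $|X|$ by a factor $1-1/(4k)$ with conditional probability $\geq 1/2$ via the PAC bound, couple the round-success indicators from below by an i.i.d.\ Bernoulli$(1/2)$ sequence (the paper makes this coupling explicit by multiplying each indicator by an auxiliary Bernoulli with parameter $1/(2\,\E[R_i\mid\mathcal{F}_{i-1}])$), and then invoke a negative-binomial/Wald bound for the expectation and Hoeffding for the high-probability claim before multiplying by the per-round cost. The only cosmetic differences are that the paper notes the number of draws per round follows from pigeonhole alone (no concentration needed there) and that it gives an explicit $(8k+6a\sqrt{k})\ln n$ round bound rather than a generic constant $C$.
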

%
In the rest of the section we sketch the proof of Lemma~\ref{lem:cost}.
We start by bounding the number of rounds performed by \AlgoFULL.
Recall that, at each round, with probability at least $\nicefrac{1}{2}$ a fraction at least $\nicefrac{1}{4k}$ of points are labeled and removed.
Thus, at each round, the size of $X$ drops by $(1-\nicefrac{1}{8k})$ in expectation.
Hence, we need roughly $8 k \ln (\nicefrac{1}{\epsilon})$ rounds before the size of $X$ drops below $\epsilon n$.
Indeed, we prove:
\begin{lemma}
\label{lem:hp_rounds}
\AlgoFULL($X,k,\gamma,\epsilon$) makes at most $8 k \ln(\nicefrac{1}{\epsilon})$ rounds in expectation, and for all fixed $a \ge 1$, \AlgoFULL($X,k,\gamma,0$) with probability at least $1-n^{-a}$ performs at most $(8 k + 6 a \sqrt{k}) \ln n$ rounds.
\end{lemma}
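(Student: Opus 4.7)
The plan is to track $N_t = |X|$ after $t$ iterations of the outer loop, using the following fact already established in the discussion preceding the lemma: conditionally on any history yielding a current set of size $N_t$, the next round is \emph{good}---meaning \AlgoClean\ labels and removes at least $\tfrac{1}{4k}N_t$ points---with probability at least $\tfrac{1}{2}$. Since a round never increases $N_t$, this gives $E[N_{t+1}\mid N_t] \le N_t(1-\tfrac{1}{8k})$ and hence $E[N_t] \le n\,e^{-t/(8k)}$.

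For the expectation bound, let $T = \min\{t : N_t \le \epsilon n\}$. By Markov, $\Pr[T>t] = \Pr[N_t > \epsilon n] \le \min\{1,\,e^{-t/(8k)}/\epsilon\}$. Using $E[T] = \sum_{t\ge 0}\Pr[T>t]$, splitting the sum at $t^\star = 8k\ln(1/\epsilon)$, and bounding the geometric tail beyond $t^\star$ by $\scO(k)$ gives the claimed $E[T] \le 8k\ln(1/\epsilon)$ after absorbing lower-order constants.

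For the high-probability bound (the $\epsilon = 0$ case), I would use stochastic domination. Let $G_t$ be the number of good rounds among the first $t$. The deterministic inequality $N_t \le n(1-\tfrac{1}{4k})^{G_t} \le n\,e^{-G_t/(4k)}$ implies that the algorithm has terminated whenever $G_t > 4k\ln n$. Since each round is good with probability at least $\tfrac{1}{2}$ conditionally on the past, one can couple the goodness indicators to an i.i.d.\ sequence $B_1,B_2,\ldots$ of $\mathrm{Bernoulli}(\tfrac{1}{2})$ variables satisfying $\sum_{i=1}^t B_i \le G_t$ almost surely. Setting $T_0 = (8k+6a\sqrt{k})\ln n$, the i.i.d.\ sum has mean $T_0/2 = (4k + 3a\sqrt{k})\ln n$, so a lower-tail Hoeffding bound applied to the shortfall of $3a\sqrt{k}\ln n$ from the mean gives $\Pr\!\big[\sum_{i \le T_0} B_i \le 4k\ln n\big] \le \exp\!\bigl(-2(3a\sqrt{k}\ln n)^2/T_0\bigr) \le n^{-a}$, where the last inequality is routine arithmetic that holds for $a\ge 1$ and $k\ge 1$.

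The main subtlety is exactly the dependence between successive rounds: the ``good'' event is conditional on the current $X$, which is itself random, so neither $G_t$ nor the $N_t$-shrinkage factors are products of independent terms. The Bernoulli coupling sketched above handles this cleanly, and the deterministic inequality $N_t \le n\,e^{-G_t/(4k)}$ is what lets concentration of $G_t$ translate into concentration of the stopping time. Everything else---the Markov tail sum, the supermartingale-style computation for $E[N_t]$, and the Hoeffding exponent---is routine.
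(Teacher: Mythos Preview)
Your high-probability argument is essentially identical to the paper's: the paper constructs the coupled Bernoulli sequence explicitly as $Z_i = R_i B_i$ with $B_i \sim \mathrm{Bernoulli}\!\big(1/(2\Pr(R_i=1\mid\mathcal{F}_{i-1}))\big)$, so that $Z_i \le R_i$ and $\Pr(Z_i=1\mid\mathcal{F}_{i-1})=\tfrac{1}{2}$ exactly, and then applies Hoeffding to $\sum_i Z_i$ with the same arithmetic you sketch.

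Where you diverge is the expectation bound. Your Markov-on-$E[N_t]$ route yields $E[T] \le 8k\ln(1/\epsilon) + \Theta(k)$, and the additive $\Theta(k)$ term cannot simply be ``absorbed'' into $8k\ln(1/\epsilon)$: it is not lower order when $\ln(1/\epsilon)$ is bounded, so as written you do not prove the stated constant. The paper avoids this by reusing the same Bernoulli coupling for the expectation as well: since $\rho_t \ge \sum_{i\le t} Z_i$ and the $Z_i$ are (conditionally, hence unconditionally) i.i.d.\ $\mathrm{Bernoulli}(1/2)$, the stopping time $r_\epsilon$ is stochastically dominated by the negative-binomial time to collect $4k\ln(1/\epsilon)$ fair-coin heads, whose expectation is exactly $2\cdot 4k\ln(1/\epsilon)=8k\ln(1/\epsilon)$. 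That buys the sharp constant with no tail-summing; if you only care about the order of magnitude, your Markov argument is perfectly fine.
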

We can now bound the query cost and running time of \AlgoFULL, by counting the work done at each round and using Lemma~\ref{lem:hp_rounds}.
To simplify the discussion we treat $d,r,\gamma$ as constants, but fine-grained bounds can be derived immediately from the discussion itself.

\textbf{Query cost of \AlgoFULL.}
The algorithm makes queries at line~\ref{line:A2sample} and line~\ref{line:A2clean}.
At line~\ref{line:A2sample}, \AlgoFULL\ draws at most $b k d^2 \ln k = \scO(k \ln k)$ samples.
This holds since there are at most $k$ clusters, so after $b k d^2 \ln k$ samples, the condition $|S_C| \ge  b d^2 \ln k$ will hold for some $C$.
Since learning the label of each sample requires at most $k$ queries, line~\ref{line:A2sample} makes $\scO(k^2 \ln k)$ queries in total.
At line~\ref{line:A2clean}, \AlgoFULL\ makes $f(d,\gamma)=\scO(1)$ queries by Theorem~\ref{thm:single}.
Together with Lemma~\ref{lem:hp_rounds}, this implies that \AlgoFULL\ with probability at least $1-n^{-a}$ makes at most $\scO(k \ln n)\times\scO(k^2 \ln k) = \scO(k^3 \ln k \ln n)$ queries.

\textbf{Running time of \AlgoFULL.}
Line~\ref{line:A2sample} takes time $\scO(k^2 \ln k)$, see above.
The rest of each round is dominated by the invocation of \AlgoClean\ at line~\ref{line:A2clean}.
Recall then the pseudocode of \AlgoClean\ from Section~\ref{sec:single}.
At line~\ref{line:A1mve}, computing $\EL=\MVE(S_C)$ or any $r$-rounding of $S_C$ takes time $\scO(|S_C|^{3.5} \ln |S_C|)$, see~\cite{KhachiyanMVE}.\footnote{More precisely, for a set $S$ an ellipsoid $\EL$ such that $\frac{1}{(1+\epsilon)d}\EL \subset \conv(S) \subset \EL$ can be computed in $\scO(|S|^{3.5} \ln(|S|/\epsilon))$ operations in the real number model of computation, see~\cite{KhachiyanMVE}.}
This is in $\widetilde{\scO}(1)$ since by construction $|S_C| = \scO(d^2 \ln k) = \widetilde{\scO}(1)$.
Computing $\EL_X = X \cap \EL$ takes time $\scO(|X| \poly(d))=\scO(n)$.
For the index (line~\ref{line:A1index}), we can build in time $\scO(|X \cap \EL|)$ a dictionary that maps every $R \in \mathcal{R}$ to the set $R \cap \EL_X$.
The classification part (line~\ref{line:A1loop}) takes time $|\mathcal{R}|=\scO(1)$.
Finally, enumerating all positive $R$ and concatenating the list of their points takes again time $\scO(|X \cap \EL| \poly(d))$.
By the rounds bound of Lemma~\ref{lem:hp_rounds}, then, \AlgoFULL\ with probability at least $1-n^{-a}$ runs in time $\scO((k \ln n)(n + k^2 \ln k ))$.

\section{Lower bounds}
\label{sec:lb}
%
We show that any algorithm achieving exact cluster reconstruction must, in the worst case, perform a number of same-cluster queries that is exponential in $d$ (the well-known ``curse of dimensionality'').
Formally, in the supplementary material we prove: 
\begin{theorem}
\label{thm:LB}
Choose any possibly randomized learning algorithm.
There exist:
\begin{enumerate}[topsep=0pt,parsep=0pt,itemsep=0pt]
    \item for all $\gamma \in (0,\nicefrac{1}{7})$ and $d \ge 2$, an instance on $n = \Omega\big((\frac{1+\gamma}{8\gamma})^{\frac{d-1}{2}} \big)$ points and $3$ clusters
    \item for all $\gamma > 0$ and $d \ge 48(1+\gamma)^2$, an instance on $n = \Omega\big( e^{\frac{d}{48(1+\gamma)^2}}\big)$ points and $2$ clusters
\end{enumerate}
such that (i) the latent clustering $\C$ has margin $\gamma$, and (ii) to return with probability $\nicefrac{2}{3}$ a $\hat{\C}$ such that $\ErrClust(\hat{\C},\C)=0$, the algorithm must make $\Omega(n)$ same-cluster queries in expectation.
\end{theorem}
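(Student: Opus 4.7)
The plan is to invoke Yao's minimax principle: for each part of the theorem I will construct a distribution $\mathcal{D}$ on instances $(X,\C)$ of the claimed size and margin $\gamma$, then show that every deterministic algorithm needs $\Omega(n)$ queries in expectation under $\mathcal{D}$ to output the exact clustering with probability at least $2/3$; the bound then transfers to randomized algorithms.

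Both parts use the same hiding scheme. Fix $n$ ``symmetric'' candidate points $v_1,\ldots,v_n$ and let the latent clustering contain a singleton $C^{\star}=\{v_{i^{\star}}\}$ with $i^{\star}$ uniform in $[n]$, while the remaining points of $X$ form fixed background cluster(s) that do not depend on $i^{\star}$. The key adversarial observation is that every query $\scq(x,y)$ which does not involve $v_{i^{\star}}$ returns ``same cluster,'' since both endpoints then lie in the background. Consequently, the transcript along the ``all-same'' branch of the algorithm's decision tree is completely determined and depends on $i^{\star}$ only through whether $i^{\star}$ is among the at most $2q$ indices $T$ actually touched along that branch. Conditioned on $i^{\star}\notin T$, the algorithm must therefore output a fixed guess, which is correct with probability at most $1/(n-|T|)$. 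A direct calculation gives
\[
\Pr[\text{success}] \;\le\; \frac{|T|}{n} + \Bigl(1-\frac{|T|}{n}\Bigr)\cdot\frac{1}{n-|T|} \;=\; \frac{|T|+1}{n},
\]
so success probability $\ge 2/3$ forces $|T|\ge 2n/3-1$, and a standard Markov-based truncation of the random query count then yields the claimed $\Omega(n)$ expected-query lower bound.

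It remains to construct, for each part, a family of $n$ candidate points supporting this hiding scheme while preserving $\gamma$-margin for \emph{all} $n$ candidate clusterings. For Part~1 (three clusters, any $d\ge 2$) the $v_j$ form a sphere packing on $S^{d-1}$ with pairwise chord distance $\gtrsim\sqrt{\gamma/(1+\gamma)}$; a standard spherical-cap volume bound gives $n=\Omega\big(((1+\gamma)/(8\gamma))^{(d-1)/2}\big)$. Two fixed anchor clusters are placed far from this sphere so that their Euclidean margin conditions (centred at the anchors) hold independently of $i^{\star}$, while the singleton $\{v_{i^{\star}}\}$ has trivial margin when centred at itself. For Part~2 (two clusters, $d\ge 48(1+\gamma)^2$) the $v_j$ are unit vectors with pairwise inner products $\le\tau\asymp 1/\sqrt{1+\gamma}$; the classical sphere concentration $\Pr[\langle u,v\rangle>t]\le e^{-dt^2/2}$ together with a union bound produces such packings of size $n=\Omega\big(e^{d/(48(1+\gamma)^2)}\big)$. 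The background cluster $A=X\setminus\{v_{i^{\star}}\}$ inherits $\gamma$-margin via an ellipsoidal norm $W_A=\alpha\, v_{i^{\star}}v_{i^{\star}}^{\top}+\beta I$ for suitable $\alpha,\beta>0$, which stretches the metric along $v_{i^{\star}}$ just enough to push $v_{i^{\star}}$ outside the ellipsoid enclosing the remaining $v_j$ by the required factor $\sqrt{1+\gamma}$.

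The main obstacle I anticipate is verifying the $\gamma$-margin condition simultaneously across all $n$ candidate clusterings and tuning the packing parameters so that the explicit constants ($8$ in Part~1 and $48$ in Part~2) come out correctly; in particular, the interplay between the admissible pairwise inner product $\tau$ and the stretched ellipsoidal matrix $W_A$ has to be balanced against the sphere-concentration exponent. The Yao transcript argument, the spherical-cap volume packing, and the random-sphere concentration packing are otherwise standard, so once the geometric construction is pinned down, the remainder of the proof should go through without surprises.
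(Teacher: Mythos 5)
Your overall strategy (Yao's minimax principle, hide a uniformly random outlier, and argue that any query transcript that avoids the hidden index is forced) matches the paper's, and your decision-tree calculation $\Pr[\text{success}] \le (|T|+1)/n$ together with the Markov truncation is a correct and standard way to finish. For Part~2, your construction is genuinely different from the paper's and appears sound: the paper samples $\{0,1\}^d$ vectors with entrywise probability $p = \nicefrac{1}{2(1+\gamma)}$ and uses $W=\diag(\bx^\star)$, verifying the margin via Chernoff bounds on $B(d,p)$ vs.\ $B(d,p^2)$; you instead use a near-orthogonal packing of unit vectors ($\langle v_i,v_j\rangle \le \tau < 1/\sqrt{1+\gamma}$) obtained by spherical concentration and a rank-one metric $W = \alpha\, v_{i^\star}v_{i^\star}^{\top}+\beta I$. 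Your route gives the margin essentially by direct computation of $d_W^2(v_j,\orig) = \alpha\langle v_{i^\star},v_j\rangle^2 + \beta$, and in fact seems to yield a slightly better exponent than $48(1+\gamma)^2$; both approaches are valid.

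However, Part~1 has a genuine gap. After you remove $v_{i^\star}$, the remaining sphere-packing points $\{v_j : j\neq i^\star\}$ must lie in one of the clusters of the instance, and that cluster must satisfy the margin condition \emph{against} $v_{i^\star}$. Your description only discusses ``two fixed anchor clusters placed far from this sphere'' with ``Euclidean margin conditions centred at the anchors'' and the trivial singleton $\{v_{i^\star}\}$ --- it never specifies the metric for the cluster that actually contains the other $v_j$'s, which is the whole crux. A \emph{fixed} Euclidean metric centred at a far-away anchor cannot work: the candidate outliers $v_j$ all sit symmetrically on the unit sphere, so no single anchor ball can contain $\{v_j : j\neq i^\star\}$ while keeping $v_{i^\star}$ a factor $\sqrt{1+\gamma}$ farther away simultaneously for every $i^\star$. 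You need an $i^\star$-dependent PSD matrix here, just as you used in Part~2. The paper handles this by a coordinate-squaring trick: it places $X = \{\pm\sqrt{\bz} : \bz\in Z^+\}$ for a packing $Z^+\subset S_d^+$, sets $W = (1+\gamma)\diag(\bz^\star)$, and exploits $d_W(\pm\sqrt{\bz},\orig)^2 = (1+\gamma)\langle \bz^\star,\bz\rangle$ so that exactly the symmetric pair $\pm\sqrt{\bz^\star}$ exceeds the $\sqrt{1+\gamma}$ threshold (which also naturally produces the three clusters). Alternatively, the rank-one metric from your Part~2 would work here too, but you would then have to check the margin with that $W$, handle the third (anchor) cluster under that same $W$, and redo the count to get the stated $\big(\frac{1+\gamma}{8\gamma}\big)^{(d-1)/2}$ bound; none of this is in the current write-up.
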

The lower bound uses two different constructions, each one giving a specific instance distribution where any algorithm must perform $\Omega(n)$ queries in expectation, where $n$ is exponential in $d$ as in the statement of the theorem.
The first construction is similar to the one shown in~\cite{Gonen&2013}.
The input set $X$ is a packing of $\simeq(\nicefrac{1}{\gamma})^d$ points on the $d$-dimensional sphere, at distance $\simeq \sqrt{\gamma}$ from each other.
We show that, for $\bx=  (x_1,\ldots,x_d) \in X$ drawn uniformly at random, setting $W=(1+\gamma)\diag(x_1^2,\ldots,x_d^2)$ makes $\bx$ an outlier.
That is, $X \setminus \{\bx\}$ forms a first cluster $C_1$, and $\{\bx\}$ forms a second cluster $C_2$, and both clusters satisfy the margin condition.
In order to output the correct clustering, any algorithm must find $\bx$, which requires $\Omega(n)$ queries in expectation.
In the second construction, $X$ is a random sample of $n \simeq \exp(d/(1+\gamma)^2)$ points from the $d$-dimensional hypercube $\{0,1\}^d$ such that each coordinate is independently $1$ with probability $\simeq \frac{1}{1+\gamma}$.
Similarly to the first construction we show that, for $\bx \in X$ drawn uniformly at random, setting $W=(1+\gamma)\diag(x_1,\ldots,x_d)$ makes $\bx$ an outlier, and any algorithm needs $\Omega(n)$ queries to find it.

\section{Experiments} 
\label{sec:exp}
We implemented our algorithm \AlgoFULL\ and compared it against \scq-$k$-means \cite{ashtiani2016clustering}.
To this end, we generated four synthetic instances on $n=10^5$ points with increasing dimension $d=2,4,6,8$.
The latent clusterings consist of $k=5$ ellipsoidal clusters of equal size, each one with margin $\gamma=1$ w.r.t.\ a random center and a random PSD matrix with condition number $\kappa = 100$, making each cluster stretched by $10\times$ in a random direction.
To account for an imperfect knowledge of the data, we fed \AlgoFULL\ with a value of $\gamma = 10$ (thus, it could in principle output a wrong clustering).
We also adopted for \AlgoFULL\ the batch sampling of \scq-$k$-means, i.e., we draw $k \cdot 10$ samples in each round; this makes \AlgoFULL\ slightly less efficient than with its original sampling scheme (see line \ref{line:A2sample}).

To further improve the performance of \AlgoFULL, we use a simple ``greedy hull expansion'' heuristic that can increase the number of points recovered at each round without performing additional queries.
Immediately after taking the sample $S_C$, we repeatedly expand its convex hull $\conv(S_C)$ by a factor $\simeq(1+\nicefrac{\gamma}{d})$, and add all the points that fall inside it to $S_C$.
If $C$ is sufficiently dense, a substantial fraction of it will be added to $S_C$; while, by the margin assumption, no point outside $C$ will ever be added to $S_C$ (see the proof of the tessellation).
This greedy hull expansion is repeated until no new points are found, in which case we proceed to compute the MVEE and the tessellation.

Figure~\ref{fig:exp1} shows for both algorithms the clustering error $\ErrClust$ versus the number of queries, round by round, averaged over $10$ independent runs (\scq-$k$-means has a single measurement since it runs ``in one shot'').
The run variance is negligible and we do not report it.
Observe that the error of \scq-$k$-means is always in the range 20\%--40\%.
In contrast, the error of \AlgoFULL\ decreases exponentially with the rounds until the latent clustering is exactly recovered, as predicted by our theoretical results.
To achieve $\ErrClust \le .05$, \AlgoFULL\  uses less than $3\%$ of the queries needed by a brute force labeling, which is $kn = 5\times 10^5$.
Note that, except when clusters are aligned as in Figure~\ref{fig:3clust}, \scq-$k$-means continues to perform poorly even after whitening the input data to compensate for skewness.
Finally, note how the number of queries issued by \AlgoFULL\ increases with the dimensionality $d$, in line with Theorem~\ref{thm:LB}.
\newcommand{\figw}{2.6in}
\newcommand{\figh}{2.2in}
\begin{figure}[h!]
\centering
\begin{subfigure}{0.48\textwidth}
\centering
\includegraphics[width=\figw,height=\figh]{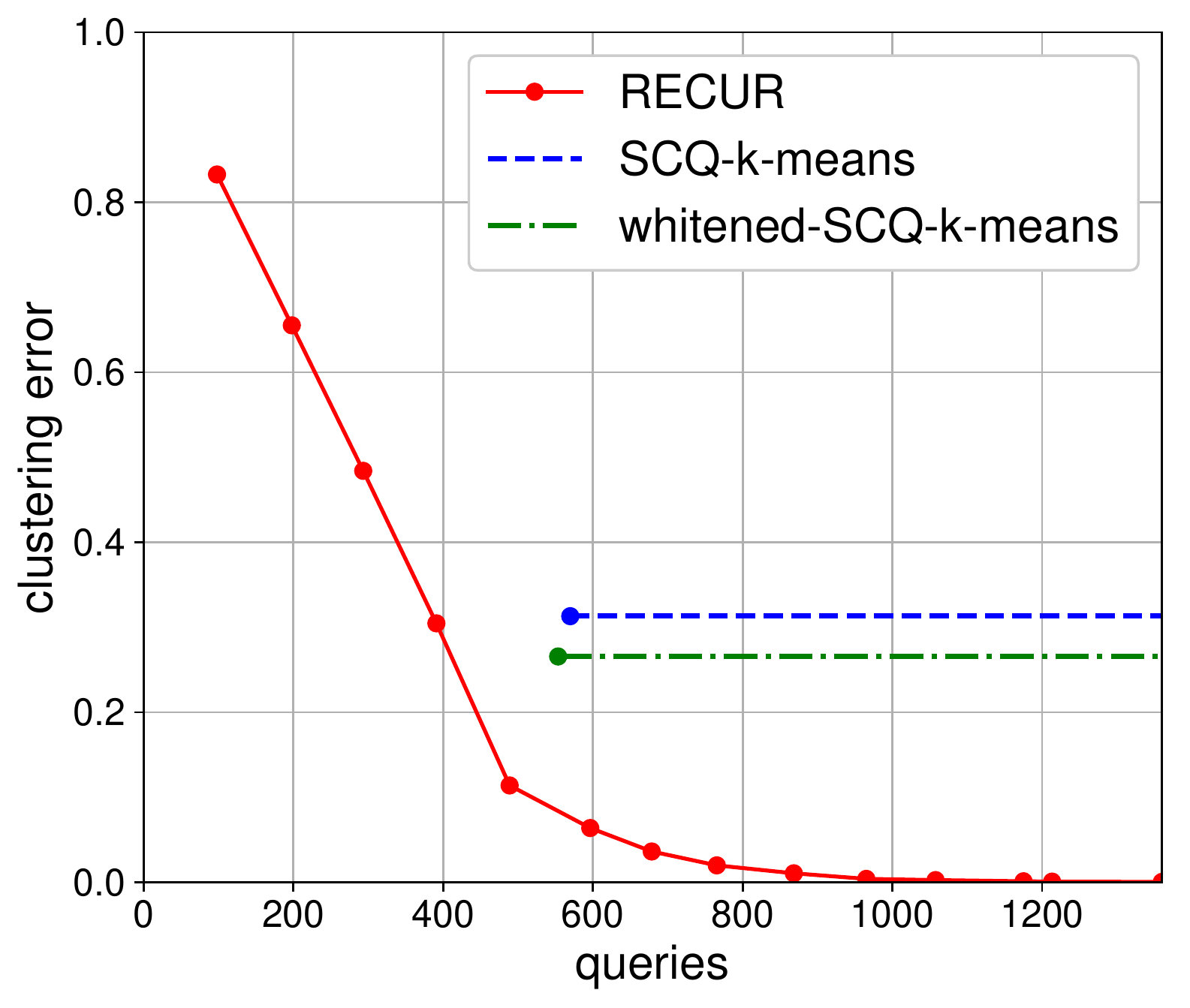}
\end{subfigure}
\hfill
\begin{subfigure}{0.48\textwidth}
\centering
\includegraphics[width=\figw,height=\figh]{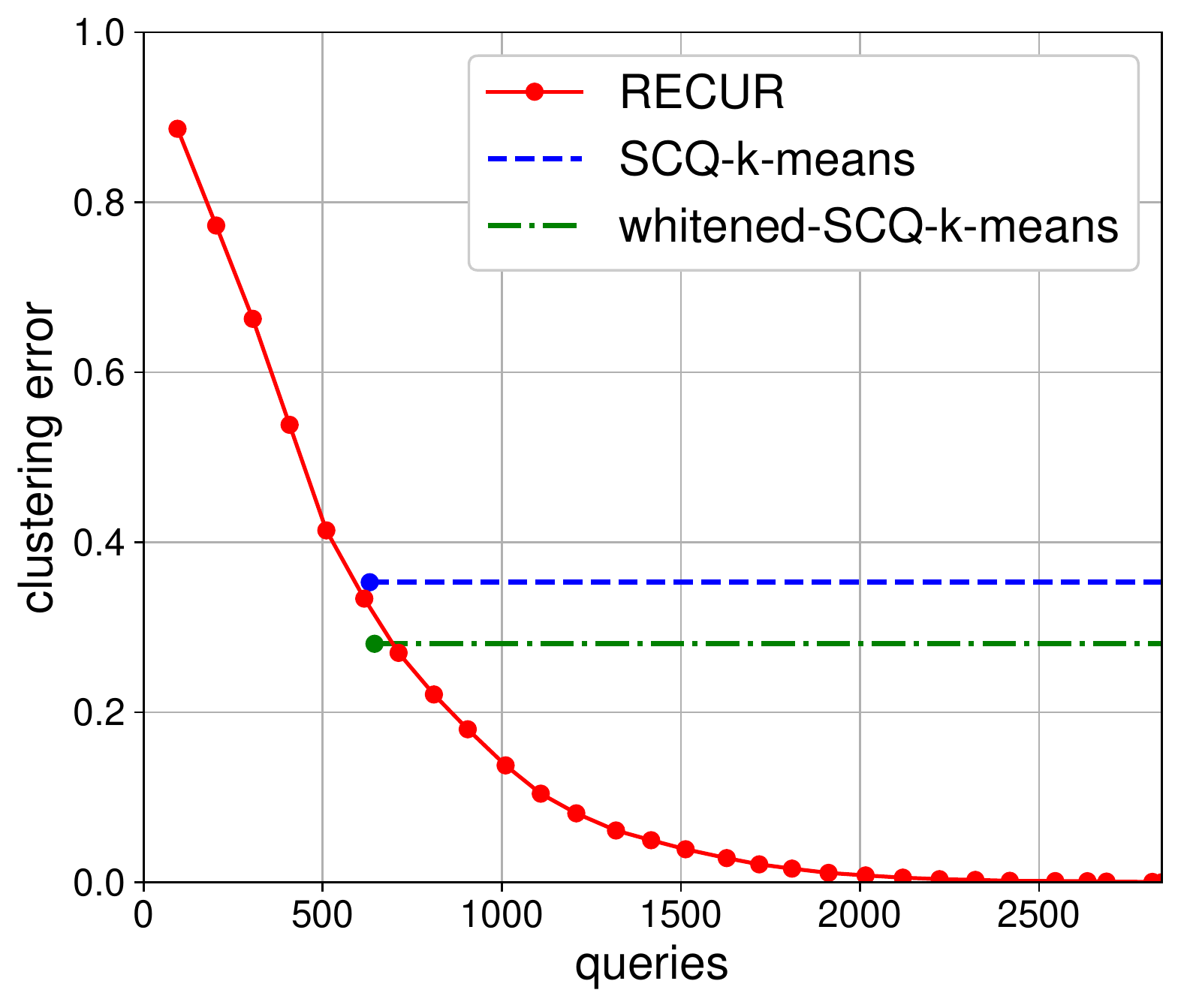}
\end{subfigure}
\\[1mm]
\begin{subfigure}{0.48\textwidth}
\centering
\includegraphics[width=\figw,height=\figh]{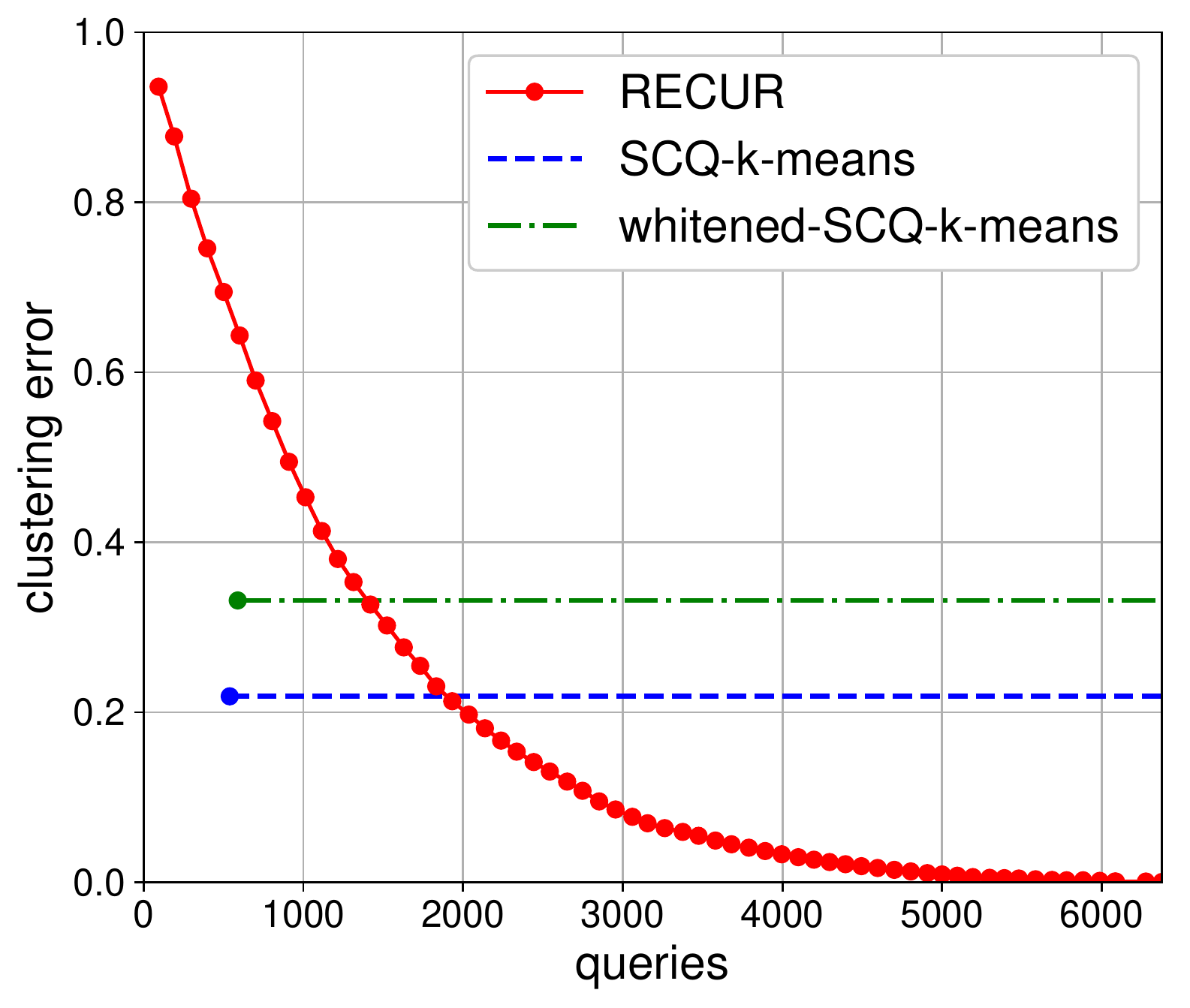}
\end{subfigure}
\hfill
\begin{subfigure}{0.48\textwidth}
\centering
\includegraphics[width=\figw,height=\figh]{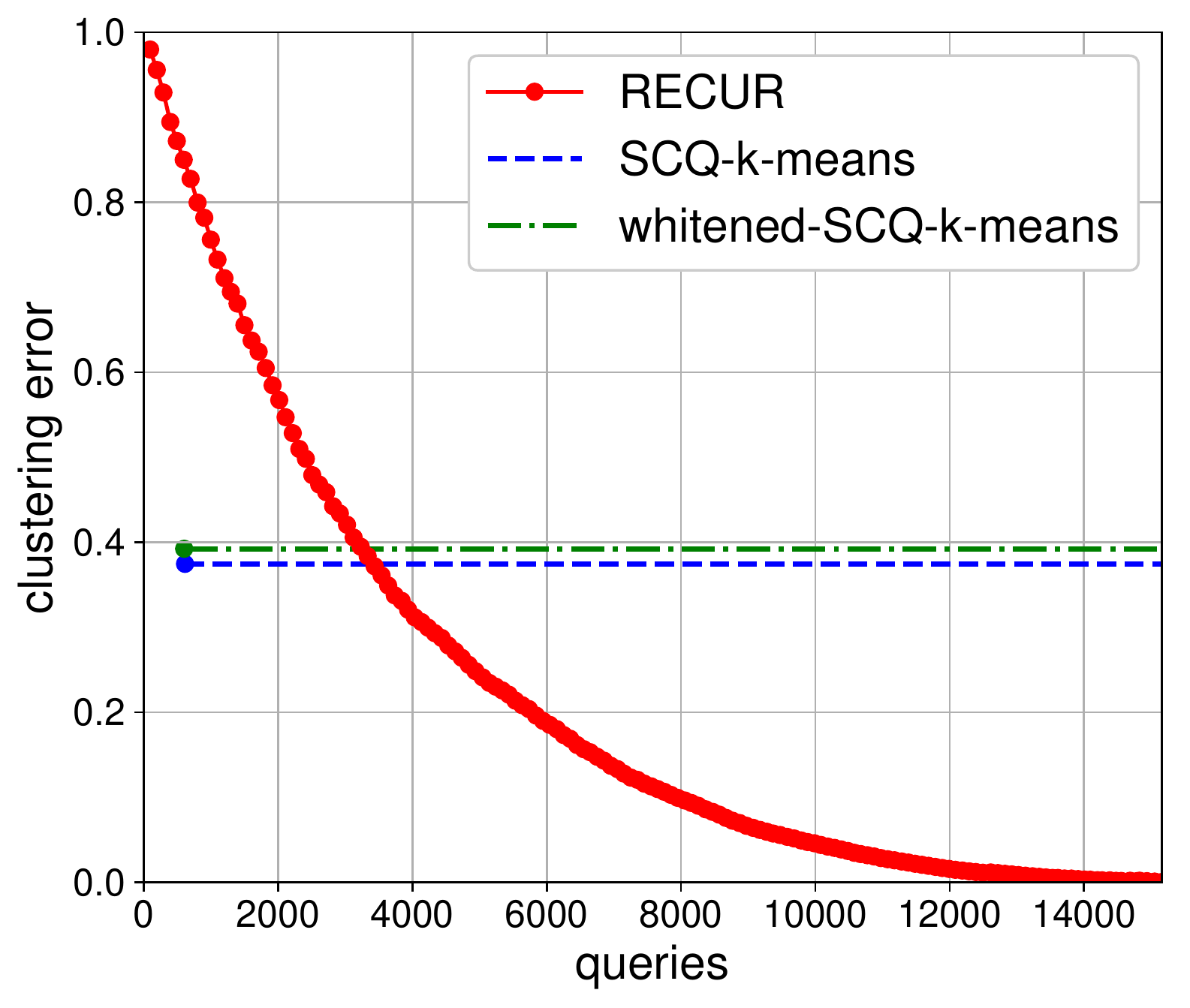}
\end{subfigure}
\caption{Clustering error vs.\ number of queries for $k=5$ and $d=2,4,6,8$ (left to right, top to bottom).
While \scq-$k$-means performs rather poorly, \AlgoFULL\ always achieves exact reconstruction.
}
\label{fig:exp1}
\end{figure}

\section{Conclusions}
We have given a novel technique that, under general conditions, allows one to actively recover a clustering using only $\scO(\ln n)$ same-cluster queries.
Unlike previous work, our technique is robust to distortions and manglings of the clusters, and works for arbitrary clusterings rather than only for those based on the solution of an optimization problem.
Our work leaves open three main questions:\\
\textbf{Q1:} Can our assumptions be strengthened in order to reduce the dependence on the dimension from exponential to polynomial, but without falling back to the setting of Ashtiani et al.~\cite{ashtiani2016clustering}?
\\
\textbf{Q2:} Can our assumptions be further relaxed, for instance by assuming a class of transformations more general than those given by PSD matrices?
\\
\textbf{Q3:} Is there a natural and complete characterization of the class of clusterings that can be reconstructed with $\scO(\ln n)$ queries?

\section*{Acknowledgements}
The authors gratefully acknowledge partial support by the Google Focused Award ``Algorithms and Learning for AI'' (ALL4AI).
Marco Bressan was also supported in part by the ERC Starting Grant DMAP 680153 and by the ``Dipartimenti di Eccellenza 2018-2022'' grant awarded to the Department of Computer Science of the Sapienza University of Rome.
Nicolò Cesa-Bianchi is also supported by the MIUR PRIN grant Algorithms, Games, and Digital Markets (ALGADIMAR) and by the EU Horizon 2020 ICT-48 research and innovation action under grant agreement 951847, project ELISE (European Learning and Intelligent Systems Excellence).

\section*{Broader impact}
This work does not present any foreseeable societal consequence.

\bibliographystyle{plain}
\bibliography{biblio}

\tikzstyle{dot}=[draw,fill,shape=circle,inner sep=0pt,minimum size=3pt]
\tikzstyle{ps}=[circle,draw, fill=black, minimum size=4,inner sep=0pt, outer sep=0pt]
\tikzstyle{ns}=[circle,draw, fill=white, minimum size=4,inner sep=0pt, outer sep=0pt]
\tikzstyle{mve}=[circle,draw,densely dotted,thick]
\tikzstyle{ell}=[circle,draw,thick]

\appendix

\section{Ancillary results}

\subsection{VC-dimension of ellipsoids}
\label{sub:VCofEL}
For any PSD matrix $M$, we denote by $\EL_{M} = \theset{\bx \in \R^d}{d_M(\bx,\bmu) \le 1}$ the $\bmu$-centered ellipsoid with semiaxes of length $\lambda_1^{-1/2},\ldots,\lambda_d^{-1/2}$, where $\lambda_1,\ldots,\lambda_d \ge 0$ are the eigenvalues of $M$.
We recall the following classical VC-dimension bound (see, e.g., \cite{Teytaud11}).
\begin{theorem}
\label{thm:vcE}
The VC-dimension of the class $\Hs = \{\EL_M \,:\, M \in \R^{d}, M \succeq 0 \}$ of (possibly degenerate) ellipsoids in $\R^d$ is $\frac{d^2+3d}{2}$.
\end{theorem}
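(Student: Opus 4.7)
The plan is to prove $\mathrm{VC}(\Hs) = \frac{d^2+3d}{2}$ by matching upper and lower bounds. The starting observation is that an ellipsoid $\EL_M = \{\bx \in \R^d : (\bx-\bmu)^\top M (\bx-\bmu) \le 1\}$ is the zero-sublevel set of the quadratic polynomial $q_{M,\bmu}(\bx) = \bx^\top M \bx - 2(M\bmu)^\top \bx + (\bmu^\top M \bmu - 1)$, in which the quadratic coefficient matrix is forced to be PSD and the constant $-1$ plays the role of a fixed scale normalization. The effective number of free parameters is $\binom{d+1}{2} + d = \frac{d^2+3d}{2}$, and the proof consists of turning this counting into a sharp VC bound in both directions.

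For the upper bound I would linearize via the feature map $\phi : \R^d \to \R^D$ with $D = \frac{d^2+3d}{2}$, sending $\bx$ to the concatenation $(x_1,\ldots,x_d) \oplus (x_i x_j)_{i \le j}$ of all degree-$1$ and degree-$2$ monomials. Each ellipsoid then corresponds to a halfspace $\{\bx : \langle \bw, \phi(\bx) \rangle + b \le 0\}$ in $\R^D$, with the $\binom{d+1}{2}$ ``quadratic'' coordinates of $\bw$ constrained to encode a PSD matrix. The standard VC bound for affine halfspaces in $\R^D$ gives a preliminary value of $D+1$. To shave the remaining unit I would apply Dudley's theorem to the $(D{+}1)$-dimensional vector space of all quadratic polynomials and then use the PSD constraint: because $M \succeq 0$ forces the interior of the ellipsoid to always be the bounded convex side of the associated lifted halfspace, $\Hs$ cannot realize both complementary labelings on any affinely independent configuration of $D+1$ lifted points, and the Radon-partition argument behind the halfspace VC bound then upgrades to $\mathrm{VC}(\Hs) \le D$.

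For the lower bound I would exhibit $D$ points of $\R^d$ that are shattered by ellipsoids. Take $D$ points in generic position on a fixed sphere: their images under $\phi$ are affinely independent in $\R^D$, so every target dichotomy admits a linear separator $(\bw, b)$ in the lifted space. A small perturbation of the configuration, combined with the fact that any symmetric matrix sufficiently close to $\lambda I$ with $\lambda > 0$ is PSD, allows one to choose the quadratic block of $\bw$ inside the PSD cone for every possible labeling, so each such separator corresponds to a genuine ellipsoid in $\Hs$.

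I expect the most delicate step to be the ``PSD constraint saves exactly one dimension'' move in the upper bound: without it, the naive lifting only yields $D+1$, and recovering the tight $D$ requires a careful projective normalization or a Radon-partition argument that explicitly invokes the one-sided orientation enforced by $M \succeq 0$. Should the direct approach prove cumbersome, a clean alternative is to invoke known VC-dimension bounds for Pfaffian or polynomial concept classes and to identify the constants for ellipsoids explicitly, but my preference is to keep the argument self-contained through the explicit lifting above.
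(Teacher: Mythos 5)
The paper does not actually prove this theorem: it is stated as a classical fact and attributed to \cite{Teytaud11}, so there is no in-paper argument to compare yours against, and I will evaluate your sketch on its own.

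Your upper bound has a real gap exactly where you flagged it. The claim that ``$\Hs$ cannot realize both complementary labelings on any affinely independent configuration of $D+1$ lifted points'' is false. Already for $d=1$, $D=2$: take $x_1<x_2<x_3$ on the line, whose lifts $(x_i,x_i^2)$ are affinely independent, and note that the complementary labelings $(1,1,0)$ and $(0,0,1)$ are both realized by intervals (e.g.\ $[x_1,x_2]$ and $[x_3,x_3]$). What fails is only the \emph{specific} labeling $(1,0,1)$, because $\bx \mapsto (\bx-\bmu)^\top M(\bx-\bmu)$ is convex when $M\succeq 0$; but the Radon mechanism you invoke produces an affine dependence only among $D+2$ lifted points, so it does not hand you a bad labeling for $D+1$ affinely independent ones. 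A way that does work is to exploit the scale-invariance of the sublevel set together with $\trace(M)>0$: every ellipsoid other than $\R^d$ can be written as $\{q\le 0\}$ with $q$ quadratic and the coefficients of $x_1^2,\dots,x_d^2$ summing to $1$ (divide the defining polynomial by $\trace(M)$), which places $q$ in a fixed $D$-dimensional affine coset $g+V$ of the $(D+1)$-dimensional space of quadratics; the affine form of Dudley's theorem then gives $\mathrm{VC}(\Hs)\le \dim V = D$, and adding back the single concept $\R^d$ (the $M=0$ case) does not raise the bound since a large ball already realizes the all-ones labeling. Your lower bound is also not yet a proof: it asserts, rather than establishes, that the separating hyperplanes obtained from affine independence of the $D$ lifted points can all be taken with PSD quadratic block after a small perturbation of the configuration. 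One must exhibit a concrete point set and explain why, for every one of the $2^D$ dichotomies, the corresponding lifted separator's quadratic coefficients lie in (or can be nudged into) the PSD cone; the generic-sphere claim does not supply this.
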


\subsection{Generalization error bounds}
The next result is a simple adaptation of the classical VC bound for the realizable case (see, e.g., \cite[Theorem 6.8]{shalevshwartz2014understanding}).
%
\begin{theorem}
\label{thm:vc+pac}
There exists a universal constant $c > 0$ such that for any family $\Hs$ of measurable sets $E \subset \R^d$ of VC-dimension $d < \infty$, any probability distribution $\mathcal{D}$ on $\R^d$, and any $\epsilon,\delta \in \left(0, 1\right)$, if $S$ is a sample of $m \ge c \frac{d\ln(1/\epsilon)+\ln(1/\delta)}{\epsilon}$ points drawn i.i.d.\ from $\mathcal{D}$, then for any $E^*\in\Hs$ we have:
\begin{align*}
    \mathcal{D}\big(E\SymDif E^*\big) \le \epsilon
\qquad\text{and}\qquad
    \mathcal{D}\big(E'\setminus E^*\big) \le \epsilon
\end{align*}
with probability at least $1-\delta$ with respect to the random draw of $S$, where $E$ is any element of $\Hs$ such that $E \cap S = E^* \cap S$, and $E'$ is any element of $\Hs$ such that $E^* \cap S \subseteq E' \cap S$.
\end{theorem}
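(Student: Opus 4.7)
The plan is to derive both inequalities as specializations of the classical VC uniform-convergence bound for the realizable case, instantiated for two different derived set classes.

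Recall the baseline fact I would take as given: for any family $\mathcal{F}$ of measurable subsets of $\R^d$ with $\mathrm{VCdim}(\mathcal{F}) = d_0$, and for $m \ge c_0 \bigl(d_0\ln(1/\epsilon) + \ln(1/\delta)\bigr)/\epsilon$, with probability at least $1-\delta$ over an i.i.d.\ sample $S$ of size $m$ drawn from $\mathcal{D}$, every $F \in \mathcal{F}$ with $F \cap S = \emptyset$ satisfies $\mathcal{D}(F) \le \epsilon$. This is the standard Blumer--Ehrenfeucht--Haussler--Warmuth result, proved via ghost-sample symmetrization, Sauer's lemma (bounding the growth function by $(em/d_0)^{d_0}$), and the observation that a single $\epsilon$-bad set misses a fresh sample of size $m/2$ with probability at most $(1-\epsilon)^{m/2}$.

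I would apply this bound to two derived classes. For part 1, let $\mathcal{F}_\triangle = \{H \triangle E^* : H \in \mathcal{H}\}$: the hypothesis $E \cap S = E^* \cap S$ is exactly $(E \triangle E^*) \cap S = \emptyset$, so uniform convergence on $\mathcal{F}_\triangle$ yields $\mathcal{D}(E \triangle E^*) \le \epsilon$. For part 2, let $\mathcal{F}_\setminus = \{H \setminus E^* : H \in \mathcal{H}\}$: the one-sided inclusion on the sample translates to the relevant element of $\mathcal{F}_\setminus$ having empty intersection with $S$, hence $\mathcal{D}(E' \setminus E^*) \le \epsilon$. A union bound over the two events, taking $\delta/2$ each and absorbing the factor into the constant $c$, gives the claim.

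The only non-routine content is the VC-dimension bookkeeping for the derived classes. For $\mathcal{F}_\triangle$: if a set $T$ is shattered by $\mathcal{F}_\triangle$, then for each $B \subseteq T$ some $H \in \mathcal{H}$ satisfies $H \cap T = B \triangle (E^* \cap T)$; since $B \mapsto B \triangle (E^* \cap T)$ is a bijection on the power set of $T$, $\mathcal{H}$ also shatters $T$, so $\mathrm{VCdim}(\mathcal{F}_\triangle) \le d$. For $\mathcal{F}_\setminus$: any shattered $T$ must lie in the complement of $E^*$ (no set of the form $H \setminus E^*$ can include a point of $E^*$), and on such $T$ the restrictions of $\mathcal{H}$ and $\mathcal{F}_\setminus$ agree, so $\mathrm{VCdim}(\mathcal{F}_\setminus) \le d$ as well.

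The main obstacle, such as it is, is simply matching the one-sided sample condition in part 2 with the correct ``empty-intersection'' form required by the baseline bound; once that bookkeeping is done, the rest is an off-the-shelf application of classical PAC machinery and carries no surprises.
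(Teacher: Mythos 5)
Your proof takes essentially the same route as the paper's: both reduce to the classical realizable-case VC bound applied to the two derived families $\{H \triangle E^* : H\in\Hs\}$ and $\{H\setminus E^* : H\in\Hs\}$, and then bound the complexity of each. The only difference is in the bookkeeping: you bound the VC dimension of each derived family directly by a shattering argument (the $B\mapsto B\triangle(E^*\cap T)$ bijection for the first, the ``shattered set must avoid $E^*$'' observation for the second), whereas the paper bounds the \emph{growth function} via an injectivity-of-restrictions argument ($(F\setminus E^*)\cap S\neq(G\setminus E^*)\cap S\Rightarrow F\cap S\neq G\cap S$). These are interchangeable here, since the realizable-case bound you invoke is really driven by the growth function; your shattering arguments are correct and arguably more explicit.

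One cautionary note, which applies equally to the paper and to your write-up: the bridging step ``$E^*\cap S\subseteq E'\cap S$ translates to $(E'\setminus E^*)\cap S=\emptyset$'' is not correct as stated. The hypothesis $E^*\cap S\subseteq E'\cap S$ gives $(E^*\setminus E')\cap S=\emptyset$, not $(E'\setminus E^*)\cap S=\emptyset$. This appears to be a direction typo in the theorem statement (the way it is used in Lemma~\ref{lem:draw} requires a bound on $\mathcal{D}(E^*\setminus E')$, the ``missed mass'' of $E^*$), and your argument inherits it faithfully: with the hypothesis as written, the derived family that becomes realizable is $\{E^*\setminus H : H\in\Hs\}$, whose VC dimension is bounded by the exact same shattering argument you give (shattered sets must lie inside $E^*$ rather than inside its complement). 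So the fix is cosmetic, but worth being aware that the ``off-the-shelf'' match between the sample condition and the empty-intersection form is not quite as stated.
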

The first inequality is the classical PAC bound for the zero-one loss, which uses the fact that the VC dimension of $\theset{E \SymDif E^*}{E\in\Hs}$ is the same as the VC dimension of $\Hs$. The second inequality follows immediately from the same proof by noting that, for any $E^*\in\Hs$ the VC dimension of $\theset{E\setminus E^*}{E\in\Hyp}$ is not larger than the VC dimension of $\Hs$ because, for any sample $S$ and for any $F,G\in\Hs$, $(F\setminus E^*) \cap S \neq (G \setminus E^*) \cap S$ implies $F \cap S \neq G \cap S$.

\subsection{Concentration bounds}
We recall standard concentration bounds for non-positively correlated binary random variables, see~\cite{Dubhashi&2009}.
Let $X_1,\ldots,X_n$ be binary random variables. We say that $X_1,\ldots,X_n$ are non-positively correlated if for all $I \subseteq \{1,\ldots,n\}$ we have:
\begin{align}
\Pr\big(\forall i \in I: X_i=0\big) \leq \prod_{i \in I} \Pr(X_i=0) \quad \text{and} \quad
\Pr\big(\forall i \in I: X_i=1\big) \leq \prod_{i \in I} \Pr(X_i=1)
\end{align}
\begin{lemma}[Chernoff bounds]
\label{lem:chernoff}
Let $X_1,\ldots,X_n$ be non-positively correlated binary random variables. Let $a_1,\ldots,a_n \in [0,1]$ and $X=\sum_{i=1}^{n}a_iX_i$. Then, for any $\epsilon > 0$, we have:
\begin{align}
\Pr\big(X < (1-\epsilon)\E[X]\big) &< e^{-\frac{\epsilon^2}{2}\E[X]} \\
\Pr\big(X > (1+\epsilon)\E[X]\big) &< e^{-\frac{\epsilon^2}{2+\epsilon}\E[X]} 
\end{align}
\end{lemma}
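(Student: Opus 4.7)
The plan is to follow the classical Chernoff scheme: bound $\Pr(X>(1+\ve)\E[X])$ via Markov on the exponential moment $e^{tX}$ with $t>0$, factorize the joint moment generating function into a product of marginal MGFs, bound each marginal using convexity of $a\mapsto e^{ta}$ on $[0,1]$, and then optimize over $t$. The lower-tail bound will follow by applying the same argument to the complementary variables $Y_i = 1-X_i$.

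The main obstacle is the factorization step, which is trivial under independence but more delicate under mere non-positive correlation. To handle it, I would exploit that each $X_i$ is binary to write $e^{ta_iX_i} = 1+(e^{ta_i}-1)X_i$, and then expand the product:
\begin{align*}
\E\Bigl[\prod_{i=1}^n\bigl(1+(e^{ta_i}-1)X_i\bigr)\Bigr]
 = \sum_{I\subseteq[n]}\Bigl(\prod_{i\in I}(e^{ta_i}-1)\Bigr)\Pr\bigl(\forall i\in I:\, X_i=1\bigr).
\end{align*}
For $t>0$ every coefficient $\prod_{i\in I}(e^{ta_i}-1)$ is non-negative, so the non-positive correlation hypothesis immediately gives $\Pr(\forall i\in I:\, X_i=1)\le\prod_{i\in I}p_i$ with $p_i=\Pr(X_i=1)$. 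Reassembling the sum then produces $\E\bigl[\prod_i e^{ta_iX_i}\bigr]\le\prod_i\E[e^{ta_iX_i}]$, exactly as in the independent case.

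With the product bound in hand, each marginal factor satisfies $\E[e^{ta_iX_i}] = 1+p_i(e^{ta_i}-1)$, and the fact that $a_i\in[0,1]$ together with convexity of $a\mapsto e^{ta}$ gives $e^{ta_i}-1\le a_i(e^t-1)$. Hence $\E[e^{ta_iX_i}]\le\exp\bigl(p_ia_i(e^t-1)\bigr)$, and multiplying yields $\E[e^{tX}]\le\exp\bigl((e^t-1)\E[X]\bigr)$. Plugging into Markov and choosing $t=\ln(1+\ve)$ produces the standard bound $\bigl(e^\ve/(1+\ve)^{1+\ve}\bigr)^{\E[X]}$; the inequality $\ln(1+\ve)\ge 2\ve/(2+\ve)$ then gives the stated $e^{-\ve^2\E[X]/(2+\ve)}$.

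For the lower tail, note that the second part of the hypothesis, $\Pr(\forall i\in I:\, X_i=0)\le\prod_{i\in I}\Pr(X_i=0)$, is exactly the statement that $Y_i=1-X_i$ are non-positively correlated in the upper-tail sense. Since $\sum_i a_iX_i<(1-\ve)\E[X]$ rearranges to $\sum_i a_iY_i>\E\bigl[\sum_i a_iY_i\bigr]+\ve\,\E[X]$, the upper-tail machinery applied to $\sum_i a_iY_i$ delivers an analogous bound; using $\ln(1-\ve)\le -\ve-\ve^2/2$ in the optimization step then gives the stated $e^{-\ve^2\E[X]/2}$. Once the subset-expansion trick is established, the remaining steps are identical to the textbook Chernoff derivation for independent variables.
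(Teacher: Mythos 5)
The paper does not prove this lemma at all: it recalls it as a standard fact with a citation to Dubhashi and Panconesi, so there is no in-paper proof to compare against. Your derivation is a self-contained version of the standard argument for negatively correlated (or negatively associated) Bernoulli variables, and the central idea is exactly right: because each $X_i$ is binary you can write $e^{ta_iX_i}=1+(e^{ta_i}-1)X_i$, expand the product over subsets $I\subseteq[n]$, observe that for $t>0$ every coefficient $\prod_{i\in I}(e^{ta_i}-1)$ is nonnegative, and then use $\Pr(\forall i\in I: X_i=1)\le\prod_{i\in I}p_i$ term by term to recover the product-MGF bound $\E[e^{tX}]\le\prod_i\E[e^{ta_iX_i}]$. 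Combined with the convexity bound $e^{ta_i}-1\le a_i(e^t-1)$ and the choice $t=\ln(1+\ve)$, this gives the upper-tail bound cleanly; the estimate $\ln(1+\ve)\ge 2\ve/(2+\ve)$ you invoke is correct and sufficient there.

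For the lower tail your plan needs a little more care than the sketch suggests. The idea of passing to $Y_i=1-X_i$ is the right one, since the second half of the hypothesis says precisely that the $Y_i$ are non-positively correlated in the sense needed for the subset expansion. But "applying the upper-tail machinery to $Y=\sum_i a_iY_i$" literally (i.e., invoking the upper-tail bound around $\E[Y]$) produces an exponent governed by $\E[Y]=\sum_i a_i-\E[X]$, and translating that into $e^{-\ve^2\E[X]/2}$ does not go through without additional assumptions relating $\E[X]$ to $\sum_i a_i$. The correct use of the $Y_i$ trick is more modest: write $e^{-tX}=e^{-t\sum_i a_i}\prod_i e^{ta_iY_i}$, run the subset expansion on the $Y_i$ (valid because $e^{ta_i}-1\ge 0$ for $t>0$) to obtain $\E[e^{-tX}]\le\prod_i\E[e^{-ta_iX_i}]$, and then finish by the \emph{direct} lower-tail Chernoff argument for $X$. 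Finally, the inequality you cite to close the optimization, $\ln(1-\ve)\le-\ve-\ve^2/2$, points in the wrong direction: after optimizing at $t=-\ln(1-\ve)$ the exponent is $-\E[X]\big(\ve+(1-\ve)\ln(1-\ve)\big)$, and you need the \emph{lower} bound $\ve+(1-\ve)\ln(1-\ve)\ge\ve^2/2$ (which follows, e.g., from $-\ln(1-\ve)\ge\ve$ after differentiating), or more simply take the suboptimal $t=\ve$ and use $e^{-\ve}\le 1-\ve+\ve^2/2$. With these two repairs the proof is complete and matches the textbook treatment the paper is pointing to.
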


\subsection{Yao's minimax principle}
\label{sub:yao}
We recall Yao's minimax principle for Monte Carlo algorithms.
Let $\mathcal{A}$ be a finite family of deterministic algorithms and $\mathcal{I}$ a finite family of problem instances.
Fix any two distributions $\bp$ over $\mathcal{I}$ and $\bq$ over  $\mathcal{A}$, and any $\delta \in [0,\nicefrac{1}{2}]$.
Let $\min_{A \in \mathcal{A}} \E_{I \sim \bp}[C_{\delta}(I,A)]$ be the minimum, over every algorithm $A$ that fails with probability at most $\delta$ over the input distribution $\bp$, of the expect cost of $A$ over the input distribution itself.
Similarly, let $\max_{I \in \mathcal{I}} \E_{A \sim \bq}[C_{\delta}(I,A)]$ be the expected cost of the randomized algorithm defined by $\bq$ under its worst input from $\mathcal{I}$, assuming it fails with probability at most $\delta$.
Then (see~\cite{MR95}, Proposition 2.6):
\begin{align}
    \max_{I \in \mathcal{I}} \E_{\bq}[C_{\delta}(I,A)] \ge \frac{1}{2}\min_{A \in \mathcal{A}} \E_{\bp}[C_{2\delta}(I,A)]
\end{align}

 
\section{Supplementary material for Section~\ref{sec:single}}

\subsection{Monochromatic Tessellation}
\label{sub:proof:R}
We give a formal version of the claim about the monochromatic tessellation of Section~\ref{sec:single}:
\begin{theorem}
\label{thm:R}
Suppose we are given an ellipsoid $\EL$ such that $\frac{1}{d \strtch}\EL \subset \conv(S_C) \subset E$ for some stretch factor $\strtch > 0$.
Then for a suitable choice of $\beta_i,\rho,b$, the tessellation $\mathcal{R}$ of the positive orthant of $E$ (Definition~\ref{def:R}) satisfies:
\begin{enumerate}[label={(\arabic*)},topsep=0pt,parsep=0pt,itemsep=1pt]
\item $|\mathcal{R}| \le \max\big\{1,O\big(\frac{d \strtch}{\gamma}\ln\!\frac{d \strtch}{\gamma}\big)^d\big\}$
\item $E \cap \R_+^d \subseteq \cup_{R \in \mathcal{R}}R$ \phantom{$\frac{d^3}{\gamma^4}$}
\item for every $R \in \mathcal{R}$, the set $R \cap \ELOUT$ is monochromatic \phantom{$\frac{d^3}{\gamma^4}$}
\end{enumerate}
\end{theorem}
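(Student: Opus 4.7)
I would work in the canonical basis of $\ELOUT$ centred at $\bmu$, so that $\ELOUT$ is axis-aligned with semiaxis lengths $L_1,\ldots,L_d$. Let $W$ and $\bc$ be the PSD matrix and point realising the $\gamma$-margin for $C$, and let $r = \max_{\bx \in C} d_W(\bx, \bc)$, so that every $\by \in X \setminus C$ satisfies $d_W(\by, \bc) > \sqrt{1+\gamma}\,r$. The $W$-ball $B_W(\bc, r)$ is convex and contains $S_C$, hence it contains $\conv(S_C)$, and the rounding hypothesis then yields
\[
\tfrac{1}{d\strtch}\ELOUT \;\subseteq\; \conv(S_C) \;\subseteq\; B_W(\bc, r).
\]
I would set $\beta_i = c_1 \gamma L_i/(d\strtch)$, $\rho = 1 + c_2 \gamma/(d\strtch)$, and $b = \lceil \log_\rho(L_i/\beta_i)\rceil$ for universal constants $c_1,c_2 > 0$ fixed at the end. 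This gives $b = O\big((d\strtch/\gamma)\log(d\strtch/\gamma)\big)$, from which property~(1) is immediate since $|\mathcal{R}| = (b+1)^d$. Property~(2) follows because any $\bx \in \ELOUT \cap \R_+^d$ satisfies $x_i \in [0, L_i]$, and by construction $\beta_i \rho^b \ge L_i$, so $[0, L_i] \subseteq \bigcup T_i$.

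\textbf{From the $\ELOUT$-basis to the $W$-metric.} The core of property~(3) is to convert Euclidean coordinate-wise proximity in the $\ELOUT$-basis into $W$-proximity. I would extract two facts from the inclusion above: (i) $\bmu$ lies in the $W$-ball, hence $d_W(\bmu, \bc) \le r$; and (ii) every vector $\bv$ with $d_M(\bv,0) \le 1$, where $M$ is the shape matrix of $\ELOUT$, satisfies $d_W(\bv, 0) \le d\strtch\, r$. Fact~(ii) is obtained by applying the inclusion to the two points $\bmu \pm \bv/(d\strtch)$ and summing the inequalities $\|W^{1/2}((\bmu \pm \bv/(d\strtch)) - \bc)\|^2 \le r^2$: the cross terms cancel and one is left with $\|W^{1/2}\bv\|^2/(d\strtch)^2 \le r^2$; equivalently, $\lambda_{\max}(M^{-1/2}WM^{-1/2}) \le (d\strtch\, r)^2$. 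As a consequence, for any $\bx,\by \in \ELOUT$ with $|x_i - y_i| \le \epsilon L_i$ for all $i$, I get $d_M(\bx-\by,0)^2 = \sum_i (x_i-y_i)^2/L_i^2 \le d\,\epsilon^2$, and therefore $d_W(\bx,\by) \le d^{3/2}\strtch\, r\, \epsilon$.

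\textbf{Monochromaticity.} I would argue by contradiction: suppose some $R \in \mathcal{R}$ contains $\bx \in C$ and $\by \in X \setminus C$. In Case~A, $R$ is the innermost hyperrectangle $\prod_i [0,\beta_i]$; choosing $c_1$ small enough yields $\sum_i (\beta_i/L_i)^2 \le 1/(d\strtch)^2$, so $R \subseteq \tfrac{1}{d\strtch}\ELOUT \subseteq B_W(\bc, r)$, which contradicts $d_W(\by,\bc) > \sqrt{1+\gamma}\,r$. In Case~B, every pair of points in $R$ has coordinate-wise difference at most $(\rho-1)L_i$, so the estimate above gives $d_W(\bx,\by) \le d^{3/2}\strtch r(\rho-1) = c_2\sqrt{d}\,\gamma r$. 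Meanwhile the margin forces
\[
d_W(\bx, \by) \;\ge\; d_W(\by, \bc) - d_W(\bx, \bc) \;>\; (\sqrt{1+\gamma}-1)\, r \;\ge\; \tfrac{\gamma}{4}\, r
\]
for $\gamma \le 1$, and choosing $c_2$ sufficiently small closes the contradiction.

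\textbf{Main obstacle.} The principal subtlety will be the metric-conversion step~(ii) and the coordinated tuning of $c_1$ and $c_2$ so that Cases~A and~B close under a single choice. A secondary point is that the bound in step~(ii) loses an extra $\sqrt{d}$ factor, so the parameters as stated above actually yield $|\mathcal{R}| = O((d^{3/2}\strtch/\gamma)\log(d\strtch/\gamma))^d$. Recovering the sharper base $d\strtch/\gamma$ claimed in the statement requires either hiding this $\sqrt d$ in the $O(\cdot)$ constant, or a finer use of the spectral information of $M^{-1/2}WM^{-1/2}$: in the tessellation, the coordinates of $\bx - \by$ cannot all simultaneously saturate at $(\rho-1)L_i$, and a more careful accounting of which coordinates can saturate at once should absorb the $\sqrt d$. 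The structural argument is otherwise routine.
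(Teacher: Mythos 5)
Your outline for properties (1) and (2), and the structural template for (3) --- the $W$-to-$M$ metric comparison by symmetrization, and the contradiction with the margin --- all match the paper's argument. But the Case~B estimate has a real gap that you correctly sense but do not close, and as a result the ``universal constants $c_1,c_2$'' claim is false as stated. With the absolute bound $|x_i-y_i|\le(\rho-1)L_i$ you obtain, writing $\lambda_i = (d\strtch)^2/L_i^2$ for the eigenvalues of the shape matrix of $\tfrac{1}{d\strtch}\ELOUT$,
\begin{align}
\sum_i \lambda_i (x_i-y_i)^2 \;\le\; (\rho-1)^2 \sum_i \lambda_i L_i^2 \;=\; (\rho-1)^2\, d\,(d\strtch)^2,
\end{align}
and to dominate this by $\gamma^2$ one is forced into $\rho-1 = \Theta\big(\gamma/(d^{3/2}\strtch)\big)$, hence $b=\Theta\big((d^{3/2}\strtch/\gamma)\ln(\cdot)\big)$ and $|\mathcal{R}|=\big(d^{3/2}\strtch/\gamma\big)^d$ up to logs. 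There is no way to ``hide $\sqrt d$ in the $O(\cdot)$''; it must be removed.

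The fix the paper uses is not ``Case A: innermost box, Case B: everything else'' but a \emph{coordinate-wise} split: for a fixed hyperrectangle $R$ let $D=\{i : 0\notin R_i\}$. For $i\in D$ the side $R_i=(\beta_i\rho^{j-1},\beta_i\rho^{j}]$ gives the \emph{multiplicative} bound $|x_i-y_i| < (\rho-1)\beta_i\rho^{j-1} < (\rho-1)|x_i|$, so
\begin{align}
\sum_{i\in D}\lambda_i(x_i-y_i)^2 \;<\; (\rho-1)^2 \sum_{i\in D}\lambda_i x_i^2 \;\le\; (\rho-1)^2\, d_M(\bx,\bmu)^2 \;\le\; (\rho-1)^2 (d\strtch)^2,
\end{align}
using $\bx\in\ELOUT$ in the last step; this is the step where no extra factor of $d$ appears, so $\rho-1=\Theta(\gamma/(d\strtch))$ suffices. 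For $i\notin D$ one keeps the absolute bound $|x_i-y_i|\le\beta_i$, and since $\lambda_i\beta_i^2 = O(\gamma^2/d)$ summing those $d$ terms still lands at $O(\gamma^2)$; the small $\beta_i$ (which does carry an extra $\sqrt d$) only enters $b$ through $\ln(L_i/\beta_i)$, where it changes nothing at the level of the claimed bound. So your ``cannot all simultaneously saturate'' intuition is morally right, but the precise mechanism is the relative per-coordinate bound combined with $d_M(\bx,\bmu)\le d\strtch$, not a counting argument about which coordinates saturate.
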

In order to prove Theorem~\ref{thm:R}, we define the tessellation and prove properties (1-3) for $\gamma \le \nicefrac{1}{2}$.
For $\gamma > \frac{1}{2}$ the tessellation is defined as for $\gamma=\frac{1}{2}$, and one can check all properties still hold.
In the proof we use a constant $c=\sqrt{5}$ and assume $\gamma < c^2-2c$, which is satisfied since $c^2-2c = 5 - 2\sqrt{5} > \nicefrac{1}{2}$.

First of all, we define the intervals $T_i$.
The base $i$-th coordinate is:
\begin{align}
\beta_i = \frac{\gamma}{c\sqrt{2d}} \frac{L_i}{\strtch d}
\label{eqn:betai}
\end{align}
Note that, for all $i$,
\begin{align}
\frac{L_i}{\beta_i} = \frac{ \strtch c d\sqrt{2 d}}{\gamma}
\label{eq:Libi}
\end{align}
Define:
\begin{align}
\alpha = \frac{\gamma}{c\sqrt{2} \strtch d}
\end{align}
and let:
\begin{align}
b = \max\left(0,\left\lceil \log_{1+\alpha}\Big( \frac{ c \strtch d\sqrt{2d}}{\gamma}
\Big)\right\rceil\right)
\label{eqn:b}
\end{align}
(The parameter $\rho$ of the informal description of Section~\ref{sec:single} is exactly $1+\alpha$).
Finally, define the interval set along the $i$-th axis as:
\begin{align}
T_i =
\left\{\begin{array}{lc}
\big\{\big[0, \beta_i \big]\big\}     &  \text{if } b=0\\[10pt]
   \big\{
\big[0, \beta_i \big],
\big(\beta_i, \beta_i (1+\alpha)\big],
\ldots,
\big( \beta_i (1+\alpha)^{b-1}, \beta_i (1+\alpha)^{b} \big]
\big\}
  & \text{if } b\ge 1
\end{array}
\right.
\label{eqn:Ti}
\end{align}

\textbf{Proof of (1).}
By construction, $|T_i|=b+1$.
Thus, $|\mathcal{R}|=\prod_{i \in [d]}|T_i|=(b+1)^d$.
Thus, if $b = 0$ then $|\mathcal{R}|=1$, else by~\eqref{eqn:b} and~\ref{eq:Libi},
\begin{align}
    b &= \left\lceil \frac{\ln\!\big( \frac{c \strtch d\sqrt{2d}}{\gamma} \big)}{\ln(1+\alpha)}\right\rceil 
    \\&\le \left\lceil \frac{2}{\alpha}\ln\!\Big( \frac{c \strtch d\sqrt{2d}}{\gamma} \Big)\right\rceil &&\text{since $\ln(1+\alpha) \ge \nicefrac{\alpha}{2}$ as $\alpha \le 1$}
    \\&= \left\lceil \frac{2\sqrt{2}c\strtch d}{\gamma}\ln \frac{c \strtch d\sqrt{2d}}{\gamma}\right\rceil && \text{definition of $\alpha$}
    \\&= O\!\left(\frac{d \strtch}{\gamma}\ln\frac{d \strtch}{\gamma}\right) && \text{since $d \strtch \ge 1, \gamma \le \nicefrac{1}{2}$}
\end{align}
in which case $|\mathcal{R}|=O\big(\frac{d \strtch}{\gamma}\ln\frac{d \strtch}{\gamma}\big)^d$.
Taking the maximum over the two cases proves the claim.

\textbf{Proof of (2).} We show for any $\bx \in \ELOUT \cap \R_+^d$ there exists $R \in \mathcal{R}$ containing $\bx$.
Clearly, if $\bx \in \ELOUT \cap \R_+^d$, then $\dotp{\bx,\bu_i} \in [0, L_i]$ for all $i \in [d]$.
But $T_i$ covers, along the $i$-th direction $\bu_i$, the interval from $0$ to
\begin{align}
\beta_i (1+\alpha)^{b}
= \beta_i (1+\alpha)^{\max(0,\lceil \log_{1+\alpha} (\nicefrac{L_i}{\beta_i}) \rceil)}
\ge \beta_i(1+\alpha)^{\lceil \log_{1+\alpha} (\nicefrac{L_i}{\beta_i}) \rceil} \ge L_i
\end{align}
Therefore some $R \in \mathcal{R}$ contains $\bx$.

\textbf{Proof of (3).}
Given any hyperrectangle $R \in \mathcal{R}$, we show that the existence of $\bx, \by \in R \cap \ELOUT$ with $\bx \in C$ and $\by \notin C$ leads to a contradiction.
For the sake of the analysis we conventionally set the origin at the center $\bmu$ of $\EL$, i.e.\ we assume $\bmu=\orig$.

We define $\ELIN=\frac{1}{\strtch d}\ELOUT$ and let $M = U\Lambda U^{\top}$ be its PSD matrix, where $U = \big[\bu_1,\ldots,\bu_d]$ and $\Lambda=\diag(\lambda_1,\ldots,\lambda_d)$.
Note that $\lambda_i = \frac{1}{\ell_i^2} = \frac{\strtch^2 d^2}{L_i^2}$ where $\ell_i=\frac{L_i}{\strtch d}$ is the length of the $i$-th semiaxis of $\ELIN$.
For any $R \in \mathcal{R}$, let $R_i$ be the projection of $R$ on $\bu_i$ (i.e.\ $R_i$ is one of the intervals of $T_i$ defined in~\eqref{eqn:Ti}).
Let $D = D(R) = \{ i \in [d] : 0 \notin R_i\}$.
We let $U_{D}$ and $U_{\neg D}$ be the matrices obtained by zeroing out the columns of $U$ corresponding to the indices in $[d]\setminus D$ and $D$, respectively.
Observe that if $\bx,\by \in R \cap \ELOUT$ then:
\begin{alignat}{2}
\dotp{\bx-\by,\bu_i}^2 &< \alpha^2 \dotp{\bx,\bu_i}^2 \quad && \forall i \in D \label{eqn:Rbound1}
\\
\dotp{\bx-\by,\bu_i}^2 &\le \beta_i^2 \quad && \forall i \notin D \label{eqn:Rbound2}
\end{alignat}
Now suppose $C$ has margin at least $\gamma$ for some $\gamma \in (0, c^2-2c]$, and suppose $\bx,\by \in R \cap E$ with $\bx \in C$ and $\by \notin C$.
Through a set of ancillary lemmata proven below, this leads to the absurd:
\begin{align}
\frac{\gamma^2}{c^2} &< d_{W}(\by,\bx)^2 &&\text{Lemma~\ref{lem:ancil1}}
\\& \le d_{M}(\by,\bx)^2 &&\text{Lemma~\ref{lem:WprecM}}
\\& < \alpha^2 d_{M}(\bx,\bmu)^2 + \frac{\gamma^2}{2c^2} &&\text{Lemma~\ref{lem:ancil2}}
\\& \le \frac{\gamma^2}{2c^2} + \frac{\gamma^2}{2c^2} &&\text{Lemma~\ref{lem:ancil3}}
\end{align}
In the rest of the proof we prove the four lemmata.
\begin{lemma}
\label{lem:ancil1}
$\frac{\gamma}{c} < d_{W}(\by,\bx)$.
\end{lemma}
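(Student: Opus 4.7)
My plan is to derive the bound directly from the $\gamma$-margin condition combined with the reverse triangle inequality for the pseudometric $d_W$, after fixing a convenient normalization of the margin witness $W$.

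I would first invoke Definition~\ref{def:smp} to obtain the margin witness $(W, \bc)$ for $C$. Since the strict inequality $d_W(\by',\bc) > \sqrt{1+\gamma}\, d_W(\bx',\bc)$ is homogeneous of degree one in $W$, I may rescale $W$ by a positive constant so that $\sup_{\bx' \in C} d_W(\bx',\bc) = 1$. Because $C \subseteq X$ is finite, the supremum is attained at some $\bx^* \in C$, and applying the margin inequality with this $\bx^*$ gives $d_W(\by',\bc) > \sqrt{1+\gamma}$ for every $\by' \notin C$. In particular the specific $\bx \in C$ and $\by \notin C$ appearing in the lemma satisfy $d_W(\bx,\bc) \le 1$ and $d_W(\by,\bc) > \sqrt{1+\gamma}$, so the reverse triangle inequality for $d_W$ yields
\[
d_W(\by,\bx) \;\ge\; d_W(\by,\bc) - d_W(\bx,\bc) \;>\; \sqrt{1+\gamma} - 1.
\]

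It then remains to verify the elementary inequality $\sqrt{1+\gamma} - 1 \ge \gamma/c$ on the range $\gamma \in (0, c^2 - 2c]$, which is exactly the hypothesis on $\gamma$ stated at the outset of the proof of Theorem~\ref{thm:R}. Both sides are positive, so squaring and dividing through by $\gamma > 0$ reduces the inequality to $\gamma \le c^2 - 2c$, matching the hypothesis (and at the endpoint one obtains equality). Chaining this with the bound above yields $d_W(\by,\bx) > \gamma/c$, as claimed.

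The one subtle point I would need to handle with care is that the entire chain of ancillary lemmata is meant to apply to a \emph{single} margin witness $W$, and Lemma~\ref{lem:WprecM} additionally imposes $W \preceq M$, which is itself a scale constraint on $W$; since there is only one scale degree of freedom in $W$, these two normalizations have to be reconciled. I expect the right move is to fix the normalization once so that $W \preceq M$ holds tightly, and then recover the lower bound $d_W(\bx^*,\bc) \ge 1$ at some witness point $\bx^* \in C$ from the geometry of $\MVE(S_C)$ relative to $\bc$ (using that the MVEE cannot degenerate around $\bc$ because $S_C$ lies strictly inside the margin ball). This compatibility step is where the real work sits; conditional on it, the triangle-plus-margin argument sketched above is routine.
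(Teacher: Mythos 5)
Your proof of the lemma is exactly the paper's: normalize $W$ so that $C$ lies inside the unit $d_W$-ball around the margin witness $\bz$, apply the margin condition to obtain $d_W(\by,\bz)>\sqrt{1+\gamma}$ and $d_W(\bx,\bz)\le 1$, use the reverse triangle inequality to get $d_W(\by,\bx)>\sqrt{1+\gamma}-1$, and verify the elementary inequality $\sqrt{1+\gamma}-1\ge\gamma/c$ on the range $\gamma\le c^2-2c$.

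The worry you raise in the last paragraph is not a real issue, and the alternative you sketch there would lead you astray. The paper fixes a \emph{single} scale for $W$ --- precisely the one you chose, $\sup_{\bx'\in C}d_W(\bx',\bz)\le 1$ --- and then \emph{proves} $W\preceq M$ in Lemma~\ref{lem:WprecM} as a consequence of that choice, not as a second normalization. Concretely: the $d$-rounding/stretch hypothesis gives $\ELIN\subseteq\conv(S_C)\subseteq\conv(C)$, and your normalization places $\conv(C)$ inside a translate of the unit $d_W$-ball; since $\ELIN$ is a translate of the unit $d_M$-ball and both balls are centrally symmetric convex bodies, containment of one translate in another already implies containment of the centered balls (split $A=\tfrac12 A+\tfrac12 A\subseteq\tfrac12(\bv+B)+\tfrac12(-\bv+B)=B$), which is exactly $W\preceq M$. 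So there is no competing normalization to reconcile, and nothing to ``recover from the geometry of $\MVE(S_C)$ relative to $\bc$''; the triangle-plus-margin argument you wrote is already the complete proof.
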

\begin{proof}
Let $\bz$ be the point w.r.t.\ which the margin of $C$ holds.
By the margin assumption,
\begin{align}
d_{W}(\by,\bz) > \sqrt{1+\gamma} \qquad \text{and} \qquad d_{W}(\bx,\bz) \le 1
\end{align}
By the triangle inequality then,
\begin{align}
d_{W}(\by,\bx) \ge d_{W}(\by,\bz) - d_{W}(\bx,\bz) > \sqrt{1+\gamma}-1
\end{align}
One can check that for $\gamma \le c^2 -2c$ we have $1+\gamma \ge (1+\frac{\gamma}{c})^2$.
Therefore
\begin{align}
d_{W}(\by,\bx) > \sqrt{(1 + \nicefrac{\gamma}{c})^2}-1 = \frac{\gamma}{c}
\end{align}
as desired.
\end{proof}
\begin{lemma}
\label{lem:WprecM}
$d_{W}(\cdot) \le d_{M}(\cdot)$.
\end{lemma}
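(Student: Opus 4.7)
The plan is to reduce the seminorm inequality $d_W(\cdot) \le d_M(\cdot)$ to the PSD inequality $W \preceq M$, and then to establish $W \preceq M$ by a two-point averaging argument carried out along arbitrary directions inside the inner ellipsoid $\ELIN$. Note that once $W \preceq M$ is shown, applying it to the vector $\bx - \by$ gives $d_W(\bx,\by)^2 = (\bx-\by)^\top W (\bx-\by) \le (\bx-\by)^\top M (\bx-\by) = d_M(\bx,\by)^2$ for every pair, which is the claim.

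First, I would argue that $\ELIN \subseteq \{\bx \in \R^d : d_W(\bx,\bz) \le 1\}$, where $\bz$ is the margin center used in Lemma~\ref{lem:ancil1}. Under the normalization adopted there, every $\bx \in C$ satisfies $d_W(\bx,\bz) \le 1$, so $C$ is contained in the (possibly degenerate) ellipsoidal ball $B \defeq \{\bx : d_W(\bx,\bz) \le 1\}$. Since $B$ is the sublevel set of the convex function $\bx \mapsto d_W(\bx,\bz)^2$, it is convex, and thus contains $\conv(C) \supseteq \conv(S_C)$. The assumption $\frac{1}{\strtch d}\ELOUT \subseteq \conv(S_C)$ from Theorem~\ref{thm:R} then yields $\ELIN \subseteq B$.

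Next, fix an arbitrary $\bv \in \R^d$ and consider the quadratic $f(t) \defeq d_W(\bmu + t\bv, \bz)^2 = t^2\,\bv^\top W \bv \,+\, 2t\,(\bmu-\bz)^\top W \bv \,+\, (\bmu-\bz)^\top W(\bmu-\bz)$. If $\bv^\top M \bv > 0$, the two points $\bmu \pm \bv/\sqrt{\bv^\top M\bv}$ lie on $\partial\ELIN$, hence in $B$, so $f(\pm 1/\sqrt{\bv^\top M\bv}) \le 1$. Summing these two inequalities cancels the cross term in $t$ and leaves
\begin{align*}
\frac{\bv^\top W\bv}{\bv^\top M\bv} + (\bmu-\bz)^\top W(\bmu-\bz) \le 1.
\end{align*}
Since $W \succeq 0$, the second summand is non-negative, and rearranging gives $\bv^\top W\bv \le \bv^\top M\bv$, as desired.

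The only remaining case is $\bv^\top M\bv = 0$, i.e.\ $\bv \in \ker M$: then the whole line $\{\bmu + t\bv : t \in \R\}$ lies in $\ELIN \subseteq B$, so $f(t) \le 1$ for all $t$, which is impossible unless the leading coefficient $\bv^\top W\bv$ vanishes; thus $\ker M \subseteq \ker W$ and the inequality holds trivially. I do not foresee a serious obstacle; the only slightly delicate point is the degenerate/rank-deficient case, which is handled exactly as above.
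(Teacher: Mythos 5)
Your proof is correct and follows the same overall plan as the paper: show that the unit ball of $d_W(\cdot,\bz)$ contains $\ELIN$ (via the chain $\ELIN\subseteq\conv(S_C)\subseteq\conv(C)\subseteq\{d_W(\cdot,\bz)\le 1\}$), and deduce $W\preceq M$. The one place you go beyond the paper is worth noting: the paper's proof asserts ``the unit ball of $d_W$ contains the unit ball of $d_M$, hence $W\preceq M$'' in a single stroke, but the two balls have different centers ($\bz$ for the $W$-ball, $\bmu$ for $\ELIN$), so the implication is not a one-line matrix fact. Your two-point averaging argument ($\bmu\pm\bv/\sqrt{\bv^\top M\bv}\in\ELIN\subseteq B$, sum to kill the cross term, drop the nonnegative $(\bmu-\bz)^\top W(\bmu-\bz)$) is exactly the computation that makes the paper's implication rigorous, and your handling of $\ker M$ correctly covers the degenerate case. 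So this is essentially the same route, with the missing justification supplied.
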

\begin{proof}
By the assumptions of the theorem, $\ELIN \subseteq \conv_{\bmu}(C)$.
Moreover, by the assumptions on $d_W(\cdot)$, the unit ball of $d_W(\cdot)$ contains $\conv(C)$.
Thus, the unit ball of $d_W(\cdot)$ contains the unit ball of $d_M(\cdot)$.
This implies $W \preceq M$, thus $\norm{W}{\cdot} \le \norm{M}{\cdot}$ and $d_{W}(\cdot) \le d_{M}(\cdot)$.
\end{proof}
\begin{lemma}
\label{lem:ancil2}
$d_{M}(\by,\bx)^2 < \alpha^2 d_{M}(\bx,\bmu)^2 + \frac{\gamma^2}{2c^2}$.
\end{lemma}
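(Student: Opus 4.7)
The plan is to expand $d_M(\by,\bx)^2$ in the eigenbasis $\{\bu_i\}$ of $M$, split the sum according to the index set $D=D(R)$, and bound each part separately using the two geometric bounds (\ref{eqn:Rbound1}) and (\ref{eqn:Rbound2}) that characterize points lying in a common hyperrectangle of the tessellation.

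First I would write
\begin{align*}
d_M(\by,\bx)^2 \;=\; \sum_{i=1}^d \lambda_i \, \langle \by-\bx, \bu_i\rangle^2 \;=\; \sum_{i\in D} \lambda_i \langle \by-\bx,\bu_i\rangle^2 \;+\; \sum_{i\notin D} \lambda_i \langle \by-\bx,\bu_i\rangle^2.
\end{align*}
For the first sum, the inequality (\ref{eqn:Rbound1}) gives $\langle \by-\bx,\bu_i\rangle^2 < \alpha^2 \langle \bx,\bu_i\rangle^2$ for every $i\in D$, so that
\begin{align*}
\sum_{i\in D}\lambda_i\langle \by-\bx,\bu_i\rangle^2 \;<\; \alpha^2 \sum_{i\in D}\lambda_i\langle \bx,\bu_i\rangle^2 \;\le\; \alpha^2 \sum_{i=1}^d \lambda_i\langle \bx-\bmu,\bu_i\rangle^2 \;=\; \alpha^2 d_M(\bx,\bmu)^2,
\end{align*}
where the second step uses $\bmu=\orig$ and the nonnegativity of the omitted terms.

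For the second sum, I would use (\ref{eqn:Rbound2}) to bound $\langle \by-\bx,\bu_i\rangle^2\le \beta_i^2$, and then exploit the exact scaling between $\lambda_i$ and $\beta_i$. Since $\lambda_i = \strtch^2 d^2/L_i^2$ and $\beta_i^2 = \frac{\gamma^2}{2c^2 d}\cdot\frac{L_i^2}{\strtch^2 d^2}$ by (\ref{eqn:betai}), the products telescope to $\lambda_i\beta_i^2 = \frac{\gamma^2}{2c^2 d}$, which is independent of $i$. Summing over $|D^c|\le d$ indices therefore yields
\begin{align*}
\sum_{i\notin D}\lambda_i\langle \by-\bx,\bu_i\rangle^2 \;\le\; |D^c|\cdot \frac{\gamma^2}{2c^2 d}\;\le\;\frac{\gamma^2}{2c^2}.
\end{align*}
Adding the two bounds gives the claim. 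I do not expect any real obstacle here: the entire argument is essentially bookkeeping, and the only nontrivial point is noticing that the definitions of $\beta_i$ and $\lambda_i$ were chosen precisely so that $\lambda_i\beta_i^2$ is the $L_i$-free constant $\gamma^2/(2c^2 d)$, which is what makes the contribution of the ``short'' coordinates $i\notin D$ uniformly controllable by $\gamma^2/(2c^2)$.
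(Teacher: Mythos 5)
Your proof is correct and follows essentially the same decomposition as the paper: split $d_M(\by,\bx)^2$ over the coordinates in $D$ and its complement, bound the $D$-part by $\alpha^2 d_M(\bx,\bmu)^2$ via \eqref{eqn:Rbound1}, and bound the complement by $\gamma^2/(2c^2)$ via \eqref{eqn:Rbound2} together with the identity $\lambda_i\beta_i^2=\gamma^2/(2c^2d)$. The only cosmetic difference is that you write $d_M$ directly in the eigenbasis of $M$, whereas the paper derives the same coordinate formula by first expressing the two parts as seminorms $\norm{U_D U_D^\top}{\cdot}$ and $\norm{U_{\neg D}U_{\neg D}^\top}{\cdot}$ and then unwinding them; the content is the same.
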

\begin{proof}
We decompose $d_{M}(\by,\bx)^2$ along the colspaces of $U_D$ and $U_{\neg D}$:
\begin{align}
d_{M}(\by,\bx)^2
&= \norm{2}{M^{1/2}(\by-\bx)}^2
\\&= \norm{U_{D}U_{D}^{\top}}{M^{1/2}(\by-\bx)}^2 + \norm{U_{\neg D}U_{\neg D}^{\top}}{M^{1/2}(\by-\bx)}^2
\label{eq:d2M}
\end{align}
Next, we bound the two terms of~\eqref{eq:d2M}.
To this end, we need to show that for all $D \subseteq [d]$ and $\bv \in \R^d$:
\begin{align}
    \norm{U_D\tp{U_D}}{M^{1/2} \bv }^2
    = \sum_{i \in D} \lambda_i \dotp{\bv,\bu_i}^2
\end{align}
Let indeed $J_D=\diag(\bOne_{D})$ be the selection matrix corresponding to the indices of $D$.
Then $U_D=U J_D$, and so $\tp{U}U_D = \tp{U} U J_D = J_D$.
This gives:
\begin{align}
\norm{U_D \tp{U_D}}{M^{1/2} \bv }^2 &= \tp{\bv} (U \Lambda^{1/2} \tp{U}) U_D \tp{U_D} (U \Lambda^{1/2} \tp{U})  \bv && \text{definition of $M$ and $\norm{\cdot}{\cdot}$}
\\ &= \tp{\bv} U \Lambda^{1/2} J_D J_D \Lambda^{1/2} \tp{U}  \bv && \text{since $\tp{U}U_D=J_D$}
\\ &= \tp{\bv} U J_D \Lambda^{1/2} \Lambda^{1/2} J_D \tp{U}  \bv && \text{since $\Lambda,J_D$ are diagonal}
\\ &= \tp{\bv} U_D \Lambda \tp{U_D}  \bv &&\text{since $U J_D = U_D$}
\\ &= \norm{\Lambda}{\tp{U_D} \bv}^2 &&\text{by definition}
\\ & = \sum_{i \in D} \lambda_i \dotp{\bv,\bu_i}^2
\label{eqn:normM_to_normL}
\end{align}
Now we can bound the first term of~\eqref{eq:d2M}:
\begin{align}
\norm{U_{D}U_{D}^{\top}}{M^{1/2}(\by-\bx)}^2 
&= \sum_{i \in D} \lambda_i \dotp{\by-\bx,\bu_i}^2 &&\text{by~\eqref{eqn:normM_to_normL}}
\\&< \alpha^2 \sum_{i \in D} \lambda_i \dotp{\bx,\bu_i}^2 && \text{by~\eqref{eqn:Rbound1}}
\\&= \alpha^2 \norm{U_{D}U_{D}^{\top}}{M^{1/2}\bx}^2 &&\text{by~\eqref{eqn:normM_to_normL}}
\\&\le \alpha^2\norm{UU^{\top}}{M^{1/2}\bx}^2
\\&= \alpha^2\norm{2}{M^{1/2}\bx}^2 &&\text{since $UU^{\top}=I$}
\\&= \alpha^2 d_{M}^2(\bx,\bmu) && \text{since $\bmu=\orig$}
\end{align}
And for the second term of~\eqref{eq:d2M}, we have:
\begin{align}
\norm{U_{\neg D}U_{\neg D}^{\top}}{M^{1/2}(\by-\bx)}^2
&= \sum_{i \notin D} \lambda_i \dotp{\by-\bx, \bu_i}^2 && \text{by~\eqref{eqn:normM_to_normL}}
\\&\le \sum_{i \notin D} \lambda_i \beta_i^2 && \text{by \eqref{eqn:Rbound2}}
\\&= \sum_{i \notin D} \frac{\strtch^2 d^2}{L_i^2} \left(\frac{\gamma}{c\sqrt{2d}} \frac{L_i}{\strtch d}\right)^2 && \text{by definition of $\lambda_i$ and $\beta_i$}
\\&= \sum_{i \notin D} \frac{\gamma^2}{2dc^2} 
\\&\le \frac{\gamma^2}{2c^2} 
\end{align}
Summing the bounds on the two terms shows that $d_{M}(\by,\bx)^2 < \alpha^2 d_{M}(\bx,\bmu)^2 + \frac{\gamma^2}{2c^2}$, as claimed.
\end{proof}
\begin{lemma}
\label{lem:ancil3}
$\alpha^2 d_{M}(\bx,\bmu)^2 \le \frac{\gamma^2}{2c^2}$.
\end{lemma}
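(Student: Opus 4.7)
The plan is to show this bound by a direct calculation using two facts: (i) the explicit definition of $\alpha$ given in the proof, and (ii) the fact that $\bx \in \ELOUT$ gives a simple upper bound on $d_M(\bx,\bmu)$ via the scaling relation between $\ELOUT$ and $\ELIN$.

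Specifically, the key observation is that $\ELIN = \frac{1}{\strtch d}\ELOUT$, where $M$ is the PSD matrix associated with $\ELIN$. By definition, the $d_M$-unit ball is exactly $\ELIN$, so $\ELOUT = \strtch d\cdot\{\bz : d_M(\bz,\bmu)\le 1\}$. Hence every $\bx \in \ELOUT$ satisfies
\begin{equation*}
d_M(\bx,\bmu)^2 \le (\strtch d)^2.
\end{equation*}
Combined with $\alpha = \frac{\gamma}{c\sqrt{2}\strtch d}$, this gives
\begin{equation*}
\alpha^2\, d_M(\bx,\bmu)^2 \;\le\; \frac{\gamma^2}{2c^2\strtch^2 d^2}\cdot \strtch^2 d^2 \;=\; \frac{\gamma^2}{2c^2},
\end{equation*}
which is exactly the claim. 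Since $\bx \in R \cap \ELOUT \subseteq \ELOUT$ by hypothesis of the surrounding argument, the bound on $d_M(\bx,\bmu)$ applies.

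I do not anticipate any obstacle here: the lemma is essentially a bookkeeping statement that the constants $\alpha$ and the radius of $\ELOUT$ in the $d_M$ metric have been chosen precisely so that this product hits the desired target $\frac{\gamma^2}{2c^2}$, which is exactly what closes the contradiction chain in the proof of property~(3). The only thing to double-check is the direction of the scaling between $\ELOUT$ and $\ELIN$ and that $M$ is indeed the PSD matrix of $\ELIN$ (not of $\ELOUT$), as defined earlier in the proof --- which is why the factor $\strtch d$ appears and cancels the $\strtch d$ in the denominator of $\alpha$.
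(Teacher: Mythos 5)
Your proof is correct and follows essentially the same route as the paper: use $\ELOUT = \strtch d \cdot \ELIN$ together with $\bx \in \ELOUT$ to conclude $d_M(\bx,\bmu)^2 \le \strtch^2 d^2$, then multiply by $\alpha^2 = \frac{\gamma^2}{2c^2\strtch^2 d^2}$ so the $\strtch^2 d^2$ factors cancel. Your sanity check at the end --- that $M$ is the PSD matrix of $\ELIN$, not $\ELOUT$ --- is exactly the point the paper makes by writing $1 \ge d_M\bigl(\tfrac{1}{\strtch d}\bx,\bmu\bigr)^2 = \tfrac{1}{\strtch^2 d^2}d_M(\bx,\bmu)^2$.
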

\begin{proof}
By construction we have $\bx \in \ELOUT$ and $\ELOUT = \strtch d \cdot \ELIN$.
Therefore $\frac{1}{\strtch d}\bx \in \ELIN$, that is:
\begin{align}
1 \ge d_M\Big(\frac{1}{\strtch d} \bx,\bmu\Big)^2
= \frac{1}{\strtch^2 d^2} d_M(\bx,\bmu)^2
\end{align}
where we used the fact that $d_M(\cdot,\bmu)^2=\norm{M}{\cdot}^2$ since $\bmu=\orig$.
Rearranging terms, this proves that $d_M(\bx,\bmu)^2 \le \strtch^2 d^2$.
Multiplying by $\alpha^2$, we obtain:
\begin{align}
\alpha^2 d_{M}(\bx,\bmu)^2 \le \left(\frac{\gamma}{\sqrt{2} c \strtch d}\right)^2 \strtch^2 d^2 = \frac{\gamma^2}{2c^2 }
\end{align}
as desired.
\end{proof}
The proof of the theorem is complete.

\subsection{Low-stretch separators and proof of Theorem~\ref{thm:stretch}}
\label{apx:stretch}
In this section we show how to compute the separator of Theorem~\ref{thm:stretch}.
In fact, computing the separator is easy; the nontrivial part is Theorem~\ref{thm:stretch} itself, that is, showing that such a separator always exists.

To compute the separator we first compute the MVEE $\MVE = (\Mstar,\bmustar)$ of $S_C$ (see Section~\ref{sec:single}). We then solve the following semidefinite program:
\begin{equation}
\label{eq:sdp}
\begin{aligned}
&\max_{\alpha\in\R,\bmu\in\R^d,M\in\R^{d \times d}} \, \alpha \\
\text{s.t.} \quad 
& M \succeq \alpha \,\Mstar
\\
& \big\langle M,(\bx-\bmu)(\bx-\bmu)^{\top} \big\rangle \le 1 \quad\; \forall \bx \in S_C \\
& \big\langle M,(\by-\bmu)(\by-\bmu)^{\top} \big\rangle > 1 \quad\; \forall \by \in \SCbar
\end{aligned}
\end{equation}
where, for any two symmetric matrices $A$ and $B$, $\langle A,B \rangle = \trace(AB)$ is the usual Frobenius inner product, implying $\langle M,(\bx-\bmu)(\bx-\bmu)^{\top} \rangle = d_M(\bx,\bmu)^2$.
In words, the constraint $M \succeq \alpha \,\Mstar$ says that $\EL$ must fit into $\MVE$ if we scale $\MVE$ by a factor $\strtch=\nicefrac{1}{\sqrt{\alpha}}$.
The other constraints require $\EL$ to contain all of $S_C$ but none of the points of $\SCbar$.
The objective function thus minimizes the stretch $\strtch$ of $\EL$.

In the rest of this paragraph we prove Theorem~\ref{thm:stretch}.

\paragraph{Proof of Theorem~\ref{thm:stretch} (sketch).}
To build the intuition, we first give a proof sketch where the involved quantities are simplified.
The analysis is performed in the latent space $\R^d$ with inner product $\dotp{\bu,\bv} = \bu^{\top}W\bv$.
Setting conventionally $\bz=\orig$, $C$ then lies in the unit ball $\Bo$ and all points of $X \setminus C$ lie outside $\sqrt{1+\gamma}\,\Bo$.
For simplicity we assume $\gamma \ll 1$ so that $\sqrt{1+\gamma} \simeq 1+\gamma$, but we can easily extend the result to any $\gamma>0$.
Now fix the subset $S_C \subseteq C$, and let $\MVE=\MVE(S_C)$ be the MVEE of $S_C$.
Observe the following fact: $\Bo$ trivially satisfies (1), but in general violates (2); in contrast, $\MVE$ trivially satisfies (2), but in general violates (1).
The key idea is thus to ``compare'' $\Bo$ and $\MVE$ and take, loosely speaking, the best of the two.
To see how this works, suppose for instance $\MVE$ has small radius, say less than $\nicefrac{\gamma}{4}$.
In this case, $\EL=\MVE$ yields the thesis.
Indeed, since the center $\bmustar$ of $\MVE$ is in $\Bo$, then any point of $\EL$ is within distance $1+\nicefrac{\gamma}{4} \le  \sqrt{1+\gamma}$ of the center of $\Bo$, and lies inside $ \sqrt{1+\gamma}\,\Bo$.
Thus $\MVE$ separates $S_C$ from $X \setminus C$, satisfying (1).
At the other extreme, suppose $\MVE$ is large, say with all its $d$ semiaxes longer than $\nicefrac{\gamma}{4}$.
In this case, $\EL=\Bo$ yields the thesis: indeed, by hypothesis $\EL$ fits entirely inside $\nicefrac{4}{\gamma}\,\MVE$, satisfying (2).
Unfortunately, the general case is more complex, since $\MVE$ may be large along some axes and small along others.
In this case, both $\Bo$ and $\MVE$ fail to satisfy the properties.
This requires us to choose the axes and the center of $\EL$ more carefully.
We show how to do this with the help of Figure~\ref{fig:balls}.

\begin{figure}[h!]
\centering
\scalebox{1.1}{\begin{tikzpicture}[scale=.8]
\clip(-3.2,-3.2) rectangle (3.2,3.2);
   \draw[ell,black] (0,0) circle [x radius=3, y radius=3]; 
   \draw[line width=2,red,draw opacity=.1] (2,-4) -- (2,4);
   \draw[line width=2,red,draw opacity=.2] (2,-2.25) -- (2,2.25);
   \node[left] (Bo) at (-2.3,2) {\small $\Bo$};
   \node[dot] (z) at (0,0) {};
   \node[right] (ztxt) at (0,0) {$\bz$};
   \node[right] (U) at (2,2.8) {\small $U$};
   \node[left] (B) at (2.05,1.5) {$B$};
   \draw[ell,red] (2,-1.4) circle [x radius=.2, y radius=1.8]; 
   \node[left] (MVE) at (3,-1) {$\MVE$};
   \node[dot,red] (mustar) at (2,-1.4) {};
   \node[left=2pt] (mustartxt) at (mustar) {$\bmustar$};
\end{tikzpicture}}
\hspace*{4em}
\scalebox{1.1}{\begin{tikzpicture}[scale=.8]
\clip(-3.2,-3.2) rectangle (3.2,3.2);
   \draw[ell,black] (0,0) circle [x radius=3, y radius=3]; 
   \draw[line width=2,red,draw opacity=.1] (2,-4) -- (2,4);
   \draw[line width=2,red,draw opacity=.2] (2,-2.25) -- (2,2.25);
   \node[right] (U) at (2,2.8) {\small $U$};
   \node[left] (E) at (1.7,1) {$E$};
   \draw[ell,blue] (2,0) circle [x radius=.4, y radius=2.5];
   \node[dot] (mu) at (2,0) {};
   \node[left=-1.8pt] (mutxt) at (mu) {$\bmu$};
   \node[dot,red] (mustar) at (2,-1.4) {};
\node[left] (Bo) at (-2.3,2) {\small $\Bo$};
   \node[dot] (z) at (0,0) {};
   \node[right] (ztxt) at (0,0) {$\bz$};
\end{tikzpicture}
   }
\hspace*{2em}
\caption{\small
Left: the MVEE $\MVE$ of $S_C$ and the affine subspace $U+\bmustar$ (marked simply as $U$) spanned by its largest semiaxes.
There is no guarantee that $\MVE \subseteq \sqrt{1+\gamma}\,\Bo$.
Right: the separator $\EL$, centered in the center $\bmu$ of $B$, with the largest semiaxis in $U$ and the smallest one in $U_{\bot}$. We can guarantee that $S_C \subset \EL \subset \sqrt{1+\gamma}\,\Bo$.
}
\label{fig:balls}
\end{figure}
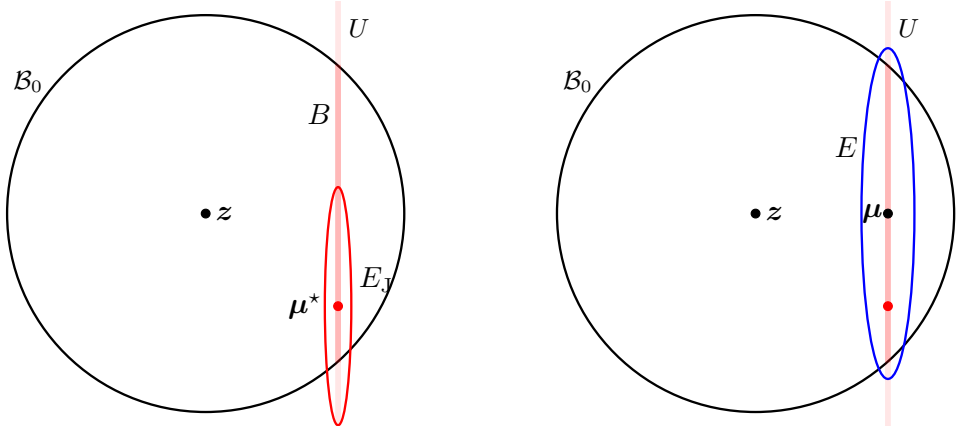
Let $\{\bu_1,\ldots,\bu_d\}$ be the orthonormal basis defined by the semiaxes of $\MVE$ and $\ellstar_1,\ldots,\ellstar_d$ be the corresponding semiaxes lengths.
We define a threshold $\epsilon = \nicefrac{\gamma^3}{d^2}$, and partition $\{\bu_1,\ldots,\bu_d\}$ as
$
A_P = \{i \,:\, \ellstar_i > \epsilon \}
$
and
$
A_Q = \{i \,:\, \ellstar_i \le \epsilon \}
$.
Thus $A_P$ contains the large semiaxes of $\MVE$ and $A_Q$ the small ones.
Let $U, U_{\bot}$ be the subspaces spanned by $\{\bu_i : i \in A_P\}$ and $\{\bu_i : i \in A_Q\}$, respectively.
Consider the subset $B = \Bo \cap (\bmustar + U)$.
Note that $B$ is a ball in at most $d$ dimensions, since it is the intersection of a $d$-dimensional ball and an affine linear subspace of $\R^d$.
Let $\bmu$ and $\ell$ be, respectively, the center and radius of $B$.
We set the center of $\EL$ at $\bmu$, and the lengths $\ell_i$ of its semiaxes as follows:
\begin{align}
\ell_i = \left\{
\begin{array}{cl}
\frac{\ell}{\sqrt{1 - \gamma}} & \text{if } i \in A_P \\[5pt]
\frac{\ellstar_i}{\sqrt{\epsilon}} & \text{if } i \in A_Q
\end{array}
\right.
\end{align}
Loosely speaking, we are ``copying'' the semiaxes from either $\Bo$ or $\MVE$ depending on $\ellstar_i$.
In particular, the large semiaxes (in $A_P$) are set so to contain all of $B$ and exceed it by a little, taking care of not intersecting $\sqrt{1+\gamma}\,\Bo$.
Instead, the small semiaxes (in $A_Q$) are so small that we can safely set them to $1/\sqrt{\ve}$ times those of $\MVE$, so that we add some ``slack'' to include $S_C$ without risking to intersect $\sqrt{1+\gamma}\,\Bo$.
Now we are done, and our low-stretch separator is $(M,\bmu)$ where  $M = \sum_{i=1}^d \!\ell_i^{-2} \bu_i \tp{\bu_i}$.
This the ellipsoid $\EL$ that yields Theorem~\ref{thm:stretch}.
In the next paragraph, we show how we can find efficiently all points in $\EL$ that belong to $C$.

\subsection{Proof of Theorem~\ref{thm:stretch} (full).}
\let\oldMVE\MVE
\renewcommand{\MVE}{E^{\star}}
\label{sub:proof:stretch}
We prove the theorem for $\gamma \le \nicefrac{1}{5}$ and use the fact that whenever $C$ has weak margin $\gamma$ then it also has weak margin $\gamma'$ for all $\gamma' > \gamma$. 
As announced, the analysis is carried out in the latent space $\R^d$ equipped with the inner product $\dotp{\bu,\bv} = \bu^{\top}W\bv$. All norms $\|\bu\|$, distances $d(\bu,\bv)$, and (cosine of) angles $\dotp{\bu,\bv}\big/\big(\|\bu\|\,\|\bv\|\big)$ are computed according to this inner product unless otherwise specified.  
Let $\Bo$ be the unit ball centered at the origin, which we conventionally set at $\bz$, the point in the convex hull of $C$ according to which the margin is computed. Then, by assumption, $C \subset \Bo$, and $\bx \notin \sqrt{1+\gamma}\,\Bo$ for all $\bx \notin C$.
For ease of notation, in this proof be denote the MVEE by $\MVE$ rather than $\oldMVE$.
Let then $(\MVE,\bmustar)$ be the MVEE of $S_C$; note that $\bmustar \in \conv(S_C) \subseteq \Bo$.
We let $\bu_1,\ldots,\bu_d$ be the orthonormal eigenvector basis given by the axes of $\MVE$ and $\lambdastar_1,\ldots,\lambdastar_d$ the corresponding eigenvalues.
Note that if $\min_i \lambdastar_i \ge \nicefrac{5}{\gamma^2}$ then $\MVE$ has radius $\le \nicefrac{\gamma}{\sqrt{5}}$
and thus, since $\bmustar \in \Bo$ and $\gamma\le\nicefrac{1}{5}$, its distance from $\Bo$ is at most $1+\nicefrac{\gamma}{\sqrt{5}} = \sqrt{1+\nicefrac{2\gamma}{\sqrt{5}}+\nicefrac{\gamma^2}{5}} < \sqrt{1+\gamma}$.
In this case we can simply set $\EL=\MVE$ and the thesis is proven.
Thus, from now on we assume $\min_i \lambdastar_i < \nicefrac{5}{\gamma^2}$.
\begin{figure}[h]
\begin{tikzpicture}[scale=.8]
\clip(-3.2,-3.2) rectangle (3.2,3.2);
   \draw[ell,black] (0,0) circle [x radius=3, y radius=3]; 
   \draw[line width=2,red,draw opacity=.1] (2,-4) -- (2,4);
   \draw[line width=2,red,draw opacity=.2] (2,-2.25) -- (2,2.25);
   \node[left] (Bo) at (-2.3,2) {\small $\Bo$};
   \node[dot] (z) at (0,0) {};
   \node[right] (ztxt) at (0,0) {$\bz$};
   \node[right] (U) at (2,2.8) {\small $U$};
   \node[left] (B) at (2.05,1.5) {$B$};
   \draw[ell,red] (2,-1.4) circle [x radius=.2, y radius=1.8]; 
   \node[left] (MVE) at (3,-1) {$\MVE$};
   \node[dot,red] (mustar) at (2,-1.4) {};
   \node[left=2pt] (mustartxt) at (mustar) {$\bmustar$};
\end{tikzpicture}
\hfill
\begin{tikzpicture}[scale=.8]
\clip(-3.2,-3.2) rectangle (3.2,3.2);
   \draw[ell,black] (0,0) circle [x radius=3, y radius=3]; 
   \draw[line width=2,red,draw opacity=.1] (2,-4) -- (2,4);
   \draw[line width=2,red,draw opacity=.2] (2,-2.25) -- (2,2.25);
   \node[right] (U) at (2,2.8) {\small $U$};
   \node[left] (E) at (1.7,1) {$E$};
   \draw[ell,blue] (2,0) circle [x radius=.4, y radius=2.5];
   \node[dot] (mu) at (2,0) {};
   \node[left=-1.8pt] (mutxt) at (mu) {$\bmu$};
   \node[dot,red] (mustar) at (2,-1.4) {};
\node[left] (Bo) at (-2.3,2) {\small $\Bo$};
   \node[dot] (z) at (0,0) {};
   \node[right] (ztxt) at (0,0) {$\bz$};
   \end{tikzpicture}
\hfill
\begin{tikzpicture}[scale=.85]
\clip(4,-0.5) rectangle (7.5,5.5);
   \draw[line width=2,red,draw opacity=.1] (6,-2) -- (6,6);
   \draw[line width=2,red,draw opacity=.2] (6,-2) -- (6,4.35);
   \node[dot] (mu) at (6,0) {};
   \node[right] (mulab) at (mu) {$\bmu$};
   \node[dot] (x) at (5.5,3) {};
   \node[left] (xlab) at (x) {$\bx$};
   \draw[dotted] (x) -- (6, 0 |- x) node {};
   \draw[dotted] (x) -- (x |- 0, 0) node {};
   \draw[black] (x |- 0, 0) -- (6,0) node[midway,below] {$\bq$};
   \draw[black] (6, 0 |- x) -- (mu) node[midway,right] {$\bp$};
   \draw[thick] (-3,0) circle [x radius=10, y radius=10]; 
   \node (orig) at (0,0) {};
   \draw[ell,blue] (mu) circle [x radius=.75, y radius=5]; 
   \node[right] (EL) at (4.6,.7) {$\EL$};
\end{tikzpicture}
\caption{\label{fig:affine_sep}\small Left: the separating ball $\Bo$ of $C$, the MVEE $\MVE$ of $S_C$, and the affine subspace $U+\bmustar$ spanned by its largest semiaxes. Middle: $\EL$ is our separator centered in the center $\bmu$ of the ball $B = U \cap \Bo$. Right: a point $\bx \in S_C$ with its projections onto $U$ and $U_{\bot}$ with respect to the origin, which we conventionally set at $\bmu$ (the center of $\EL$).}
\end{figure}

Now let:
\begin{align}
\epsilon = \frac{\gamma^3}{32 d^2}
\end{align}
and partition (the indices of) the basis $\{\bu_1,\ldots,\bu_d\}$ as follows: 
\begin{align}
A_P &= \{i \,:\, \lambdastar_i < \nicefrac{1}{\epsilon^2} \}, \quad A_Q = [d] \setminus A_P
\end{align}
Since $\min_i \lambdastar_i < \nicefrac{5}{\gamma^2}$ and $\nicefrac{5}{\gamma^2} \le \nicefrac{1}{\ve^2}$, then by construction the set $A_P$ is not empty.
We now define the ellipsoid $\EL$.
Let $U, U_{\bot}$ be the subspaces spanned by $\{\bu_i : i \in A_P\}$ and $\{\bu_i : i \in A_Q\}$ respectively, and let $B = \Bo \cap (\bmustar + U)$. Note that $B$ is a ball, since it is the intersection of a ball and an affine linear subspace.
Let $\bmu$ and $\ell$ be, respectively, the center and radius of $B$ and define
\begin{align}
\lambda_i = \left\{
\begin{array}{ll}
(1 - \sqrt{5\gamma/4})\ell^{-2} & i \in A_P \\[5pt]
\epsilon \lambdastar_i & i \in A_Q
\end{array}
\right.
\qquad
M = \sum_{i=1}^d \lambda_i \bu_i \tp{\bu_i} 
\end{align}
Then our ellipsoidal separator is
$
\EL = \{ \bx \in \R^d \,:\, d_M(\bx,\bmu) \le 1 \}
$. See Figure~\ref{fig:affine_sep} for a pictorial representation. 
We now prove that $\EL$ satisfies: \textbf{(1)} $S_C \subset \EL$,\, \textbf{(2)} $\ELOUT \subseteq \frac{64\sqrt{2}d^2}{\gamma^3} \MVE(S_C)$,\, \textbf{(3)} $\EL \subset \sqrt{1+\gamma}\,\Bo$.

\paragraph{Proof of (1).}
Set the center $\bmu$ of $\EL$ as the origin.
For all $i \in [d]$ let $U_i=\bu_i\tp{\bu_i}$ and define the following matrices:
\begin{alignat}{2}
&P_0=\sum_{i \in A_P} \!U_i, &&Q_0=\sum_{i \in A_Q} \!U_i
\\
&P=\sum_{i \in A_P} \!\lambda_i U_i,
&&Q=\sum_{i \in A_Q} \!\lambda_i U_i
\\
&P_{\star}=\sum_{i \in A_P} \!\lambdastar_i U_i,\quad &&Q_{\star}=\sum_{i \in A_Q}\! \lambdastar_i U_i
\end{alignat}
We want to show that $d_M^2(\bx,\bmu) \le 1$ for all $\bx \in S_C$.
Note that $d_M(\bx,\bmu)^2$ equals (recall that $\bmu = \orig$):
\begin{align}
\tp{\bx} P \bx
+ \tp{\bx} Q \bx
\label{eqn:dxmu}
\end{align}
Let us start with the second term of~\eqref{eqn:dxmu}.
By definition of $Q_{\star}$ and since $\tp{\bmustar} Q_{\star}= \tp{(\bmustar-\bmu)} Q_{\star} = \pmb{0}$ because $\bmustar-\bmu \in U$,
\begin{align}
\tp{\bx} Q \bx
=
\epsilon\, \tp{\bx} Q_{\star} \bx
=
\epsilon\, \tp{(\bx-\bmustar)} Q_{\star} (\bx-\bmustar)
\le \epsilon < \frac{\gamma}{4}
\label{eqn:xQx}
\end{align}
where the penultimate inequality follows from $\bx \in \MVE$.

We turn to the first term of~\eqref{eqn:dxmu}.
If we let $\bp,\bq$ be the projections of $\bx-\bmu=\bx$ onto $U,U_{\bot}$, so that
\begin{align}
\norm{}{\bp}^2 = \tp{\bx} P_0 \bx, \qquad \norm{}{\bq}^2 = \tp{\bx} Q_0 \bx
\end{align}
then by definition of the $\lambda_i$ we have:
\begin{align}
\tp{\bx} P \bx
= \frac{1-\sqrt{5\gamma/4}}{\ell^2} \norm{}{\bp}^2
\end{align}
We can thus focus on bounding $\norm{}{\bp}$.
Since $B$ is a ball of radius $\ell$, then $\norm{}{\bp} \le \ell + d(\bp,B)$, where $d(\bp,B)$ is the distance of $\bp$ from its projection on $B$ ---see Figure~\ref{fig:help2}, left.

\begin{figure}[h]
\hfill
\begin{tikzpicture}[scale=5]
\clip(5.5,4.2) rectangle (6.8,5.1);
   \draw[thick,opacity=.7] (-3,0) circle [x radius=10, y radius=10];
   \node[] (B0) at (5.6,5) {$\Bo$};
   \coordinate (BB) at (6,4.36) {};
   \node[] (EL) at (6.23,4.9) {$\EL$};
   \draw[line width=2,red,draw opacity=.1] (6,-2) -- (6,6);
   \draw[line width=2,red,draw opacity=.2] (6,-2) -- (BB);
   \node[dot] (mu) at (6,0) {};
   \node[right] (mulab) at (mu) {$\bmu$};
   \node[dot] (x) at (5.8,4.7) {};
   \node[above right] (xlab) at (x) {$\bx$};
   \node[dot] (p) at (6, 0 |- x) {};
   \node[above right] (plab) at (p) {$\bp$};
   \draw[dotted] (p) -- ($(p)+(0.05,0)$);
   \draw[dotted] (BB) -- ($(BB)+(.05,0)$);
   \draw[<->] ($(BB)+(.05,0)$) -- ($(p)+(0.05,0)$);
   \node[right] (dpb) at ($(BB)+(.04,0.17)$) {$d(\bp,B)$};
   \draw[<->] (x |- 0, 0 |- BB) -- (BB);
   \node[right=3pt] (q) at ($(x |- 0, 0 |- BB)+(0,-.05)$) {$\norm{}{\bq}$};
   \draw[densely dotted] (x) -- (p);
   \draw[densely dotted] (x) -- (x |- 0, 0) node {};
   \draw[black] (x |- 0, 0) -- (6,0);
   \draw[black] (6, 0 |- x) -- (mu) node[right] {$\bp$};
\node (orig) at (0,0) {};
   \draw[ell,blue] (mu) circle [x radius=.75, y radius=5]; 
   \node[right] (EL) at (4.6,.7) {$\EL$};
\end{tikzpicture}
\hfill
\begin{tikzpicture}[scale=1]
\clip(2,-0.4) rectangle (6.5,4);
   \draw[thick,opacity=.7] (0,0) circle [x radius=5, y radius=5];
   \draw[thin,->] (0,0) -> (6,0);
   \coordinate (x0) at (4,3);
   \coordinate (q0) at ($(x0)+(-0.5,0)$);
   \coordinate (p0) at ($(q0)+(0,0.55)$);
   \draw[densely dotted] (x0) -- (q0) -- (p0);
   \node[below right] (q0txt) at (q0) {$a$};
   \node[below left] (p0txt) at (p0) {$b$};
   
   \coordinate (x) at (5,0.02);
   \coordinate (q) at ($(x)+(-0.5,0)$);
   \coordinate (p) at ($(q)+(0,2.2)$);
   \draw[densely dotted] (x) -- (q) -- (p);
   \node[below right] (qtxt) at (q) {$a$};
   \node (ptxt) at (4.3,1) {$b$};
   
\end{tikzpicture}
\hfill
\caption{\label{fig:help2}Left: a point $\bx \in S_C \subset \Bo$ which lies in $\EL$ as well. Right: for a fixed $a>0$, the ratio $\nicefrac{b}{a}$ is maximized when the segment of length $a$ lies on the line passing through the center of $\Bo$, in which case $\nicefrac{b}{a}=\frac{\sin\theta}{1-\cos\theta}$ for some $\theta \in (0,\nicefrac{\pi}{2})$.}
\end{figure}
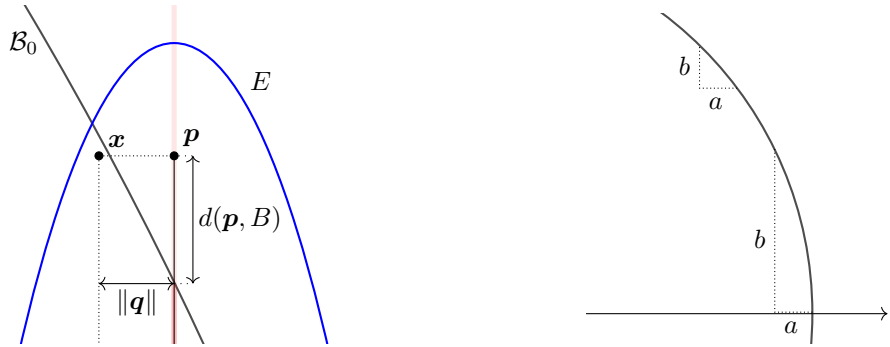

Now, since $\bx \in \Bo$,
the ratio $\frac{d(\bp,B)}{\norm{}{\bq}}$ is maximized when $\ell\to 0$ (i.e., $B$ has a vanishing radius), in which case $d(\bp,B) \le \sin\theta$ and $\norm{}{\bq} \ge 1 - \cos\theta$, where $\theta\in (0,\nicefrac{\pi}{2}]$; see Figure~\ref{fig:help2} right.
Then:
\begin{align}
\frac{\norm{}{\bq}}{d(\bp,B)} \ge \frac{1-\cos\theta}{\sin\theta} = \tan\frac{\theta}{2} \ge \frac{\theta}{2} \ge \frac{\sin\theta}{2} \ge \frac{d(\bp,B)}{2}
\end{align}
where we used the tangent half-angle formula and the Taylor expansion of $\tan\theta$.
This yields $d(\bp,B) \le \sqrt{2 \, \norm{2}{\bq}}$.
Thus:
\begin{align}
\norm{}{\bp} \le \ell + \sqrt{2 \norm{}{\bq} }
\end{align}
But since $\lambdastar_i \ge \nicefrac{1}{\epsilon^2}$ for all $i \in A_Q$:
\begin{align}
\norm{}{\bq}^2 
= \tp{\bx} Q_0 \bx
\le \epsilon^2\, \tp{\bx} Q_{\star} \bx
= \epsilon^2 \tp{(\bx-\bmustar)} Q_{\star} (\bx-\bmustar)
\le \epsilon^2 
\end{align}
Therefore:
\begin{align}
\tp{\bx} P \bx
\le \frac{1-\sqrt{5\gamma/4}}{\ell^2} \big(\ell + \sqrt{2\epsilon}\big)^2
\le (1-\sqrt{5\gamma/4}) \big(1 + \nicefrac{\sqrt{2\epsilon}}{\ell}\big)^2
\label{eqn:xPx}
\end{align}
Next, we show that $\frac{\sqrt{2\epsilon}}{\ell} \le \frac{1}{2}\sqrt{\nicefrac{5\gamma}{4}}$.
First,
\begin{align}
\sqrt{2\epsilon} = \sqrt{2 \frac{\gamma^3}{32 d^2}} = \frac{\gamma \sqrt{\gamma}}{4d}
\label{eqn:sqrt2eps}
\end{align}
We now temporarily set $\bmustar$ as the origin.
We want to show that the projection of $\nicefrac{1}{d}\,\MVE$ on $U$ is contained in $B$.
Now, the projection of an ellipsoid on the subspace spanned by a subset of its axes is a subset of the ellipsoid itself, and $U$ is by definition spanned by a subset of the axes of $\MVE$.
Therefore the projection $P$ of $\nicefrac{1}{d}\,\MVE$ on $U$ satisfies $P \subseteq \nicefrac{1}{d}\,\MVE$.
Suppose then by contradiction that $P \not\subseteq B$.
Since $B=U \cap \Bo$, this implies that $\nicefrac{1}{d}\,\MVE \notin \Bo$.
But by John's theorem, $\nicefrac{1}{d}\,\MVE \subseteq \conv(S_C)$, and therefore $\conv(S_C) \notin \Bo$, which is absurd.
Therefore $P \subseteq B$.

Let us get back to the proof, with $\bmu$ as the origin.
On the one hand, the definitions of $A_P$ and $U$ imply that the largest semiaxis of $\MVE$ of length $\ellstar = 1/\sqrt{\min_i \lambdastar_i}$ lies in $U$, thus $P$ has radius at least $\frac{1}{d}\ellstar$.
On the other hand $B$ has radius $\ell$, and we have seen that $P \subseteq B$.
Therefore, $\ell \ge \frac{1}{d}\ellstar$.
Finally, by our assumption on $\min_i \lambdastar_i$, we have $\min_i \lambdastar_i < \nicefrac{5}{\gamma^2}$ and so $\ellstar > \nicefrac{\gamma}{\sqrt{5}}$.
Therefore, $\ell \ge \nicefrac{\gamma}{\sqrt{5}d}$, which together with~\eqref{eqn:sqrt2eps} guarantees $\frac{\sqrt{2\epsilon}}{\ell} \le \frac{\sqrt{5\gamma}}{4} = \frac{1}{2}\sqrt{\nicefrac{5\gamma}{4}}$.
Thus, continuing~\eqref{eqn:xPx}:
\begin{align}
\tp{\bx} P \bx
&\le (1-\sqrt{5\gamma/4}) \Big(1+\frac{1}{2}\sqrt{\nicefrac{5\gamma}{4}}\Big)^2
\label{eqn:xPx0}
\end{align}
Now $(1-x)(1+\frac{x}{2})^2 < 1 - \frac{3}{4}x^2$ for all $x>0$, thus with $x=\sqrt{5\gamma/4}>\sqrt{\gamma}$ we get:
\begin{align}
\tp{\bx} P \bx
&< 1-\frac{3}{4}\gamma
\label{eqn:xPx1}
\end{align}
By summing~\eqref{eqn:xQx} and~\eqref{eqn:xPx1}, we get:
\begin{align}
\tp{\bx} P \bx + \tp{\bx} Q \bx < 1 - \frac{3}{4}\gamma + \frac{\gamma}{4} < 1
\end{align}

\paragraph{Proof of (2).}
Comparing the eigenvalues of $\EL$ and $\MVE$, and using $\ell \le 1$ and $\gamma \le \nicefrac{1}{5}$, we obtain:
\begin{align}
\frac{\lambda_i}{\lambdastar_i} \ge \left\{
\begin{array}{ll}
\frac{\nicefrac{(1-\sqrt{5\gamma/4})}{\ell^2}}{\nicefrac{1}{\epsilon^2}} \ge \frac{\epsilon^2}{2}
& \; i \in A_P
\\
\epsilon > \frac{\epsilon^2}{2}
& \; i \in A_Q
\end{array}
\right.
\end{align}
Thus the semiaxes lengths of $\EL$ are at most $\nicefrac{\sqrt{2}}{\epsilon}$ times those of $\MVE$.
Now let $\MVE_+$ be the set obtained by scaling $\MVE$ by a factor $\nicefrac{2\sqrt{2}}{\epsilon}=\nicefrac{64\sqrt{2}d^2}{\gamma^3}$ about its origin $\bmustar$.
Note that $\bmustar \in \conv(S_C)$ and, by item \textbf{(1)}, $\conv(S_C) \subseteq \EL$, which implies $\bmustar\in\EL$.
Now, $\MVE_+$ contains any set of the form $\by+\frac{1}{2}\MVE_+$ if the latter contains $\bmustar$; this includes the set $\frac{\sqrt{2}}{\epsilon} \MVE$ centered in $\bmu$, which in turn contains $\EL$ as we already said.

\paragraph{Proof of (3).}
We prove that $d(\bx,\Bo)^2 < \gamma$ for all $\bx \in \EL$.
Since $\Bo$ is the unit ball, this implies $\EL \subset \sqrt{1+\gamma}\,\Bo$.
Consider then any such $\bx$.
Let again $\bp,\bq$ be the projections of $\bx$ on $U$ and $U_{\bot}$ respectively.
Because $B \subseteq \Bo$, $d(\bx,\Bo)^2 \le d(\bx,B)^2 = d(\bp,B)^2 + \norm{}{\bq}^2$.
See again Figure~\ref{fig:help2}, left, but with $\bx$ possibly outside $\Bo$.
For the first term, note that
\begin{align}
d(\bp,B) \le \max_{i \in A_P} \sqrt{\nicefrac{1}{\lambda_i}} \, - \ell
\end{align}
By definition of $\lambda_i$, this yields:
\begin{align}
d(\bp,B)^2
& \le \left(\frac{\ell}{\sqrt{1-\sqrt{5\gamma/4}}} - \ell\right)^2
 \le \left(\frac{1}{\sqrt{1-\sqrt{5\gamma/4}}} - 1\right)^2
 \tag{because $\ell \le 1$}
\end{align}
Now we show that the right-hand side is bounded by $\frac{3}{4}\gamma$.
Consider $f(x)=\frac{1}{\sqrt{1-x}}-1$ for $x \in [0,\nicefrac{1}{2}]$.
Now $\frac{\partial^2 f}{\partial x^2} = \frac{3}{4}(1-x)^{-\nicefrac{5}{2}}>0$, so $f$ is convex.
Moreover, $f(\nicefrac{1}{2}) = \sqrt{2}-1 < 0.83 \cdot \nicefrac{1}{2}$, and clearly $f(0) = 0 \le 0.83 \cdot 0$.
By convexity then, for all $x \in [0,\nicefrac{1}{2}]$ we have $f(x) \le 0.83\, x$ which implies $f(x)^2 < 0.75\, x^2$.
By substituting $x = \sqrt{5\gamma/4}$, for all $\gamma \le \nicefrac{1}{5}$ we obtain:
\begin{align}
d(\bp,B)^2 \le \left(\frac{1}{\sqrt{1-\sqrt{5\gamma/4}}} - 1\right)^2 < \frac{3}{4} \cdot \frac{5}{4} \gamma = \frac{15}{16} \gamma
\label{eqn:dpB}
\end{align}

Let us now turn to $\bq$.
By definition of $Q_0$, of $Q$, and of $\lambda_i$ for $i \in A_Q$, we have:
\begin{align}
\norm{}{\bq}^2
= \tp{\bx} Q_0 \bx
\le \max_{i \in A_Q} \frac{1}{\lambda_i} \tp{\bx} Q \bx
= \max_{i \in A_Q} \frac{1}{\epsilon \lambdastar_i} \tp{\bx} Q \bx
\end{align}
But $\tp{\bx} Q \bx \le 1$ since $\bx \in E$, and recalling that $\lambdastar_i \ge \nicefrac{1}{\epsilon^2}$ for all $i \in A_Q$, we obtain:
\begin{align}
\norm{}{\bq}^2
\le \frac{1}{\epsilon (\nicefrac{1}{\epsilon^2})}
= \epsilon < \frac{\gamma}{16}
\label{eqn:normq}
\end{align}
Finally, by summing~\eqref{eqn:dpB} and~\eqref{eqn:normq}:
\begin{align}
d(\bx,\Bo)^2 \le d(\bp,B)^2 + \norm{}{\bq}^2 < \gamma
\end{align}
The proof is complete.
\renewcommand{\MVE}{\oldMVE}

\section{Supplementary material for Section~\ref{sec:all}}
\label{apx:all}

\subsection{Lemma~\ref{lem:draw}}
\begin{lemma}
\label{lem:draw}
Let $b > 0$ be a sufficiently large constant.
Let $S$ be a sample of points drawn independently and uniformly at random from $X$.
Let $C = \arg \max_{C_j \in \C} |S \cap C_j|$, let $S_C = S \cap C$, and suppose $|S_C| \ge b d^2 \ln k$.
If $E$ is any (possibly degenerate) ellipsoid in $\R^d$ such that $S_C = C \cap E$, then with probability at least $\nicefrac{1}{2}$ we have $|C \cap \EL| \ge |X|\frac{1}{4k}$.
The same holds if we require that $\EL \cap (S \setminus S_C) = \emptyset$, i.e., that $\EL$ separates $S_C$ from $S \setminus S_C$.
\end{lemma}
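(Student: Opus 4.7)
The plan is to decompose the target event $\{|C \cap E| \ge |X|/(4k)\}$ into two sub-events and bound each separately. Define $A$ as the event that the stopping cluster is ``large,'' meaning $|C| \ge |X|/(2k)$, and $B$ as the event that \emph{every} ellipsoid $E$ satisfying the lemma's hypothesis on $S_C$ also satisfies $|C \cap E| \ge |C|/2$. On $A \cap B$ we have $|C \cap E| \ge |C|/2 \ge |X|/(4k)$, so it suffices to show $\Pr(A) \ge 3/4$ and $\Pr(B) \ge 3/4$ and then take a union bound.

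For Event $A$, I would use a ``race'' argument. By averaging across clusters, some cluster $C^*$ has $|C^*| \ge |X|/k$. For any candidate small cluster $C'$ with $|C'| < |X|/(2k)$, consider the subsequence of samples landing in $C' \cup C^*$: each such sample lies in $C'$ independently with probability $q = |C'|/(|C'|+|C^*|) \le 1/3$. The event that $C'$ accumulates $bd^2\ln k$ samples before $C^*$ is exactly the event that a $\mathrm{Binomial}(2bd^2\ln k - 1, q)$ variable hits $bd^2\ln k$, which by the Chernoff inequality (Lemma~\ref{lem:chernoff}) has probability at most $\exp(-\Omega(bd^2\ln k))$. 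A union bound over the at most $k$ small clusters then yields $\Pr(\neg A) \le k \exp(-\Omega(bd^2\ln k)) \le 1/4$ provided $b$ is a sufficiently large constant.

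For Event $B$, I would invoke the realizable PAC bound of Theorem~\ref{thm:vc+pac} applied to the class of (possibly degenerate) ellipsoids, whose VC-dimension is $\scO(d^2)$ by Theorem~\ref{thm:vcE}. The key subtlety is that $S_C$ is produced by a stopping rule that couples all clusters, not by a direct iid draw from $C$. I would resolve this by conditioning on the full sequence of cluster labels of the sampled points: this conditioning fixes the stopping time, fixes $|S_C|$, and fixes the positions at which $C$-samples occur; but by symmetry of uniform-from-$X$ sampling, the actual locations of those $C$-samples are conditionally iid uniform from $C$. Applying Theorem~\ref{thm:vc+pac} with target concept any ellipsoid enclosing $C$ and parameters $\epsilon = 1/2, \delta = 1/4$, which requires $|S_C| = \Omega(d^2)$, yields that every ellipsoid $E \supseteq S_C$ satisfies $|C \setminus E|/|C| \le 1/2$ with conditional probability at least $3/4$. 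Since this bound holds uniformly over label sequences, it also holds unconditionally, giving $\Pr(B) \ge 3/4$.

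Combining via a union bound gives $\Pr(A \cap B) \ge 1/2$, and on $A \cap B$ we conclude $|C \cap E| \ge |X|/(4k)$. Both hypotheses of the lemma, namely $S_C = C \cap E$ and the separation condition $E \cap (S \setminus S_C) = \emptyset$, imply $S_C \subseteq E$, which is the only property used in Event $B$; thus a single argument covers both statements. The step I expect to be the main obstacle is the conditioning argument in Event $B$: one has to verify carefully that $S_C$ behaves like an iid uniform sample from $C$ even though it is determined through a stopping time that couples the sampling processes of all clusters together.
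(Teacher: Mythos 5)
Your decomposition into $A$ (the stopping cluster is large) and $B$ (any consistent ellipsoid generalizes) matches the paper's structure exactly, and your proof is correct; but both sub-arguments are handled differently. For event $A$, the paper observes that the total number of draws is deterministically at most $s = kbd^2\ln k$ (by pigeonhole, some cluster must hit the threshold within $s$ draws), then for each small cluster $C_i$ with $|C_i| < |X|/(2k)$ applies a Chernoff upper-tail bound to $N_i \sim \mathrm{Binomial}(s, |C_i|/|X|)$, which has mean $< \tfrac{1}{2}bd^2\ln k$, and union-bounds over the small clusters. Your pairwise ``race against $C^*$'' argument is a genuine alternative: it dispenses with the deterministic upper bound on $|S|$ and works directly with the stopping rule, which is arguably the more natural framing; both give the same $k\exp(-\Omega(b\,\mathrm{poly}\ln k))$ bound. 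For event $B$, both you and the paper invoke Theorem~\ref{thm:vc+pac} on the VC class of ellipsoids, but the paper applies it with the ambient distribution over $X$, accuracy $\epsilon = \nicefrac{1}{(4k)}$, confidence $\delta = \nicefrac{1}{(4k)}$, and a union bound over all large clusters, whereas you condition on the label sequence of the draws and apply it with the conditional distribution over $C$ alone, constant accuracy $\epsilon = \nicefrac{1}{2}$, and $\delta = \nicefrac{1}{4}$, requiring only $|S_C| = \Omega(d^2)$ and no union bound. Your handling is cleaner on two counts that the paper leaves implicit: it makes explicit why the sample $S_C$ can be treated as i.i.d.\ from $C$ despite being selected through a stopping rule (the conditioning argument you flag as the main obstacle is indeed the right way to justify this), and it keeps the required sample size independent of $k$, whereas taking $\epsilon$ of order $\nicefrac{1}{k}$ over the full sample $S$ would nominally demand $|S| \gtrsim k d^2 \ln k$ draws, which the stopping rule does not guarantee. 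One small point worth stating explicitly in your write-up: when invoking Theorem~\ref{thm:vc+pac} with $\mathcal{D}$ supported on $C$ and a target ellipsoid $E^*$ enclosing $C$, you should use the \emph{first} inequality of that theorem (the symmetric-difference bound, applied to an $E$ with $E \cap S_C = E^*\cap S_C = S_C$, i.e., $S_C \subseteq E$), since the second one-sided inequality $\mathcal{D}(E'\setminus E^*)\le\epsilon$ is vacuous when $\mathcal{D}$ is supported inside $E^*$.
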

\begin{proof}
Let $n=|X|$ for short, and for any ellipsoid $\EL$ let $\EL_X=\EL\cap X$.
We show that, with $C$ defined as above, \textbf{(i)} with probability at least $1-\nicefrac{1}{4}$ we have $|C| \ge \nicefrac{n}{2k}$, and \textbf{(ii)} with probability at least $1-\nicefrac{1}{4}$, if $|C| \ge \nicefrac{n}{2k}$ then $|E_X \SymDif C| \le \nicefrac{1}{2} |C|$ where $\SymDif$ denotes symmetric difference.
By a union bound, then, with probability at least $\nicefrac{1}{2}$ we have $|\EL \cap C| \ge |C| - |\EL_X \SymDif C| \ge \frac{1}{2}|C| \ge \nicefrac{n}{4k}$.

\textbf{(i)}.
Let $S$ be the multiset of samples drawn from $X$, and for every cluster $C_i \in \C$ let $N_i$ be the number of samples in $C_i$.
Let $s=kb d^2 \ln k$; note that $|S| \le s$ since there are at most $k$ clusters.
Now fix any $C_i$ with $|C_i| < \frac{n}{2k}$.
Then $\E[N_{i}] \le s\frac{|C_i|}{n} < \frac{b d^2 \ln k}{2}$, and
by standard concentration bounds (Lemma~\ref{lem:chernoff} in this supplementary material), we have $\Pr(N_{i} \ge b d^2 \ln k) = \exp(-\Omega(b\ln k))$, which for $b$ large enough drops below $\nicefrac{1}{4k}$.
Therefore, the probability that $N_i \ge b d^2 \ln k$ when taking $s \le k b d^2 \ln k$ samples is at most $\nicefrac{1}{4k}$.
By a union bound on all $C_i$ with $|C_i| < \nicefrac{n}{2k}$, then, $|C|\ge \nicefrac{n}{2k}$ with probability $1-\nicefrac{1}{4}$.

\textbf{(ii)}.
Consider now any $C_i$ with $|C_i| \ge \nicefrac{n}{2k}$.
We invoke the generalization bounds of Theorem~\ref{thm:vc+pac} in this supplementary material with $\epsilon=\nicefrac{1}{4k}$ and $\delta=\nicefrac{1}{4k}$, on the hypothesis class $\Hs$ of all (possibly degenerate) ellipsoids in $\R^d$.
For $b$ large enough, the generalization error of any ellipsoid $E$ that contains $S_C$ is, with probability at least $1-\nicefrac{1}{4k}$, at most $\nicefrac{1}{4k}$, which means $|E_X \SymDif C_i| \le \nicefrac{n}{4k} \le \nicefrac{1}{2} |C_i|$, as desired.
By a union bound on all clusters, with probability at least $1-\nicefrac{1}{4}$ this holds for all $C_i$ with $|C_i| \ge \nicefrac{n}{2k}$.
The same argument holds if we require $E$ to separate $S \cap C_i$ from $S \setminus C_i$, see again Theorem~\ref{thm:vc+pac}.
By a union bound with point (i) above, we have $E \cap C \le \nicefrac{1}{2} |C|$ with probability at least $\nicefrac{1}{2}$, as claimed.
\end{proof}

\subsection{Proof of Lemma~\ref{lem:hp_rounds}}
\newcommand{\F}{\mathcal{F}}
Let $X_0=X$ and $N_0=n$, and for all $i \ge 1$, let $X_i$ be the set of points not yet labeled at the end of round $i$, let $N_i=|X_i|$, and let $R_i= \Ind{N_i \le N_{i-1}(1-\nicefrac{1}{4k})}$.
Recall that $S_C$ is large enough so that, by Lemma~\ref{lem:draw} in this supplementary material, we have $\Pr(R_i = 1 \,|\, X_{i-1}) \ge \nicefrac{1}{2}$ for all $i$.
For every $t \ge 1$ let $\rho_t=\sum_{i=1}^t R_i$.
Note that:
\begin{align}
\label{eqn:expround}
N_t &\le N_0  (1-\nicefrac{1}{4k})^{\rho_t} < n e^{-\frac{\rho_t}{4k}}
\end{align}
If $\rho_t \ge 4k \ln(1/\epsilon)$, then $N_t < \epsilon n$ and  \AlgoFULL($X,k,\gamma,\epsilon$) stops.
The number of rounds executed by \AlgoFULL($X,k,\gamma,\epsilon$) is thus at most $r_{\epsilon}=\min\{ t : \rho_t \ge 4k \ln(1/\epsilon)\}$.

Now, for all $i \ge 1$ consider the $\sigma$-algebra $\F_{i-1}$ generated by $X_0,\ldots,X_{i-1}$, and define:
$
Z_i = R_i \, B_i
$,
where $B_1,B_2,\ldots$ are Bernoulli random variables where each $B_i$ has parameter ${1}\big/\big({2 \,\E[R_i \mid \F_{i-1}]}\big)$.
Obviously, $Z_i \le R_i$, and thus for all $t$ we deterministically have:
\begin{align}
\rho_t = \sum_{i=1}^t R_i \ge \sum_{i=1}^t Z_i
\end{align}
Now note that:
\begin{align}
\E[Z_i \mid \F_{i-1}]= \E[R_i \mid \F_{i-1}] \frac{1}{2 \,\E[R_i \mid \F_{i-1}]} = \frac{1}{2}
\end{align}

Now we can prove the theorem.
For the first claim, simply note that $\E[r_{\epsilon}] \le 8k \ln(1/\epsilon)$, as this is the expected number of fair coin tosses to get $4k \ln(1/\epsilon)$ heads.

For the second claim, consider any $t \ge 8k \ln n + 6 a \sqrt{k} \ln n$.
Letting $\zeta_t=\sum_{i=1}^t Z_t$, the event $r_{0} \ge t$ implies $\zeta_t < 4k \ln n = \frac{t}{2}-3a\sqrt{k}\ln n = \E[\zeta_t]-\delta$ where $\delta=3 a \sqrt{k}\ln n$.
By Hoeffding's inequality this event has probability at most $e^{-2\delta^2/t}$, and one can check that for all $a \ge 1$ we have $\frac{2\delta^2}{t} \ge a \ln n$.

\section{Supplementary material for Section~\ref{sec:lb}}
\label{apx:lb}

\subsection{Proof of Theorem~\ref{thm:LB}}
We state and prove two distinct theorems which immediately imply Theorem~\ref{thm:LB}.

\begin{theorem}
\label{thm:LB1}
For all $0 < \gamma < \nicefrac{1}{7}$, all $d \ge 2$, and every (possibly randomized) learning algorithm, there exists an instance on $n \ge 2(\frac{1+\gamma}{8\gamma})^{\frac{d-1}{2}} $ points and $|\C|=3$ latent clusters such that (1) all clusters have margin $\gamma$, and (2) to return with probability $\nicefrac{2}{3}$ a clustering $\hat{\C}$ such that $\ErrClust(\hat{\C},\C)=0$ the algorithm must make $\Omega(n)$ same-cluster queries in expectation.
\end{theorem}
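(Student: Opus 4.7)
The plan is to apply Yao's minimax principle (Appendix~\ref{sub:yao}): I will exhibit an input distribution under which every deterministic algorithm either fails with probability more than $2/3$ or issues $\Omega(n)$ queries in expectation, and then conclude the matching randomized bound with $\delta = 1/3$. The distribution is uniform over $n$ instances that share the same point set $X \subset \R^d$ and differ only in which single element $\bx^{\star} \in X$ is designated as a singleton outlier, so that exact recovery reduces to identifying the ``needle'' $\bx^{\star}$.

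First I would build $X$ by a standard greedy/volume packing argument: for the announced $n \ge 2\bigl(\tfrac{1+\gamma}{8\gamma}\bigr)^{(d-1)/2}$, there exists $X \subset S^{d-1}$ with pairwise inner products bounded below $1$ by a fixed $\gamma$-dependent threshold. For every $\bx^{\star} \in X$, the latent clustering is $\C^{\bx^{\star}} = \bigl\{X\setminus\{\bx^{\star}\},\,\{\bx^{\star}\},\,\{\bp_0\}\bigr\}$, where the auxiliary point $\bp_0$ is fixed (instance-independent) and placed far from $X$ solely to meet the $|\C|=3$ requirement. The two singleton clusters admit $\gamma$-margin trivially (take $\bc$ to be the lone point and any positive definite $W$). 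For the bulk cluster $X\setminus\{\bx^{\star}\}$ I would take $W = (1+\gamma)\diag\bigl((x_1^{\star})^2,\ldots,(x_d^{\star})^2\bigr)$ and $\bc = \orig$ as suggested in the main body, reducing the margin condition to the scalar inequality
\[
\textstyle\sum_i (x_i^{\star})^4 \,>\, (1+\gamma)\sum_i (x_i^{\star})^2 z_i^2 \qquad \forall\,\bz \in X\setminus\{\bx^{\star}\},
\]
to be discharged from the packing's separation (and $\bp_0$ is placed comfortably outside the induced $\gamma$-ellipsoid so as not to violate the bulk margin).

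Next, fix any deterministic algorithm $A$ that recovers $\C^{\bx^{\star}}$ on a set $S \subseteq X$ with $|S| \ge n/3$; since on instance $I_{\bx^{\star}}$ a query $\scq(\by,\bz)$ returns $-1$ iff $\bx^{\star} \in \{\by,\bz\}$, queries carry information about $\bx^{\star}$ only when they ``touch'' it. Consider the positive spine of $A$'s decision tree, i.e.\ the sequence of queries issued when every answer so far is $+1$: the first $t$ spine queries involve at most $2t$ points of $X$, so under $\bx^{\star}$ uniform in $S$ the probability that $A$ has not yet touched $\bx^{\star}$ by step $t$ is at least $(|S|-2t)/|S|$. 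Summing over $t \le |S|/4$ lower-bounds the expected first-touch time by $|S|/8 = \Omega(n)$. Moreover, any two instances whose outliers are both never touched produce identical transcripts and hence the same output, so $A$ is correct on at most one ``untouched'' candidate; thus on all but one element of $S$ the query count is at least the first-touch time, and averaging yields $\E_{\bx^{\star}\sim \mathrm{Unif}(X)}[\text{queries}] = \Omega(n)$. Yao's principle transfers this to randomized algorithms with failure probability at most $1/3$.

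The delicate step will be the margin verification. The inequality $\sum_i (x_i^{\star})^4 > (1+\gamma)\sum_i (x_i^{\star})^2 z_i^2$ is \emph{not} implied by a small inner product $\langle \bx^{\star},\bz\rangle$ alone---uniform-coordinate vectors, for instance, make both sides scale as $1/d$ and violate the strict inequality---so the packing must be chosen so that each element is sufficiently ``concentrated'' in its coordinates while still being pairwise separated. The precise constant $8\gamma$ and the hypothesis $\gamma < 1/7$ should emerge from balancing these two competing requirements together with the PSD structure imposed on $W$.
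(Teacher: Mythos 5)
Your high-level strategy matches the paper's (uniform distribution over ``which point is the outlier'', Yao's principle, a decision-tree/positive-spine argument for the $\Omega(n)$ lower bound), and you correctly reduce the margin condition to the scalar inequality
\[
\textstyle\sum_i (x_i^{\star})^4 \;>\; (1+\gamma)\sum_i (x_i^{\star})^2\, z_i^2 ,
\]
but you leave the crucial step open. You pack $X \subset S^{d-1}$ in the ordinary $\ell_2$ sense, then honestly flag that this does not discharge the inequality (``uniform-coordinate vectors make both sides scale as $1/d$'') and say ``the packing must be chosen so that each element is sufficiently concentrated.'' That flagged step is the entire technical content of the theorem and the proposal does not supply it; as you yourself observe, a generic $\ell_2$-sphere packing fails.

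The paper's resolution is to pack in the \emph{squared-coordinate} space, not in $\bx$-space. It takes $Z^+ \subset S_d^+$, an $\frac{\epsilon}{2}$-separated set of points on the positive orthant of the unit sphere, and sets $X = \{\pm\sqrt{\bz} : \bz \in Z^+\}$ (coordinatewise square root). Two things fall out automatically. First, $\|\bz^\star\|_2 = 1$ means $\sum_i (x_i^\star)^4 = 1$, so the left-hand side is pinned to $1$ for every candidate outlier, killing the concentration issue entirely; effectively the paper normalizes $\|\bx\|_4$ rather than $\|\bx\|_2$. Second, the right-hand cross term becomes $\sum_i (x_i^\star)^2 z_i^2 = \langle \bz^\star, \bz\rangle$, i.e.\ exactly the $\ell_2$ inner product that the $\ell_2$-packing bound controls: $\|\bz - \bz^\star\| > \epsilon/2$ gives $\langle \bz^\star,\bz\rangle < 1 - \epsilon^2/8$, and choosing $\epsilon = \sqrt{8\gamma/(1+\gamma)}$ yields exactly the required factor $1/(1+\gamma)$. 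The orthant restriction is what makes $\bz \mapsto \sqrt{\bz}$ well-defined, and the constraint $\gamma < 1/7$ is precisely $\epsilon < 1$. Finally, the paper gets $k=3$ not with an auxiliary $\bp_0$ but by using the $\pm$ doubling $X = X^+ \cup X^-$: the two mirror images $\pm\sqrt{\bz^\star}$ become the two singleton outlier clusters (both at $d_W^2 = 1+\gamma$), which also accounts for the factor $2$ in the lower bound on $n$. Your $\bp_0$ trick would also work, but the $\pm$ construction is what produces the $2(\cdot)^{(d-1)/2}$ count in the statement.

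Bottom line: the decision-tree and Yao portions of your argument are fine and essentially formalize the paper's sketch, but the margin verification — the part you label as ``delicate'' and leave to future balancing — is a genuine gap. Without replacing your $\ell_2$-sphere packing by a packing of the squared vectors on $S_d^+$ (or an equivalent $\ell_4$-normalization), the construction does not satisfy the margin condition and the theorem is not proved.
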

\begin{proof}
The idea is the following.
We define a single set of points $X \subset \R^d$ and randomize over the choice of the latent PSD matrix $W$; the claim of the theorem follows by applying Yao's minimax principle.
Specifically, we let $X$ be a $\Theta(\sqrt{\gamma})$-packing of points on the unit sphere in $\R^d$.
We show that, for $\bx \in X$ drawn uniformly at random, setting $W=(1+\gamma)\diag(x_1^2,\ldots,x_d^2)$ makes $\bx$ an outlier, as its distance $d_W(\bx,\orig)$ from the origin is $1+\gamma$, while every other point is at distance $\le 1$. Since there are roughly $(\nicefrac{1}{\gamma})^d$ such points $\bx$ in our set, the bound follows.

We start by defining the points $X$ in terms of their entry-wise squared vectors.
Consider $S_d^+ = \R^d_+ \cap S_d$ where $S_d=\{\bx \in \R^d \,:\; \norm{2}{\bx}=1\}$ is the unit sphere in $\R^d$.
We want to show that there exists a set of $\frac{1}{2}(\nicefrac{1}{\epsilon})^{d-1}$ points in $S_d^+$ whose pairwise distance is bigger than $\nicefrac{\epsilon}{2}$, where $\epsilon$ will be defined later.
To see this, recall that the packing number of the unit ball $B_d=\{\bx \in \R^d\,:\, \norm{2}{\bx} \le 1\}$ is
$
\mathcal{M}(B,\epsilon) \ge (\nicefrac{1}{\epsilon})^d
$ ---see, e.g., \cite{HDPbook}.
For $\nicefrac{\epsilon}{2}$ and $d-1$, this implies there exists $Y \subseteq B_{d-1}$ such that $|Y|\ge (\nicefrac{2}{\epsilon})^{d-1}$ and $\norm{2}{\by-\by'} > \nicefrac{\epsilon}{2}$ for all distinct $\by,\by' \in Y$.
Now, consider the lifting function $f : B_{d-1} \rightarrow \R^{d}$ defined by $f(\by)=(\sqrt{1-\norm{2}{\by}^2}, y_{1},\ldots,y_{d-1})$.
Define the lifted set $Z=\{ f(\by) : \by \in Y\}$.
Clearly, every $\bz \in Z$ satisfies $\norm{2}{\bz}=1$ and $z_0 \ge 0$, so $\bz$ lies on the northern hemisphere of the sphere $S_d$.
Moreover, $\norm{2}{f(\by) - f(\by')} \ge \norm{2}{\by - \by'}$ for any two $\by,\by' \in Y$.
Hence, we have a set $Z$ of $(\nicefrac{2}{\epsilon})^{d-1}$ points on the $d$-dimensional sphere such that $\norm{2}{\bz-\bz'} > \nicefrac{\epsilon}{2}$ for all distinct $\bz,\bz' \in Z$.
But a hemisphere is the union of $2^{d-1}$ orthants, hence some orthant contains at least $2^{-(d-1)}(\nicefrac{2}{\epsilon})^{d-1} = (\nicefrac{1}{\epsilon})^{d-1}$ of the points of $Z$.
Without loss of generality we may assume this is the positive orthant and denote the set as $Z^+$.

We now define the input set $X \subseteq \R^d$ as follows:
\begin{align*}
X = X^+ \cup X^- = \{ \sqrt{\bz} \,:\, \bz \in Z^+ \} \cup \{ -\sqrt{\bz} \,:\, \bz \in Z^+ \}
\end{align*}
Note that $n=|X|=2|Z^+|=2(\nicefrac{1}{\epsilon})^{d-1}$.
Next, we show how every $\bz \in Z^+$ defines a clustering instance satisfying the constraints of the thesis.
For any $\bz^* \in Z^+$; let $\bw=(1+\gamma)\bz^*$ and $W=\diag(w_1,\ldots,w_d)$, which is PSD as required.
Define the following three clusters:
\begin{align*}
    C'=\{-\sqrt{\bz^*}\}
\qquad
    C''=\{\sqrt{\bz^*}\}
\qquad
    C=X \setminus (C' \cup C'')
\end{align*}
where, for $f:\R\to\R$, $f(\bx) = \big(f(x_1),\ldots,f(x_d)\big)$. 
Since $C'$ and $C''$ are singletons, they trivially have weak margin $\gamma$.
We now show that $C$ has weak margin $\gamma$ w.r.t.\ to $\bmu=\pmb{0}$; that is, $d_{W}(\bx,\bmu)^2 > 1+\gamma$ for $\bx=\pm \sqrt{\bz^*}$ and $d_{W}(\bx,\bmu)^2 \le 1$ otherwise.
First, note that $d_{W}(\bx,\bmu)^2 = \dotp{\bw,\bx^2}$.
Now, 
\begin{equation}
\label{eqn:dbxbmu}
    d_{W}(\bx,\bmu)^2
=
    \left\{ \begin{array}{cl}
        (1+\gamma)\dotp{\bz^*,\bz^*} = 1+\gamma & \text{if $\bx \in C',C''$}
    \\
        (1+\gamma)\dotp{\bz^*,\bx^2} & \text{if $\bx \in C$}
    \end{array} \right.
\end{equation}
However, by construction of $Z^+$, we have that for all $\bx \in C$ and $\bz = \bx^2$,
\begin{align*}
(\nicefrac{\epsilon}{2})^2 \le \norm{2}{\bz-\bz^*}^2 = \norm{2}{\bz}^2 - 2\dotp{\bz,\bz^*} + \norm{2}{\bz^*}^2 = 2(1-\dotp{\bz,\bz^*})
\end{align*}
which implies
$
\dotp{\bz^*,\bx^2} \le 1 - (\nicefrac{\epsilon}{2})^2/2 = 1 - \nicefrac{\epsilon^2}{8} = \nicefrac{1}{(1+\gamma)}
$
for $\epsilon=\sqrt{\nicefrac{8\gamma}{(1+\gamma)}}$. Therefore~\eqref{eqn:dbxbmu} gives $d_{W}(\bx,\bmu)^2 = (1+\gamma)\dotp{\bz^*,\bx^2} \le 1$. This proves $C$ has weak margin $\gamma$ as desired.

The size of $X$ is:
\begin{align*}
n \ge 2\Big(\frac{1}{\sqrt{\nicefrac{8\gamma}{(1+\gamma)}}}\Big)^{d-1} 
= 2\Big(\frac{1+\gamma}{8\gamma}\Big)^{\frac{d-1}{2}}  
\end{align*}
Now the distribution of the instances is defined by taking $\bz^*$ from the uniform distribution over $Z^+$.
Consider any deterministic algorithm running over such a distribution.
Note that same-cluster queries always return $+1$ unless at least one of the two queried points is not in $C$.
As $C$ contains all points in $X$ but the symmetric pair $\sqrt{\bz^*}, -\sqrt{\bz^*}$ for a randomly drawn $\bz^*$, a constant fraction of the points in $X$ must be queried before one element of the pair is found with probability bounded away from zero.
Thus, any deterministic algorithm that returns a zero-error clustering with probability at least $\delta$ for any constant $\delta > 0$ must perform $\Omega(n)$ queries.
By Yao's principle for Monte Carlo algorithms then (see Section~\ref{sub:yao} above), any randomized algorithm that errs with probability at most $\frac{1-\delta}{2} \le \frac{1}{2}$ for any constant $\delta > 0$ must make $\Omega(n)$ queries as well.
\end{proof}

\begin{theorem}
\label{thm:LB2}
For all $\gamma > 0$, all $d \ge 48(1+\gamma)^2$, and every (possibly randomized) learning algorithm, there exists an instance on $n = \Omega\big(\exp(d/(1+\gamma)^2)\big)$ points and $|\C|=2$ latent clusters such that (1) all clusters have margin at least $\gamma$, and (2) to return with probability $\nicefrac{2}{3}$ a clustering $\hat{\C}$ such that $\ErrClust(\hat{\C},\C)=0$ the algorithm must make  $\Omega(n)$ same-cluster queries in expectation.
\end{theorem}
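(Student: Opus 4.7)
The plan is to apply Yao's minimax principle (Section~\ref{sub:yao}) to a distribution over margin-$\gamma$ instances in which a single ``outlier'' point is hidden at a uniformly random location, making it information-theoretically hard to locate with same-cluster queries. The construction parallels that of Theorem~\ref{thm:LB1}, but on a biased binary hypercube instead of a spherical packing: draw $\bx^*$ uniformly from a random i.i.d.\ sample $X$, use $W = (1+\gamma)\diag(\bx^*)$ and $\bc = \mathbf{0}$ as the margin witnesses, and split $X$ into $C_1 = X \setminus \{\bx^*\}$ and $C_2 = \{\bx^*\}$.

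\textbf{Construction.} I would fix a constant $\epsilon > 0$ and set $p = 1/\bigl((1+\epsilon)(1+\gamma)\bigr)$; sample $X = \{\bx_1,\ldots,\bx_n\}$ i.i.d.\ from $\{0,1\}^d$ with each coordinate independently $1$ with probability $p$; and draw $i^* \in [n]$ uniformly. Since coordinates are binary, $d_W(\bx^*,\mathbf{0})^2 = (1+\gamma)\|\bx^*\|_1$ and $d_W(\bx_j,\mathbf{0})^2 = (1+\gamma)\langle \bx^*,\bx_j\rangle$, so the margin condition on $C_1$ reduces to the pointwise inequality $\|\bx^*\|_1 > (1+\gamma)\langle \bx^*,\bx_j\rangle$ for every $j \ne i^*$; the singleton $C_2$ satisfies the condition trivially via $\bc' = \bx^*$.

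\textbf{Margin verification.} Let $s = \|\bx^*\|_1 \sim \mathrm{Bin}(d,p)$, and conditionally on $\bx^*$ note that each $\langle \bx^*,\bx_j\rangle \sim \mathrm{Bin}(s,p)$ independently across $j \ne i^*$. Two Chernoff bounds (Lemma~\ref{lem:chernoff}) together yield with probability at least
\begin{align*}
1 \;-\; \exp\!\bigl(-\tfrac{1}{8}pd\bigr) \;-\; n\exp\!\bigl(-\tfrac{\epsilon^2}{2+\epsilon}\,ps\bigr)
\end{align*}
both $s \ge pd/2$ and $\langle \bx^*,\bx_j\rangle \le (1+\epsilon)ps$ for all $j \ne i^*$. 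On this event $(1+\epsilon)ps = s/(1+\gamma)$, so the margin holds. Since $ps \ge pd/2 \cdot p = d/\bigl(2(1+\epsilon)^2(1+\gamma)^2\bigr)$, the second exponent is $\Omega\!\bigl(d/(1+\gamma)^2\bigr)$. A constant choice of $\epsilon$ makes this rate at least $d/\bigl(48(1+\gamma)^2\bigr)$, so taking $n$ up to $\exp\!\bigl(d/(48(1+\gamma)^2)\bigr)$ keeps the failure probability below a small constant, using the hypothesis $d \ge 48(1+\gamma)^2$ to absorb the lower-tail term on $s$.

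\textbf{Query lower bound.} Conditioning on the margin event, $i^*$ remains (essentially) uniform over $[n]$. Any deterministic algorithm making $q$ same-cluster queries none of which touches $\bx^*$ receives answer $+1$ to every query, since all other points lie in $C_1$; hence its transcript and output are independent of $i^*$ and can be correct for at most one value of $i^*$, yielding success probability at most $1/n$. Because each query touches at most two points and $i^*$ is uniform, the probability that $\bx^*$ is touched by any of the $q$ queries is at most $2q/n$. A $\tfrac{2}{3}$-correct algorithm therefore requires $2q/n + 1/n \ge 2/3$, forcing $q = \Omega(n)$. Yao's principle then lifts this to an $\Omega(n)$ expected query lower bound in the worst case against any randomized algorithm with error probability $\le 1/3$.

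\textbf{Main obstacle.} The nontrivial step is the two-level Chernoff argument: one must tune $p$ (equivalently $\epsilon$) so that the multiplicative slack between $(1+\epsilon)ps$ and the margin threshold $s/(1+\gamma)$ is large enough to apply Chernoff, while keeping $ps$ itself of order $d/(1+\gamma)^2$, so that the resulting exponent achieves the claimed $1/48$ constant after accounting for the Chernoff factor $\epsilon^2/(2+\epsilon)$ and the $1/2$ loss from the lower-tail bound on $s$. Everything else --- the singleton margin, conditioning on the margin event, and the application of Yao's principle --- is a routine adaptation of the arguments used for Theorem~\ref{thm:LB1}.
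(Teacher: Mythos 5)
Your proposal is correct and follows essentially the same route as the paper's proof of Theorem~\ref{thm:LB2}: a biased Bernoulli hypercube sample $X$, a uniformly random hidden outlier $\bx^*$ with $W=\diag(\bx^*)$ (scaling by $1+\gamma$ is immaterial) and $\bc=\orig$, a Chernoff argument giving margin $\gamma$ with constant probability, and then Yao's minimax principle. The only substantive difference is presentational --- you condition on $s=\|\bx^*\|_1$ and do a two-level Chernoff where the paper applies unconditional tail bounds to $B(d,p)$ and $B(d,p^2)$ with a $\sqrt{p}$-scaled lower-tail deviation --- and the paper additionally bounds $\Pr(|X|<n)$ to rule out collisions among sampled points, a step you omit but which is easily supplied (and is in fact dominated by the margin-failure probability for the chosen $p$).
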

\begin{proof}
We exhibit a distribution of instances that gives a lower bound for every algorithm, and then use Yao's minimax principle. 
Let $p=\frac{1}{2(1+\gamma)}$.
Consider a set of vectors $\bx_1,\ldots,\bx_n$ where every entry of each vector $x_{j,i}$ is i.i.d.\ and it is equal to $1$ with probability. $p$.
Define $X=\{\bx_1,\ldots,\bx_n\}$; note that in general $|X| \le n$ since the points might not be all distinct.
Let $\bx^{\star}=\bx_n$, $C=\{\bx_1,\ldots,\bx_{n-1}\}$, $C'=\{\bx^{\star}\}$. The latent clustering is $\C=\{C,C'\}$, and the matrix and center of $C$ are respectively $W=\diag(\bx^{\star})$ and $\bc=\orig$.
The algorithms receive in input a random permutation of $X$; clearly, if it makes $o(|X|)$ queries, then it has vanishing probability to find $\bx^{\star}$, which is necessary to return the latent clustering $\C$.

Now we claim that, if $d \ge {48(1+\gamma)^2}$, then we can set $n = \Omega\Big( \exp\big(\frac{d}{48(1+\gamma)^2}\big)\Big)$ and with constant probability we will have \textbf{(i)} $|X| = \Omega(n)$, and \textbf{(ii)} $C,C'$ have margin $\gamma$.
This is sufficient, since the theorem then follows by applying Yao's minimax principle.

Let us first bound the probability that $|X|<n$.
Note that for any two points $\bx_i,\bx_{i'}$ with $i \ne i'$ we have $\Pr(\bx_i=\bx_{i'}) = ((1-p)^2+p^2)^d<(1-\frac{1}{2(1+\gamma)})^d < e^{-\frac{d}{2(1+\gamma)}}$.
Therefore, by a simple union bound over all pairs, $\Pr(|X|<n) < n^2 e^{-\frac{d}{2(1+\gamma)}}$.

Next, we want show that, loosely speaking, $d_W(\bx,\bc)^2 \simeq dp$ for $\bx \in C'$ whereas $d_W(\bx,\bc)^2 \simeq dp^2$ for $\bx \in C$; this will give the margin.

Now, for any $\bx$,
\begin{align}
d_W(\bx,\bc)^2 = \sum_{i=1}^d x^{\star}_i \, (x_{i}-0)^2 = \left\{
\begin{array}{ll}
\sum_{i=1}^d x^{\star}_i\,x_i \sim B(d,p^2) & \bx \in C  \\[2pt]
\sum_{i=1}^d x^{\star} \sim B(d,p) & \bx \in C'
\end{array}
\right.
\end{align}
Where in the last equality we use the fact that the entries are unary, and where with the notation $B(d,p)$ we refer to a vector of length $d$ where each entry is equal to $1$ with probability $p$.
Let $\mu=dp^2$ and $\mu'=dp$, let $\epsilon=\nicefrac{1}{(1+\sqrt{2})}$, and define
\begin{align}
\phi &= \mu(1+\epsilon),\qquad \phi' = \mu'(1-\epsilon\sqrt{p}) 
\end{align}
By standard tail bounds,
\begin{align}
&\Pr(d_{W}(\bx,\bc)^2 \ge \phi) \le e^{-\frac{\epsilon^2 \mu}{3}} \quad \text{for } \bx \in C
\\
&\Pr(d_{W}(\bx,\bc)^2 < \phi' ) < e^{-\frac{\epsilon^2 p \mu'}{3}} = e^{-\frac{\epsilon^2 \mu}{3}} \quad \text{for } \bx \in C'
\end{align}
By a union bound on all points, the margin $\gamma_C$ of $C$ fails to satisfy the following inequality with probability at most $|X| e^{-\frac{\epsilon^2 \mu}{3}} \le n e^{-\frac{\epsilon^2 \mu}{3}}$:
\begin{align}
1+\gamma_C
= \frac{\min_{\bx \notin C}d_{W}(\bx,\bc)^2}{\max_{\bx \in C} d_{W}(\bx,\bc)^2}
\ge \frac{\phi'}{\phi}
= \frac{dp(1-\epsilon\sqrt{p})}{dp^2(1+\epsilon)}
= \frac{1-\epsilon\sqrt{p}}{p(1+\epsilon)}
\ge \frac{1}{2p} = 1+\gamma
\end{align}
where the penultime inequality holds since $\frac{1-\epsilon\sqrt{p}}{1+\epsilon} \ge \frac{1}{2}$ for our values of $p$ and $\epsilon$.
Note that, since $p = \frac{1}{2(1+\gamma)}$ and $n \le \frac{1}{c}\exp\big(\frac{d}{48(1+\gamma)^2}\big) + 1$,
\begin{align}
n e^{-\frac{\epsilon^2 \mu}{3}} = n e^{-\frac{dp^2}{12}} = n e^{-\frac{d}{48(1+\gamma)^2}}
\end{align}

By one last union bound, the probability that $|X|=n$ and $\gamma_C \ge \gamma$ is at least
\begin{align}
    1 - n e^{-\frac{d}{48(1+\gamma)^2}} - n^2 e^{-\frac{d}{2(1+\gamma)}}
\end{align}
If $d \ge \frac{48}{(1+\gamma)^2}$, then we can let $n=\Omega\Big(e^{\frac{d}{48(1+\gamma)^2}}\Big)$ while ensuring the above probability is bounded away from $0$.

The rest of the proof and the application of Yao's principle is essentially identical to the proof of Theorem~\ref{thm:LB1} above.
\end{proof}

\section{Comparison with \scq-$k$-means}
\label{app:ash}
In this section we compare our algorithm to \scq-$k$-means of \cite{ashtiani2016clustering}.
We show that, in our setting, \scq-$k$-means fails even on very simple instances, although it can still work under (restrictive) assumptions on $\gamma$, $W$, and the centers.

\scq-$k$-means works as follows.
First, the center of mass $\bmu_C$ of some cluster $C$ is estimated using $\scO\big(\!\poly(k,\nicefrac{1}{\gamma})\big)$ \scq\ queries; second, all points in $X$ are sorted by their distance from $\bmu_C$ and the radius of $C$ is found via binary search.
The binary search is done using same-cluster queries between the sorted points and any point already known to be in $C$. 
The margin condition ensures that, if we have an accurate enough estimate of $\bmu_C$, then the binary search will be successful (there are no inversions of the sorted points w.r.t.\ their cluster). This approach thus yields a $\scO(\ln n)$ \scq\ queries bound (the number of queries to estimate $\bmu_C$ is independent of $n$).

It is easy to see that this algorithm relies crucially on (1) each cluster $C$ must be spherical, and (2) the center of the sphere must coincide with the centroid $\bmu_C$.
In formal terms, the setting of~\cite{ashtiani2016clustering} is a special cases of ours where for all $C$ we have $W_C=I_d$ and $\bc=\E_{\bx \in C}[\bx]$.
If any of these two assumptions does not hold, then it is easy to construct instances where~\cite{ashtiani2016clustering} fails to recover the clusters and, in fact, achieves error very close to a completely random labeling.
Formally:
\begin{lemma}
For any fixed $d \ge 2$, any $p \in (0,1)$, and any sufficiently small $\gamma > 0$, there are arbitrarily large instances on $n$ points and $k=2$ clusters on which \scq-$k$-means incurs error $\ErrClust(\hat\C,\C) \ge \frac{1-p}{2}$ with probability at least $1-p$.
\end{lemma}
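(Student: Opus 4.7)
The plan is to exhibit an explicit adversarial instance on which the binary-search step of \scq-$k$-means collapses, returning a clustering that is essentially a Euclidean ball around the estimated centroid and is therefore grossly inconsistent with the true elongated cluster. I would work in $\R^2$ and lift to $\R^d$ by padding the remaining coordinates with zeros. Place $C_1$ as $n$ points uniformly spread along a thin horizontal needle, i.e.\ an ellipse centered at the origin with semi-axes $L$ (horizontal) and $\delta$ (vertical), and $C_2$ as $n$ analogous points along a parallel needle centered at $(0,h)$, with $L \gg h \gg \delta$. Taking $W = \diag(L^{-2},\delta^{-2})$ and each cluster's center at its geometric center, the $\gamma$-margin condition reduces to an inequality in the vertical gap $h/\delta$, which holds once $\gamma$ is small enough.

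Next I would trace the behavior of \scq-$k$-means on this input. Its centroid-estimation phase returns an estimate $\widehat{\bmu}_{C_1}$ close to $(0,0)$ with probability at least $1-p/2$, after $\poly(k,1/\gamma)$ queries independent of $n$. The algorithm then sorts all $2n$ points by Euclidean distance from $\widehat{\bmu}_{C_1}$ and binary-searches for the radius of $C_1$. The key point is that the Euclidean distance from the origin to a point $(x,y) \in C_1\cup C_2$ is dominated by $|x|$ as soon as $|x|\gtrsim h$, which holds for all but an $O(h/L)$ fraction of points. Consequently the two clusters are densely interleaved in the sorted order outside a negligible central region, so the monotonicity precondition of binary search is grossly violated.

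Regardless of which threshold $r^\star$ the binary search converges to, the output $\hat{C}_1$ is essentially $X$ intersected with the Euclidean ball of radius $r^\star$ around $\widehat{\bmu}_{C_1}$. I would then establish a uniform-in-$r^\star$ lower bound on $|\hat{C}_1 \triangle C_1|$: every such ball either misses all but an $O(r^\star/L)$ fraction of $C_1$, or else, once $r^\star \gtrsim h$, swallows a comparable fraction of $C_2$. Choosing $L/h$ large enough as a function of $p$ forces the symmetric difference to contain at least $(1-p)n$ points. Optimizing over the two cluster relabelings then yields $\ErrClust(\hat{\C},\C) \ge (1-p)/2$, and combining with the centroid-estimation failure event of probability at most $p/2$ gives the claimed bound. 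The main obstacle will be making the binary-search analysis robust to implementation choices, since its monotonicity precondition fails and the output of the search depends on which indices it happens to probe; I would sidestep this by making the symmetric-difference bound hold uniformly over every possible $r^\star$, so that no matter how the search resolves its inconsistent queries the lemma's conclusion follows.
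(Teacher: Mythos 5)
Your argument is valid and does establish the lemma, but via a genuinely different instance than the paper's. The paper's construction is far more parsimonious: $C_1$ consists of $n\frac{1+p}{2}$ points split evenly between $(1,0)$ and $(-1,0)$, $C_2$ of $n\frac{1-p}{2}$ coincident points at $\big(0,\frac{\sqrt{1+\gamma}}{2}\big)$, with the single matrix $W=\diag(\nicefrac{1}{4},1)$ for both clusters. The failure mode there is a \emph{total reversal}: in Euclidean distance from the estimated centroid of the majority cluster $C_1$, every point of $C_2$ strictly precedes every point of $C_1$, so the median of the sorted list lies in $C_1$ and the binary search moves monotonically to the right, outputting $\hat{\C}=\{X,\emptyset\}$ and giving error exactly $\frac{1-p}{2}$. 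Your construction instead relies on a \emph{dense interleaving} failure mode: two equal-size parallel needles whose sorted Euclidean order alternates, so that every prefix of the sorted list (i.e., every candidate radius $r^{\star}$) captures nearly equal fractions $f_1,f_2$ of the two clusters, giving $\ErrClust(\hat{\C},\C)=\tfrac{1}{2}\big(1-|f_1-f_2|\big)$, which tends to $\tfrac{1}{2}$ as $L/h\to\infty$. Both are correct, and yours has the appealing feature that the error approaches $\tfrac{1}{2}$ independently of the cluster proportions, but it needs extra care that the paper's static three-point instance sidesteps: you should verify that the centroid estimator's horizontal error, which can be $\Theta(L)$ with a constant-size sample, is harmless (it merely translates the ball along the needle and does not change $|f_1-f_2|$); that symmetry handles the case where the algorithm happens to estimate $C_2$'s centroid rather than $C_1$'s (your clusters have equal size, so there is no fixed majority); and that the binary search on non-monotone comparisons always returns a prefix of the sorted order, so that modeling its output as a ball is legitimate.
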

\begin{proof}[Sketch of the proof]
We describe the generic instance on $n$ points for $d=2$.
The latent clustering $\C$ is formed by two clusters $C_1,C_2$ of size respectively $n_1=n\frac{1+p}{2}$ and $n_2=n\frac{1-p}{2}$.
In $C_1$, half of the points are in $(1,0)$ and half in $(-1,0)$.
In $C_2$, all points are in $(0,\frac{\sqrt{1+\gamma}}{2})$.
(One can in fact perturb the instance so that all points are distinct without impairing the proof).
For both clusters, the center coincide with their center of mass, $\bmu_1=(0,0)$ and $\bmu_2=(0,\frac{\sqrt{1+\gamma}}{2})$.
For both clusters, the latent metric is given by the PSD matrix $W=(\begin{smallmatrix} .25 & 0 \\ 0 & 1
\end{smallmatrix})$.
It is easy to see that $d_W(\bx,\bmu_1)^2=\nicefrac{1}{4}$ if $\bx \in C_1$ and $d_W(\bx,\bmu_1)^2=\nicefrac{(1+\gamma)}{4}$ if $\bx \in C_2$, and so $C_1$ has margin exactly $\gamma$.
On the other hand $C_2$ has margin $\gamma$ since $d_W(\bx,\bmu_2)^2=0$ if $\bx \in C_2$ and $d_W(\bx,\bmu_2)^2>0$ otherwise.
\begin{figure}[h!]
    \centering
    \begin{tikzpicture}[scale=2]
    \draw[black,thin,->] (-2,0) -- (2,0);
    \draw[black,thin,->] (0,-.2) -- (0,1);
    \node[dot,fill=blue,minimum size=4pt] (c11) at (-1,0) {};
    \node[dot,fill=blue,minimum size=4pt] (c12) at (1,0) {};
    \node[dot,fill=green,minimum size=5pt] (c2) at (0,.5) {};
    \draw[densely dotted,opacity=.6] (0,0) circle [x radius=1.2, y radius=.25]; 
    \draw[densely dotted,opacity=.6] (0,.5) circle [x radius=.3, y radius=.0625]; 
    \node (C1) at (.8,.3) {\small $C_1$};
    \node (C2) at (.25,.65) {\small $C_2$};
    \end{tikzpicture}
    \caption{\small A bad instance for \scq-$k$-means. With good probability, the algorithm classifies all points in a single cluster, incurring error $\simeq \nicefrac{1}{2}$, the same as a random labeling.}
    \label{fig:badAshtiani}
\end{figure}
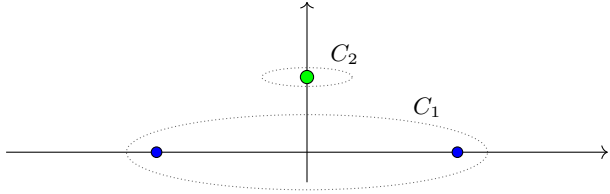

Now consider \scq-$k$-means.
The algorithm starts by sampling at least $\frac{k \ln(k)}{\gamma^4}$ points from $X$ and setting $\hat\bmu$ to the average of the points with the majority label.
By standard concentration bounds then, for $\gamma$ small enough, with probability at least $1-p$ the majority cluster will be $C_1$ and the estimate $\hat\bmu$ of its center of mass $(0,0)$ will be sufficiently close to $\bmu_1$ that the ordering of all points in $X$ by their Euclidean distance w.r.t.\ $\hat\bmu$ will set all of $C_2$ before all of $C_1$.
But since $n_2 = n\frac{1-p}{2}$, the median of the sorted sequence will be a point of $C_1$.
Thus the binary search will make its first query on a point of $C_1$ and will continue thereafter classifying all of $X$ as belonging to $C_1$.
Thus the algorithm will output the clustering $\hat\C = \{X,\emptyset\}$ which gives $\ErrClust(\hat\C,\C)=\frac{1-p}{2}$.
\end{proof}

Next, we show that the approach~\cite{ashtiani2016clustering} still works if one relaxes the assumption $W=I$, at the price of strengthening the margin $\gamma$.
Let $\lambda_{\max}$ and $\lambda_{\min} > 0$ be, respectively, the largest and smallest eigenvalues of $W$.
The condition number $\kappa_W$ of $W$ is the ratio $\lambda_{\max} \big/ \lambda_{\min}$.
If $\kappa_W$ is not too large, then $W$ does not significantly alter the Euclidean metric, and the ordering of the points is preserved.
Formally:
\begin{lemma}
\label{lem:kappa}
Let $\kappa_W$ be the condition number of $W$.
If every cluster $C$ has margin at least $\kappa_W-1$ with respect to its center of mass $\bmu_C$, and if we know $\bmu_C$, then we can recover $C$ with $\scO(\ln n)$ \scq\ queries.
\end{lemma}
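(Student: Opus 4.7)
The plan is to reduce the problem to the original spherical setting of \cite{ashtiani2016clustering} by showing that, when $\kappa_W$ is not too large, the $W$-induced ordering of points around $\bmu_C$ is refined by the Euclidean ordering. Concretely, I would sort all points of $X$ by their Euclidean distance to $\bmu_C$, argue that every point of $C$ appears before every point of $X\setminus C$ in this list, and then locate the boundary between $C$ and $X\setminus C$ by binary search using $\scO(\ln n)$ same-cluster queries against any fixed anchor point known to lie in $C$ (for instance, the Euclidean nearest neighbor of $\bmu_C$, which the ordering guarantees lies in $C$).

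The heart of the argument is the inequality chain that justifies the Euclidean ordering. For any $\bx\in C$ and any $\by\notin C$, the eigenvalue bounds on the PSD matrix $W$ give $d_W(\bx,\bmu_C)^2 \ge \lambda_{\min}\,\|2{\bx-\bmu_C}^2$ and $d_W(\by,\bmu_C)^2 \le \lambda_{\max}\,\|2{\by-\bmu_C}^2$. Combining these with the margin assumption $d_W(\by,\bmu_C) > \sqrt{1+\gamma}\,d_W(\bx,\bmu_C)$ and the hypothesis $\gamma \ge \kappa_W - 1$, I would derive
\begin{align*}
\lambda_{\max}\,\|2{\by-\bmu_C}^2 \;\ge\; d_W(\by,\bmu_C)^2 \;>\; (1+\gamma)\,d_W(\bx,\bmu_C)^2 \;\ge\; (1+\gamma)\,\lambda_{\min}\,\|2{\bx-\bmu_C}^2,
\end{align*}
which rearranges to $\|2{\by-\bmu_C}^2 > \frac{1+\gamma}{\kappa_W}\,\|2{\bx-\bmu_C}^2 \ge \|2{\bx-\bmu_C}^2$. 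Thus every outside point is strictly Euclidean-farther from $\bmu_C$ than every inside point, and sorting $X$ by Euclidean distance to $\bmu_C$ places $C$ as a prefix of the sorted sequence.

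Given this prefix structure, I would take $\bx_0$ as the point minimizing $\|2{\bx-\bmu_C}$ (necessarily in $C$ by the ordering) and binary-search for the largest index $i$ such that $\scq(\bx_0,\bx_i) = +1$; returning $\{\bx_0,\ldots,\bx_i\}$ recovers $C$ exactly. The sort is a preprocessing step that uses no queries, and the binary search issues $\lceil \log_2 n \rceil = \scO(\ln n)$ same-cluster queries. No step should present a real obstacle: the only subtlety is verifying that the constant hidden in ``margin at least $\kappa_W-1$'' is exactly what is needed so that $(1+\gamma)/\kappa_W \ge 1$, which is ensured by the hypothesis; and that the nearest Euclidean point to $\bmu_C$ (as opposed to $\bmu_C$ itself, which may not be in $X$) serves as a valid anchor, which follows immediately from the prefix property.
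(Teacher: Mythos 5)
Your proof is correct and essentially identical to the paper's: both sort $X$ by Euclidean distance to $\bmu_C$, bound $\lambda_{\min}\|\bz\|_2^2 \le \|\bz\|_W^2 \le \lambda_{\max}\|\bz\|_2^2$, combine with the margin to get $\|\by-\bmu_C\|_2^2 > \frac{1+\gamma}{\kappa_W}\|\bx-\bmu_C\|_2^2 \ge \|\bx-\bmu_C\|_2^2$ for $\bx\in C$, $\by\notin C$, and then binary-search for the boundary. The only difference is that you spell out the choice of anchor (Euclidean nearest neighbor of $\bmu_C$), which the paper leaves implicit by deferring to the binary-search routine of Ashtiani et al.
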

\begin{proof}
Fix any cluster $C$ and let $\bmu=\bmu_C$. For any $\bz \in \R^d$ we have
$\lambda_{\min} \norm{2}{\bz}^2 \le \norm{W}{\bz}^2  \le \lambda_{\max} \norm{2}{\bz}^2
$ where $\lambda_{\min}$ and $\lambda_{\max}$ are, respectively, the smallest and largest eigenvalue of $W$.
Sort all other points $\bx$ by their Euclidean distance $\norm{2}{\bx-\bmu}$ from $\bmu$.
Then, for any $\bx \in C$ and any $\by \notin C$ we have:
\begin{align}
\frac{\norm{2}{\by-\bmu}^2}{\norm{2}{\bx-\bmu}^2}
\ge \frac{\lambda_{\min}}{\lambda_{\max}} \frac{\norm{W}{\by-\bmu}^2}{\norm{W}{\bx-\bmu}^2}
= \frac{1}{\kappa_W} \frac{d(\by,\bmu)^2}{d(\bx,\bmu)^2}
> \frac{1+\gamma}{\kappa_W}
\label{eqn:kappa_gamma}
\end{align}
Hence, if $\gamma \ge \kappa_W-1$, there is $r \ge 0$ such that $\norm{2}{\bx-\bmu} \le r$ for all $\bx \in C$ and $\norm{2}{\by-\bmu} \ge r$ all $\by \notin C$.
We can thus recover $C$ via binary search as in~\cite{ashtiani2016clustering}.
\end{proof}
As a final remark, we observe that the above approach is rather brittle, since $\kappa_W$ is unknown (because $W$ is),  and if the condition $\kappa_W \le 1+\gamma$ fails, then once again the binary search can return a clustering far from the correct one.


\section{Comparison with metric learning}
\label{app:metric}
In this section we show that metric learning, a common approach to latent cluster recovery and related problems, does not solve our problem even when combined with same-cluster and comparison queries.
Intuitively, we want to learn an approximate distance $\hat{d}$ that preserves the ordering of the distances between the points.
That is, for all $\bx,\by,\bz \in X$, $d(\bx,\by) \le d(\bx,\bz)$ implies $\hat{d}(\bx,\by) \le \hat{d}(\bx,\bz)$.
If this holds then $d$ and $\hat{d}$ are equivalent from the point of view of binary search.
To simplify the task, we may equip the algorithm with an additional \emph{comparison query} \cmp, which takes in input two pairs of points $\bx,\bx'$ and $\by,\by'$ from $X$ and tells precisely whether $d(\bx,\bx') \le d(\by,\by')$ or not. It turns out that, even with \scq+\cmp\ queries, learning such a $\hat{d}$ requires to query essentially all the input points.
\begin{theorem}
\label{thm:dir}
For any $d \ge 3$, learning any $\hat{d}$ such that, for all $\bx,\by,\bz \in X$, if $d(\bx,\by) \le d(\bx,\bz)$ then $\hat{d}(\bx,\by) \le \hat{d}(\bx,\bz)$, requires $\Omega(n)$ \scq+\cmp\ queries in the worst case, even with an arbitrarily large margin $\gamma$.
\end{theorem}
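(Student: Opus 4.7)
The plan is to exhibit a family of $\Theta(n)$ instances sharing the same point set $X$ of $n$ points in $\R^d$ and the same trivial one-cluster structure, differing only in the hidden PSD matrix. Since there is only one cluster, the margin condition is vacuously satisfied for any $\gamma>0$, and every \scq\ query returns $+1$ and carries no information; hence all the hardness must come from \cmp. The reduction is then to show that identifying the hidden instance requires $\Omega(n)$ \cmp\ queries via a standard adversary argument combined with Yao's minimax principle (Section~\ref{sub:yao}).

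Concretely, I would pick a reference PSD matrix $A \approx I$ so that the distances $d_A(\bx_k,\bx_l)$ are all distinct with some gap $\delta>0$, and for each $j \in [n]$ define $W_j = A + \lambda\,\bu_j\bu_j^{\top}$, where $\bu_j$ is chosen so that $\bu_j^{\top}(\bx_k-\bx_l)=0$ for every pair $\{k,l\}$ not containing $j$. Such a $\bu_j$ exists provided $\bu_j$ lies in the orthogonal complement of the affine span of $\{\bx_k : k\ne j\}$. The constant $\lambda$ is tuned so that under $W_j$ the point $\bx_j$ becomes the strict farthest neighbor of every $\bx_k$ with $k\ne j$, while the relative ordering of distances between pairs not involving $\bx_j$ is unchanged from $d_A$. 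This yields two key facts: (i) for every query $\cmp((\bx_a,\bx_b),(\bx_c,\bx_d))$, the answer under $W_j$ agrees with the answer under $W_0:=A$ whenever $j\notin\{a,b,c,d\}$; and (ii) any ordering-preserving $\hat{d}$ must place $\bx_j$ strictly last in the ordering of distances from each other point.

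Fact (ii) implies that if two distinct indices $j_1,j_2$ both remain consistent with the answers, then $\hat{d}$ must simultaneously place $\bx_{j_1}$ last in the ordering from some $\bx_k$ (forcing $\hat{d}(\bx_k,\bx_l) \le \hat{d}(\bx_k,\bx_{j_1})$ for some $l$ with the inequality strict under $W_{j_1}$) and, by symmetry, the reverse strict inequality under $W_{j_2}$, which is impossible. Combined with fact (i), an adversary who always answers as under $W_0$ is consistent with at least $n-4k$ of the indices after $k$ queries. Therefore any algorithm that succeeds with probability at least $2/3$ on the uniform distribution over $\{W_1,\ldots,W_n\}$ must make $\Omega(n)$ queries, and Yao's minimax principle lifts this to a worst-case lower bound against randomized algorithms.

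The main obstacle is realizing the construction in low ambient dimension: strict orthogonality $\bu_j^{\top}(\bx_k-\bx_l)=0$ for all $k,l\ne j$ requires $d\ge n-1$, so for the regime $d=3$ with $n$ large one must instead place $X$ on a carefully chosen configuration (e.g., a $\sqrt{\gamma}$-packing on the unit sphere as in Theorem~\ref{thm:LB1}, or on the moment curve) and argue that approximate orthogonality still suffices: the effect of $W_j - A$ on non-$\bx_j$ pairs is dominated by the gap $\delta$ of the reference metric, so facts (i) and (ii) survive with at most $O(1)$ indices eliminated per query, yielding the same $\Omega(n)$ bound at the cost of a worse constant.
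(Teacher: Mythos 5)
Your construction has a fundamental dimension mismatch that the proposed fix does not resolve, and it is qualitatively different from the paper's argument.

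The core of your construction is the existence, for each $j$, of a direction $\bu_j$ with $\bu_j^{\top}(\bx_k-\bx_l)=0$ for all $k,l\ne j$. You correctly observe that this forces $d\ge n-1$. The theorem, however, is stated for every fixed $d\ge 3$ and $n$ arbitrarily large, so the orthogonality-based construction is simply unavailable in the regime the theorem addresses. The fallback you sketch---``approximate orthogonality'' on a sphere packing or moment curve---does not patch this: for any configuration in $\R^3$ whose points are spread out enough to support many candidate outliers, the difference vectors $\{\bx_k-\bx_l : k,l\ne j\}$ span all of $\R^3$, so \emph{no} unit vector $\bu_j$ is near-orthogonal to all of them. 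Perturbing $A$ by $\lambda\,\bu_j\bu_j^{\top}$ strongly enough to make $\bx_j$ the global outlier will then reorder many pairs not involving $\bx_j$, destroying both facts (i) and (ii). The assertion that ``at most $O(1)$ indices are eliminated per query'' is exactly the claim that needs to be proved, and I do not see how it can be, since in $\R^3$ a single PSD matrix has only $6$ degrees of freedom and cannot independently control $\Theta(n)$ outlier decisions. A second, smaller issue: your contradiction from fact (ii) uses strict inequalities in $d$, but the theorem's requirement is one-directional ($d(\bx,\by)\le d(\bx,\bz)\Rightarrow\hat d(\bx,\by)\le\hat d(\bx,\bz)$), so a $\hat d$ that ties the competing pairs satisfies both constraints simultaneously and no contradiction ensues as written.

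The paper avoids both problems by taking a genuinely different route: it reduces hyperplane-separator learning (SEP) to the distance-ordering problem, placing $n$ triples $\{\ai,\bi,\ci\}$ at geometrically increasing scales $6^i$ so that (a) a single designated comparison query per triple encodes $\sign\langle\bh,\bx_i\rangle$, and (b) every other comparison is forced by the scale hierarchy alone and carries no information about $\bh$. The geometric scaling is the mechanism that confines the information in each query to $O(1)$ triples, and it works in fixed dimension; the needle-in-a-haystack lower bound for SEP is then carried out with $n$ near-orthogonal hyperplanes against a circle of points in $\R^3$. Your plan substitutes orthogonality of $\bu_j$ to the difference vectors for that scaling mechanism, which is precisely what fails when $d$ is constant. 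To salvage the direct-adversary route you would need a replacement mechanism---something playing the role of the $6^i$ scaling---that isolates one index per query in low dimension, and that is not present in the proposal.
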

\begin{proof}
We reduce the problem of learning the  order of pairwise distances induced by $W$, which we call ORD, to the problem of learning a separator hyperplane, which we call SEP and whose query complexity is linear in $n$.

Problem SEP is as follows.
The inputs are a set $X=\{\bx_1,\ldots,\bx_n\} \subset \mathbb{R}^d$ (the observations) and a set $\mathcal{H} = \{\bh_1,\ldots,\bh_k\} \subset \mathbb{R}_+^d$ (the hypotheses).
We require that $\bh_j \in \R^d_+$.
We have oracle access to $\sigma : X \to \{+1,-1\}$ such that $\sigma(\cdot) = \sign\langle{\bh},\cdot\rangle$ for some $\bh \in \mathcal{H}$.
The output is the $\bh \in \mathcal{H}$ that agrees with $\sigma$.
We assume $\mathcal{H},X$ support a margin: $\exists \epsilon>0$, possibly dependent on the instance, such that $\sign\langle{\bh},\bx\rangle = \sign\langle{\bh},\bx'\rangle$ for all $\bx'$ with $\|\bx - \bx'\| \le \epsilon$.
(Note that this is \emph{not} the cluster margin $\gamma$).

Let $Q_{\text{ORD}}(n)$ and $Q_{\text{SEP}}(n)$ be the query complexities of ORD and SEP on $n$ points.
We show:
\begin{lemma}
$Q_{\text{ORD}}(3n) \le Q_{\text{SEP}}(n)$.
\label{lem:DIR}
\end{lemma}
\begin{proof}
Let $X=\{\bx_1,\ldots,\bx_n\} \subseteq \R^d$ be the input points for SEP and let $\bh \in \R^d_+$ be the target hypothesis.
By scaling the dataset we can assume $\|\bx_i\| \le \epsilon$ for any desired $\epsilon$ (even dependent on $n$).
We define an instance of ORD on $n'=3n$ points as follows.
First, $W = \diag(\bh)$.
Second, the input set is $X'=S_1 \cup \ldots \cup S_n$ where for $i=1,\ldots,n$ we define $S_i=\{\ai, \bi, \ci\}$ with:
\begin{align}
\ai &= 6^i \cdot \mathbf{1} \\
\bi &= 2 \cdot \ai \\
\ci &= 3 \cdot \ai + \bx_i
\end{align}
%
We first show that a solution to ORD gives a solution of SEP.
Suppose indeed that for all pairs of points $\{\bq,\bp\},\{\bx,\by\}$ we know whether $d_{W}(\bq,\bp) \le d_W(\bx,\by)$.
This is equivalent to knowing the output of $\cmp(\{\bq,\bp\},\{\bx,\by\})$, which is
\begin{align}
\cmp(\{\bq,\bp\},\{\bx,\by\}) &=
\sign \dotp{\bh, (\bq-\bp)^2 - (\bx-\by)^2} 
\end{align}
Consider then the point $\bq=\ci, \bp=\bx=\bi, \by=\ai$ for each $i$.
Then:
\begin{align}
\cmp(\{\bq,\bp\},\{\bx,\by\}) &=
\sign \dotp{\bh, (\ai-\bi)^2 - (\bi-\ci)^2} 
\\ &=
\sign \dotp{\bh, (\ai)^2 - (-\ai-\bx_i)^2} \\
&=
\sign \dotp{\bh, 2 \cdot 6^i \bx_i - \bx_i^2}
\\&=
\sign \dotp{\bh, \bx_i \left( 1 - \frac{\bx_i}{2 \cdot 6^i} \right)}
\end{align}
By the margin hypothesis, for $\epsilon$ small enough this equals $\sign( \dotp{\bh, \bx_i})$, i.e., the label of $\bx_i$ in SEP.

We now show that all the other queries reveal no information about the solution of SEP.
Suppose then the points are not in the form $\bq=\ci, \bp=\bx=\bi, \by=\ai$.
Without loss of generality, we can assume that $\bq>\bp$ and $\bq \ge \bx>\by$.
It is then easy to see that, for $\epsilon$ small enough, $(\bq-\bp)^2-(\bx-\by)^2 > 0$
or $(\bq-\bp)^2-(\bx-\by)^2 < 0$.
This holds independently of the $\bx_i$ and of $W$ and therefore gives no information about the solution of SEP.

It follows that, if we can solve ORD in $f(3n)$ \cmp\ queries, then we can solve SEP in $f(n)$ queries.
Finally, note that adding \scq\ queries does not reduce the query complexity (e.g., let $X$ lie in a single cluster).
For the same reason, we can even assume an arbitrarily large cluster margin $\gamma$.
\end{proof}
It remains to show that SEP requires $\Omega(n)$ \cmp\ queries in the worst case.
This is well known, but we need to ensure that $\mathcal{H} \subset \R^d_+$ and that any $h \in \mathcal{H}$ supports a margin as described above.

Consider the following set $X = \{\bx_1,\ldots,\bx_n\} \subseteq \mathbb{R}^3$:
\begin{align}
\bx_i = (1-\delta, -\cos(\theta_i), -\sin(\theta_i))
\end{align}
where $\theta_i = i \frac{\pi}{2n}$ and $\delta$ is sufficiently small.
Let $\mathcal{H}= \{\bh_1,\ldots,\bh_n\}$, where
\begin{align}
\bh_j = (1,\cos(\theta_j),\sin(\theta_j))
\end{align}
Note that $\mathcal{H} \subset \R^d_+$ as required.
Clearly:
\begin{align}
\dotp{\bh_j, \bx_i} = \left\{ \begin{array}{ll}
-\delta & \text{if } j=i \\
1-(\delta + \cos(\theta_i-\theta_j)) & \text{if } j \ne i
\end{array} \right.
\end{align}
By choosing $\delta = \frac{1-\cos(\pi/2n)}{2}$ we have $\sign \dotp{\bh, \bx_i} = -1$ if and only if $i=j$.
Clearly, any algorithm needs to probe $\Omega(n)$ labels to learn $h$ with constant probability for some $h \in \mathcal{H}$.
Finally, note that any $h$ supports a margin, as required.
\end{proof}

\end{document}